\newtheorem{definition}{Definition}
\newtheorem{proposition}{Proposition}
\newtheorem{theorem}{Theorem}
\newtheorem{assumption}{Assumption}
\title{Sustainable Online Reinforcement Learning for Auto-bidding}
\author{%
	Zhiyu Mou$^{1,2}$\thanks{Work was done during an internship at Alibaba Group.} \;\;
	Yusen Huo$^1$ \;
	Rongquan Bai$^1$\;
	Mingzhou Xie$^1$\;
		Chuan Yu$^1$\\
	\textbf{Jian Xu}$^1$\;
	\textbf{Bo Zheng}$^1$\thanks{Corresponding author.}
	\\
	$^1$ Alibaba Group, Beijing, China\\
	$^2$ Department of Automation, Tsinghua University, Beijing, China\\
	\texttt{mouzy20@mails.tsinghua.edu.cn} \\
	\texttt{\{huoyusen.huoyusen, rongquan.br, mingzhou.xmz,}\\
	\texttt{yuchuan.yc, xiyu.xj, bozheng\}@alibaba-inc.com}
}
\begin{document}

	\maketitle

	\begin{abstract}
		Recently, auto-bidding technique has become an essential tool to increase the revenue of advertisers. Facing the complex and ever-changing bidding environments in the real-world advertising system (RAS), state-of-the-art auto-bidding policies usually leverage reinforcement learning (RL) algorithms to generate real-time bids on behalf of the advertisers. Due to safety concerns, it was believed that the RL {training} process can only be carried out in an offline virtual advertising system (VAS) that is built based on the historical data generated in the RAS. In this paper, we argue that there exists significant gaps between the VAS and RAS, making the RL {training} process suffer from the problem of \emph{inconsistency between online and offline} (IBOO). Firstly, we formally define the IBOO and systematically analyze its causes and influences. Then, to avoid the IBOO,
		we propose a sustainable online RL (SORL) framework that trains the auto-bidding policy by directly interacting with the RAS, instead of learning in the VAS.
		Specifically, based on our proof of the Lipschitz smooth property of the Q function, we design a safe and efficient online exploration (SER) policy for continuously collecting data from the RAS. Meanwhile,
		we derive the theoretical lower bound on the safety of the SER policy. 
		We also develop a variance-suppressed conservative Q-learning (V-CQL) method to effectively and stably learn the auto-bidding policy with the collected data. Finally, extensive simulated and real-world experiments validate the superiority of our approach over the state-of-the-art auto-bidding algorithm.
	\end{abstract}
	\vspace{-4mm}
	
\section{Introduction}
\label{section:intro}
\vspace{-3mm}
In the era of Internet, online advertising business has become one of the main profit models for many companies, such as Google \cite{google} and Alibaba \cite{alibaba}, which, at the same time, benefits millions of advertisers who are willing to bid for impression opportunities. 
Contemporary online advertising systems, as auctioneers, usually have large amount of candidate advertisers contesting for numerous impression opportunities at every moment. Making auction decisions based on accurate evaluation of each impression opportunity for all advertisers within several milli-seconds is computationally infeasible. Therefore, 
a real-world advertising system (RAS) adopts a cascade architecture \cite{stage, auction}. In this architecture, the auction of each impression opportunity is completed through several stages. Without loss of generality, we here simply view the RAS as a system with two stages: stage 1 and stage 2, as shown in Fig. \ref{fig:RAS}. 
\begin{wrapfigure}{r}{3.2cm}
	\setlength{\abovecaptionskip}{-0.4cm}
	\includegraphics[width=3.2cm]{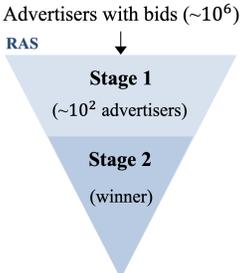}
	\caption{Two-stage auction in RAS.}
	\label{fig:RAS}
	\vspace{-2mm}
\end{wrapfigure}
The auction process of each impression opportunity is as follows: in stage 1, rough 
evaluations are conducted for all the candidate advertisers ($\sim 10^6$), and a group of the most promising advertisers ($\sim 10^2$) are fed to the stage 2; and in stage 2, accurate valuations and auction are carried out to determine the winning advertiser of the impression opportunity. The winner will gain the value of the impression opportunity and pay the market price. {See Appendix \ref{app:ras_vas_ras} for detailed explanations on the RAS structures.} 
Faced with huge amount of impression opportunities at every moment, advertisers cannot bid in the granularity of individual opportunities. Recently, many auto-bidding policies have emerged to realize automatic real-time biddings for each impression opportunity on behalf of advertisers \cite{RL:wangjun, RL:haoxiaotian, RL:cai, RL:USCB}, which significantly increase their revenues. State-of-the-art auto-bidding policies are usually learned with reinforcement learning (RL) algorithms \cite{RL:USCB}. It was believed that the auto-bidding policy being trained cannot directly access to the RAS during the RL training process for safety concerns \cite{RL:wangjun}.
A common solution in most existing works \cite{RL:wangjun, RL:haoxiaotian, RL:cai, RL:USCB} is to {train} the auto-bidding policy in a virtual advertising system (VAS) --- an offline simulated environment that is built based on the {advertisers'} historical data generated in the RAS. 
{See Appendix \ref{app:ras_vas_vas} for the details of the VAS structures.}

\textbf{IBOO Challenges.}
However, we argue that there exists gaps between the RAS and the VAS,
which makes this common solution suffer from the problem of \emph{inconsistencies between online and offline} (IBOO).
Here, \emph{online} refers to the RAS, while \emph{offline} refers to the VAS, and \emph{inconsistencies} refer to the gaps. Formally, we can define the IBOO as follows.
		\begin{definition}[\textnormal{Inconsistencies Between Online and Offline, IBOO}] The IBOO refers to the gaps between the RAS and the VAS that prevent the VAS from accurately simulating the RAS.
	\end{definition}
\vspace{-2mm}
Specifically, there are two dominated gaps between the RAS and the VAS. One is that the VAS {cannot accurately simulate the cascade auction architecture of the RAS. For example, due to the constraint on computing power, the VAS}
is built {only} based on the historical data {generated in} stage 2 {of the RAS\footnote{{Detailed reasons are shown in  Appendix \ref{app:ras_vas_vas}.}}, which makes the VAS lack the mechanism of stage 1 in the RAS.} 
In addition, the VAS does not incorporate the exact influences of other advertisers as RAS does, which makes up the second dominated gap. For example, in the RAS, the market prices are determined by the bids of all advertisers which can change during the training process. 
However, the VAS always provides constant market prices for the auto-bidding policy being trained. See figures in Appendix \ref{app:ill_IBOO} for illustrations.
Essentially, the IBOO makes the VAS provide the auto-bidding policy with false rewards and state transitions during RL training process. 
As presented in Table. \ref{table:IBOO}, the auto-bidding policy with better performance in the VAS (i.e., higher $R/R*$) may yield poorer performance in the RAS (i.e., lower A/B test value). Hence, the IBOO can seriously degrade the performance of the auto-bidding policy in the RAS.
One way to address the IBOO challenge is to improve the VAS and reduce the gaps as much as possible. However, due to the complex and ever-changing nature of the RAS, the reduction in IBOO is usually very limited, resulting in little improvement in the auto-bidding policy performance for illustrations. 
Hence, new schemes need to be devised so as to avoid the IBOO and improve the performance of auto-bidding policies. 
Besides, it is worth noting that the IBOO resembles the \emph{sim2real} problem studied in other realms, such as computer visions \cite{sim2real_cv} and robotics \cite{sim2real_robot}. Nonetheless, to the best of our knowledge, this is the first work that formally put forward the concept of IBOO in the field of auto-bidding and  systematically analyzes and resolves it.


\vspace{-2mm}
\begin{table}[h]
	\caption{Influence of IBOO. $R/R*$ and A/B test values are the performance evaluations of the auto-bidding policies in the VAS and RAS, respectively. Both are the higher the better. We rank the ten policies accordingly. $\uparrow$ means that the performance in the RAS is higher than that in the RAS, while $\uparrow$ means the opposite. $-$ means that the RAS and VAS evaluate the same. }
	\centering
	\small
	\begin{tabular}{cccccc}
		\toprule[1pt]
		\makecell[c]{Policy}   & \makecell[c]{$R/R*$ (rank)}  & \makecell[c]{A/B Tests (rank)}  &	\makecell[c]{Policy}   & \makecell[c]{$R/R*$ (rank)}    & \makecell[c]{A/B Tests (rank)}  \\ \midrule[0.7pt]
	1  &\cellcolor{green!100} $0.9118$ (1) &\cellcolor{red!20} $-2.50\%$  (9) $\downarrow$  & 	6   & \cellcolor{green!34.89} $0.8563$  (6)  &\cellcolor{red!60} $+4.30\%$  (2) $\uparrow$ \\
	2   & \cellcolor{green!51.569} $0.8731$ (2) &\cellcolor{red!51.569} $+3.10\%$   (6) $\downarrow$ &  	7   & \cellcolor{green!32.11} $0.8535$ (7) &\cellcolor{red!80} $+4.40\%$  (1)  $\uparrow$  \\
	3   & \cellcolor{green!44} $0.8656$ (3) & \cellcolor{red!10}$-3.20\%$  (10) $\downarrow$  & 	8   & \cellcolor{green!22.08} $0.8434$ (8)  & \cellcolor{red!25}$-1.50\%$  (8)  $-$ \\
	4   & \cellcolor{green!44} $0.8656$  (4) & \cellcolor{red!40}$+1.20\%$  (7)$\downarrow$    &	9   & \cellcolor{green!21.48} $0.8428$ (9)  &\cellcolor{red!55} $+3.60\%$  (5) $\uparrow$   \\
	5   & \cellcolor{green!38} $0.8594$ (5)  & \cellcolor{red!60}$+4.30\%$  (2) $\uparrow$   & 	10   & \cellcolor{green!15} $0.8111$ (10)   & \cellcolor{red!58}$+3.80\%$  (4) $\uparrow$   \\
 \bottomrule[1pt]	
\end{tabular}

\label{table:IBOO}
\end{table}

In this paper, we propose a novel {sustainable online reinforcement learning} (SORL) framework to address the IBOO challenge. For the first time, the SORL abandons the way of learning with the VAS and trains the auto-bidding policy by directly interacting with the RAS. Notably, the SORL can obtain true rewards and state transitions from the RAS and thereby does not suffer from the IBOO problem.
The SORL contains two main algorithms.
The first one is a \emph{safe and efficient} online exploration policy for collecting data from the RAS, named as the SER policy.
Specifically, to guarantee the safety of explorations, we design a safety zone to sample actions from based on the Lipschitz smooth property of the Q function we theoretically proved. We also derive the lower bound on the safety degree.
To increase the efficiency of explorations, we develop a sampling distribution that can make the collected data give more feedbacks to the auto-bidding policy being trained.
The second main algorithm is an \emph{effective and stable} offline RL method to train the auto-bidding policy based on the collected data, named as the {variance-suppressed conservative Q-learning} (V-CQL). 
Specifically, motivated by the observation that the optimal\footnote{Here "optimal" refers to the optimal Q function in the simulated experiments. See Section \ref{sec:V-CQL} for details.} Q function is in the quadratic form, we design a regularization term in the V-CQL to optimize the shape of the Q function. The V-CQL can train the auto-bidding policy with high average, hence {effective}, and low variance, hence {stable}, in performance under different random seeds.
The whole SORL works in an iterative manner, alternating between collecting data and training the auto-bidding policy.
Extensive simulated and real-world experiments validate the effectiveness of our approach.



\section{Related Work}

\label{section:related_work}
\textbf{VAS-based RL Auto-bidding Methods.}
Impressed by powerful contextual learning and sequential decision-making capabilities of RL, modern auto-bidding policies, such as DRLB \cite{RL:wangjun}, RLB \cite{RL:cai}, MSBCB \cite{RL:haoxiaotian} and USCB (state-of-the-art) \cite{RL:USCB}, are usually learned by RL algorithms in a manually built VAS. However, as stated before, they all suffer from the IBOO problem. The SORL avoids using the VAS and thereby completely address the IBOO challenge.

\textbf{Offline RL.} 	Offline RL (also known as batch RL) \cite{BCQ,CQL,BCQ_CQL,offline_review,BCQ_1, BCQ_2, CRR, BAIL, BRAC} aims to learn better policies based on a fixed offline dataset collected by some behavior policies. The main challenge offline RL addressed {is} the extrapolation error caused by missing data \cite{offline_review}. Specifically,
offline RL algorithms usually address this challenge {in three ways}, including {policy constraint} methods 
such as BCQ \cite{BCQ}, BEAR \cite{BCQ_1}, and {conservative regularization} methods 
such as CQL \cite{CQL}, BRAC \cite{BRAC}, as well as modifications of imitation learning \cite{onestep} such as ABM \cite{BCQ_2}, CRR \cite{CRR}, BAIL\cite{BAIL}. However, there is another important challenge that offline RL cannot solve:  the fixed offline dataset cannot be guaranteed to contain sufficient state-action pairs from high-reward regions \cite{offline_review}. This challenge exists in many practical applications, including the auto-bidding problem, and can inherently prevent offline RL algorithms from learning excellent policies. Hence, a great advantage of the SORL is its ability to continuously collect data from OOD high-reward regions that can give new feedbacks to the auto-bidding policy being trained.

\textbf{Off-Policy Evaluation (OPE).} OPE is an algorithm to evaluate the performance of the policy with offline data collected by other policies \cite{OPE}. The state-of-the-art OPE algorithm in auto-bidding is to calculate the $R/R*$ of the evaluated policy in the VAS (see Appendix \ref{app:related_work} for details). However, as shown in Table. \ref{table:IBOO}, the OPE in auto-bidding is not accurate. This indicates that we cannot rely on the OPE to select auto-bidding policies with good performance for directing further online explorations. Notably, the proposed offline training algorithm in the SORL, V-CQL, outperforms existing offline RL algorithms in training stabilities and helps to reduce the OPE process during iterations in SORL.

\textbf{Online RL.} Safety is of vital importance in the online exploration of many real-world industrial systems, including the RAS.  Many safe online RL algorithms have been proposed to guarantee the safety of explorations \cite{CSC, irl, icrl, safeRL:query, liyapnouv, learning-based, 5, 6,7}.
However, many of them are either developed based on the constraints that are not suitable for the auto-bidding problem or designed for systems with specific assumptions (see Appendix \ref{app:related_work} for details). 
Besides, many existing works \cite{5,6,7,liyapnouv} assume that there exists a specific set of safe state-action pairs that can be explored. Some work \cite{5} even requires to know the safe states in advance. However, in the auto-bidding problem, no specific actions at any state cannot be explored as long as the expected accumulative reward maintains at a high level. This requires the exploration policy to maintain high performance throughout the iterations, which is more challenging. Notably, the SER policy can meet this requirement with theoretical guarantees. 
Recently, with the development of the offline RL field, many algorithms for efficient online explorations on the premise of having an offline dataset  \cite{AWAC, O2O} have emerged. However, they often only focus on the efficiency of data collections but ignore the safety of it. Notably, the SER policy in the SORL strikes a good balance between the efficiency and safety.

Supplementary related works are described in Appendix \ref{app:related_work}.

\section{Problem Settings}
\label{sec:problem_setting}
In this paper, we study the auto-bidding problem from the perspective of a single advertiser, which can be viewed as an episodic task of $T\in\mathbb{N}_+$ time steps. Between time step $t$ and $t+1$, there are $N_t\in\mathbb{N}_+$ impression opportunities, each of which has a positive value $v_{j,t}\le v_M$ and is sold to the winning advertiser at a market price $p_{j,t}\le p_M$, where $v_M>0$ and $p_M>0$ are the upper bounds for values and market prices, respectively, $j\in\{1,2,...,N_t\}, t\in\{1,2,...,T\}$. Denote
$p^1_{j,t}\le p_M$ and $v^1_{j,t}\le v_M$ as the market price and rough value of impression opportunity $j$ in stage 1, and $p^2_{j,t}\le p_M$ and $v^2_{j,t}\le v_M$ as the market price and accurate value of it in stage 2.
Note that they are all positive values.
Let the total budget of the advertiser be $B>0$. The auto-bidding problem can be modeled as a Constraint Markov Decision Process (CMDP) \cite{CMDP} $<\mathcal{S},\mathcal{A},R,\mathbb{P},\gamma, C>$. The state $s_t\triangleq[b_t,T-t,B-b_t]\in\mathcal{S}$ is composed of three elements, including the budget left $s_t(1)=b_t$, the time left $s_t(2)=T-t$ and the budget consumed $s_t(3)=B-b_t$. The action $a_t\in\mathcal{A}\triangleq[A_\text{min},A_\text{max}]$ is the bidding price at time step $t$, where $A_\text{max}>A_\text{min}>0$ are the upper and lower bounds, respectively. The reward function $r_t(s_t,a_t)$ and the constraint function $c_t(s_t,a_t)$ are the total value of impression opportunities won between time step $t$ and $t+1$ and the corresponding costs, respectively, and $\mathbb{P}$ denotes the state transition rule. Note that $R$, $C$ and $\mathbb{P}$ are all directly determined by the RAS. Moreover, $\gamma\in[0,1]$ is the discounted factor.
We denote the auto-bidding policy as  $\mu:\mathcal{S}\rightarrow\mathcal{A}$, and let $\Pi$ be the policy space. The goal of the auto-bidding problem is to maximize the total value of impression opportunities earned by the advertiser under the budget constraint, which can be expressed as 
\begin{align}
	\label{optimal_prob}
	\max_{\mu\in\Pi}\;V(\mu)\triangleq\mathbb{E}_{s_{t+1}\sim\mathbb{P},a_t\sim\mu}\bigg[\sum_{t=1}^T\gamma^tr_t(s_t,a_t)\bigg],\quad \mathrm{subject \; to }\;\;\sum_{t=1}^T c_t(s_t,a_t)\le B.
\end{align}
One can leverage standard RL algorithms to {train} a state-of-the-art auto-bidding policy \cite{RL:USCB} with the VAS, where the constraint in \eqref{optimal_prob} is met by terminating the training episode early once the budget runs out. However, as stated before, this way of learning suffers from the challenge of IBOO. Hence, we
propose the SORL framework to avoid the IBOO in the following section.

\section{Our Approach}
The SORL framework consists of two algorithms, including the SER policy for online data collections and the V-CQL method for offline training of the auto-bidding policy based on the collected data. The SORL works in an iterative manner, alternating between online data collections and offline training.
\vspace{-1mm}
\subsection{Online Exploration: SER Policy}
\vspace{-1mm}
There are two requirements on the online exploration policy. As the exploration policy is directly deployed in the RAS, the primary requirement is \emph{safety}.
Unlike the safety in other realms such as robotics, it is not appropriate to construct an \emph{immediate-evaluated} safety function to assess the safety degree of each state-action pair merely based on their values in auto-bidding. 
Actually, any action in any state would be safe as long as the summation of all rewards of the whole episode maintains at a high level. 
See Appendix \ref{app:safety_explain} for detailed explanations.
Let $\mu_s$ be the auto-bidding policy originally\footnote{There is always an auto-bidding policy $\mu_s$ deployed in the RAS. For example, the state-of-the-art method USCB \cite{RL:USCB} will do.} deployed in the RAS that is safe, and denote the online exploration policy as $\pi_{e}$.  We can formally define the safety requirement as: \emph{the performance of $\pi_{e}$ cannot be lower than that of $\mu_s$ by a threshold $\epsilon_s>0$, i.e., $V(\mu_s) - V(\pi_{e}) \le \epsilon_s$}. Besides, $\pi_{e}$ needs to be \emph{efficient}, i.e., \emph{collecting data that can give new feedbacks to the auto-bidding policy being trained}, which constitutes the second requirement. We next propose the SER policy for exploration that can satisfy both requirements.
\begin{figure*}
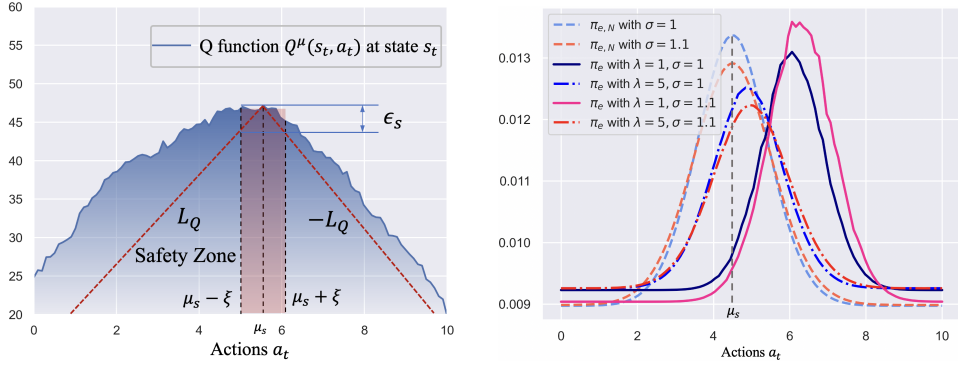

		\setlength{\abovecaptionskip}{-0.01cm}
	\begin{center}
		\subfigure[
		The Q function of any auto-bidding policy is $L_Q$-Lipschitz smooth. The safety zone is designed as the range between $\mu_s(s_t)-\xi$ and $\mu_s(s_t)+\xi$, where  $\xi\le\frac{\epsilon_s}{L_Q\gamma^{t_1}\Delta T}$.
		]{
			\label{fig:Q_lipschitz}
			\includegraphics[width=6cm]{Q_lip.png}}
		\;\;\;
		\subfigure[Exploration policies $\pi_e$ under different hyper-parameters $\sigma$ and $\lambda$, as well as $\pi_{e,\mathcal{N}}$ with different hyper-parameter $\sigma$.]{
			\label{fig:interaction_policy}
			\includegraphics[width=6.2cm]{exploration_policy.png}
		}
		\label{fig:alg_safe}
		\caption{Design of the safety zone and sampling distributions in the SER policy.}
	\end{center}
	\vspace{-4mm}
\end{figure*}

\subsubsection{Theory: Lipschitz Q Function}
\label{sec:theory}
Our basic idea of ensuring the safety of exploration is to design a safety zone for taking actions around the outputs of $\mu_s$. The motivation for this idea is our proof of the Lipschitz smooth property of the Q function in the auto-bidding problem. We next report the theorem of this Lipschitz smooth property, as well as corresponding propositions and assumptions.
Let $\mu$ be an arbitrary auto-bidding policy.
According to the Bellman equation, we have:
\begin{align}
	Q^\mu(s_t,a_t)=r_t(s_t,a_t)+\gamma\mathbb{E}_{s_{t+1}\sim \mathbb{P}(\cdot\mid s_t,a_t)}[Q^\mu(s_{t+1},\mu(s_{t+1}))].
\end{align}
Based on the mechanism of RAS, 
we formulate the reward function $R$, the constraint function $C$ and the transition rule $\mathbb{P}$ as follows, and the proof can be found in Appendix \ref{app:proposition_1}. 
\begin{proposition}[\textnormal{Analytical expressions of $R$, $C$ and $\mathbb{P}$}] 
	\label{proposition:form}
	Based on the characteristic of the two-stage cascaded auction in the RAS, we can formulate the reward function $R$ as $r_t(s_t,a_t)=\sum_{j}\mathbbm{1}\{a_tv^1_{j,t}\ge p^1_{j,t},a_tv^2_{j,t}\ge p^2_{j,t}\}v_{j,t}$, the constraint function $C$ as $c_t(s_t,a_t)=\sum_{j}\mathbbm{1}\{a_tv^1_{j,t}\ge p^1_{j,t},a_tv^2_{j,t}\ge p^2_{j,t}\}p_{j,t}$, and the state transition rule $\mathbb{P}$ as $s_{t+1}=s_{t}+[\triangle s_t(1),\triangle s_t(2),\triangle s_t(3)]$, where $\triangle s_t(1)=-\triangle s_t(3)=-\sum_{j}\mathbbm{1}\{a_tv^1_{j,t}\ge p^1_{j,t},a_tv^2_{j,t}\ge p^2_{j,t}\}p_{j,t}$ and $\triangle s_t(2)=-1$.
\end{proposition}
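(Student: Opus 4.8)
The plan is to derive each of the three expressions directly from the operational mechanism of the two-stage cascaded auction, treating the single advertiser of interest as a price-taker whose bid $a_t$ does not perturb the exogenous quantities $v^k_{j,t}$ and $p^k_{j,t}$ (justified because one advertiser is negligible among the $\sim 10^6$ candidates). The core object to pin down is the winning indicator for each impression opportunity $j$ arriving between steps $t$ and $t+1$; everything else is bookkeeping on top of it.

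First I would analyze the two filtering/auction events separately. In stage 1, the advertiser's ranking score for opportunity $j$ is the product of its bid and the rough value, $a_t v^1_{j,t}$, and it advances to stage 2 exactly when this score meets or exceeds the stage-1 market price $p^1_{j,t}$, which encodes the cutoff imposed by the competing advertisers. Conditioned on advancing, in stage 2 the ranking score is recomputed with the accurate value as $a_t v^2_{j,t}$, and the advertiser wins the auction exactly when this score meets or exceeds the stage-2 market price $p^2_{j,t}$. Since winning requires passing both stages, the event that $j$ is won is the conjunction of the two inequalities, i.e.\ the product indicator $\mathbbm{1}\{a_t v^1_{j,t}\ge p^1_{j,t},\, a_t v^2_{j,t}\ge p^2_{j,t}\}$.

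Next I would assemble the per-step quantities. By the problem setup, winning opportunity $j$ yields value $v_{j,t}$ and costs the market price $p_{j,t}$; multiplying each by the winning indicator and summing over $j\in\{1,\dots,N_t\}$ gives the stated forms of $r_t(s_t,a_t)$ and $c_t(s_t,a_t)$. For the transition, I would read off the bookkeeping of the state $s_t=[b_t,\,T-t,\,B-b_t]$: the time-left component always decrements by one, giving $\triangle s_t(2)=-1$; the budget-left component decreases by exactly the total payment $c_t(s_t,a_t)$ while the budget-consumed component increases by the same amount, so $\triangle s_t(1)=-\triangle s_t(3)=-\sum_j \mathbbm{1}\{a_t v^1_{j,t}\ge p^1_{j,t},\, a_t v^2_{j,t}\ge p^2_{j,t}\}p_{j,t}$, which is the claimed update (deterministic once the realized auction data are fixed).

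The main obstacle I anticipate is justifying that the winning threshold in each stage is captured \emph{exactly} by an inequality of the form $a_t v^k_{j,t}\ge p^k_{j,t}$ with $p^k_{j,t}$ independent of $a_t$. This requires making precise what ``market price'' means at each stage --- that it is the competing cutoff score determined by the other advertisers rather than a quantity influenced by our advertiser's own bid --- and invoking the price-taking approximation so that $v^k_{j,t}$ and $p^k_{j,t}$ may be treated as exogenous. Once this modeling point is fixed, the remainder of the derivation is routine accounting over the $N_t$ opportunities.
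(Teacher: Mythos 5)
Your proposal is correct and follows essentially the same route as the paper's proof: the winning event is identified as the conjunction $\mathbbm{1}\{a_tv^1_{j,t}\ge p^1_{j,t},\,a_tv^2_{j,t}\ge p^2_{j,t}\}$ of passing stage 1 and winning stage 2, the reward and cost follow by summing $v_{j,t}$ and $p_{j,t}$ over winning impressions, and the transition is the same budget/time bookkeeping. Your added remark on treating $p^k_{j,t}$ as exogenous (price-taking) is a reasonable clarification of the single-advertiser perspective the paper adopts implicitly, but it does not change the argument.
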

 We make the following assumption on the impression opportunities, whose rationalities can be found in Appendix \ref{app:assumption_1}.
\begin{assumption}[\textnormal{Bounded Impression Distributions}]
	\label{assumption:uniform}
	Between time step $t$ and $t+1$, we assume the numbers of winning impressions with action $a_t$ in the first stage $n_{t,1}$ and the second stage $n_{t,2}$ can both be bounded by linear functions, i.e., $n_{t,1}\le k_1a_t, n_{t,2}\le k_2a_t$, where $k_1,k_2>0$ 
	are constants.
\end{assumption}
Based on this assumption, we claim that the reward function $r_t(s_t,a_t)$ is Lipschitz smooth with respect to actions $a_t$ at any given state $s_t$. See Appendix \ref{app:theorem_1} for the proof.
\begin{theorem}[\textnormal{Lipschitz Smooth of $r_t(s_t,a_t)$}]
	\label{thm:r_smooth}
	Under Assumption \ref{assumption:uniform}, the reward function $r_t(s_t,a_t)$ is $L_r$-Lipschitz smooth with respect to actions $a_t$ at any given state $s_t$, where $L_r=(k_1+k_2)v_M$.
\end{theorem}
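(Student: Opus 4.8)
The plan is to reduce the Lipschitz bound to a counting argument on the set of impression opportunities won by the advertiser, exploiting the monotonicity of the winning indicator in the action. First I would fix the state $s_t$ and two actions $a_t \le a_t'$ (the reverse case follows by symmetry). Using the analytical form of the reward from Proposition \ref{proposition:form}, write $r_t(s_t,a_t)=\sum_j \mathbbm{1}\{a_t v^1_{j,t}\ge p^1_{j,t},\, a_t v^2_{j,t}\ge p^2_{j,t}\}\,v_{j,t}$, and observe that for each fixed $j$ the indicator is non-decreasing in the action: since $v^1_{j,t},v^2_{j,t},p^1_{j,t},p^2_{j,t}$ are all positive, increasing the bid can only make both threshold inequalities easier to satisfy. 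Consequently the winning set $W(a_t)\triangleq\{j:\ a_t v^1_{j,t}\ge p^1_{j,t},\,a_t v^2_{j,t}\ge p^2_{j,t}\}$ satisfies the monotone nesting $W(a_t)\subseteq W(a_t')$.

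Given this nesting, the reward difference reduces to a nonnegative sum over the newly won impressions, and each value is capped by $v_M$:
\begin{align}
    r_t(s_t,a_t')-r_t(s_t,a_t)=\sum_{j\in W(a_t')\setminus W(a_t)} v_{j,t}\ \le\ v_M\,\bigl|W(a_t')\setminus W(a_t)\bigr|.
\end{align}
So it remains to bound the number of newly won impressions by $(k_1+k_2)\,|a_t'-a_t|$. The key structural observation is that any $j$ won at $a_t'$ but not at $a_t$ must newly satisfy at least one of the two stage conditions when the bid moves from $a_t$ to $a_t'$; hence $W(a_t')\setminus W(a_t)$ is contained in the union of the impressions that newly pass stage 1 and those that newly win stage 2. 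Using that the per-stage winning sets are themselves monotone in the action, this lets me split the count as $\bigl|W(a_t')\setminus W(a_t)\bigr|\le \bigl(n_{t,1}(a_t')-n_{t,1}(a_t)\bigr)+\bigl(n_{t,2}(a_t')-n_{t,2}(a_t)\bigr)$.

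The final and most delicate step is to invoke Assumption \ref{assumption:uniform} to bound each incremental per-stage count by $k_1|a_t'-a_t|$ and $k_2|a_t'-a_t|$ respectively, yielding $\bigl|W(a_t')\setminus W(a_t)\bigr|\le (k_1+k_2)|a_t'-a_t|$ and therefore $|r_t(s_t,a_t')-r_t(s_t,a_t)|\le (k_1+k_2)v_M|a_t'-a_t|=L_r|a_t'-a_t|$. I expect this to be the main obstacle, because the assumption is phrased as a bound on the \emph{total} stage counts ($n_{t,1}\le k_1 a_t$, $n_{t,2}\le k_2 a_t$), whereas what the argument actually consumes is a bound on the \emph{increment} of those counts over an action interval, i.e. that the number of impressions whose critical bid $p^i_{j,t}/v^i_{j,t}$ lies in an interval of length $\delta$ is at most $k_i\delta$. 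The clean way to close this gap is to read Assumption \ref{assumption:uniform} as exactly this linear (Lipschitz-density) control on how fast impressions enter each stage's winning set as the bid grows, and I would make that reading explicit before applying it, since a bound on the cumulative count measured from the origin does not by itself control the increment over a shifted sub-interval.
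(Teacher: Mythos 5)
Your proof follows essentially the same route as the paper's: exploit the monotonicity of the winning indicator to get nesting of winning sets, cap each newly won impression's value by $v_M$, split the newly won impressions into those newly passing stage 1 and those newly passing stage 2, and invoke Assumption \ref{assumption:uniform} on the per-stage increments. Your closing caveat is also accurate: the paper applies the assumption precisely in the incremental (Lipschitz-density) reading you describe, bounding $\sum_j \mathbbm{1}\{a_1\ge p^i_{j,t}/v^i_{j,t}>a_2\}$ by $k_i|a_1-a_2|$ without remarking on the gap between that and the literal cumulative statement $n_{t,i}\le k_i a_t$, so making that reading explicit is a sound refinement rather than a deviation.
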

We make the following mild assumptions, whose rationalities can be found in Appendix \ref{app:assumption_2}.
\begin{assumption}[\textnormal{Bounded Partial Derivations of $Q^\mu(s_t,a_t)$}]
	\label{assumption:bounded_Q}
	We assume that the partial derivation of $Q^\mu(s_t,a_t)$ with respect to $s_t(1)$ and $s_t(3)$ is bounded, i.e., $\big\vert\frac{\partial Q^\mu(s_t,a_t)}{\partial s_t(1)}\big\vert\le k_3$ and $\big\vert\frac{\partial Q^\mu(s_t,a_t)}{\partial s_t(3)}\big\vert\le k_4$, where $k_3, k_4>0$ are constants.
\end{assumption}
Equipped with Theorem \ref{thm:r_smooth}, we can prove that the Q function $Q^\mu(s_t,a_t)$ is also Lipschitz smooth. See Appendix \ref{app:theorem_2} for proof.
\begin{theorem}[\textnormal{Lipschitz Smooth of $Q^\mu(s_t,a_t)$}]
	\label{thm:Q_smooth}
	Under Assumption \ref{assumption:uniform} and \ref{assumption:bounded_Q}, the Q function $Q^\mu(s_t,a_t)$ is an $L_Q$-Lipschitz smooth function with respect to the actions $a_t$ at any given state $s_t$, where $L_Q=[v_M+\gamma(k_3+k_4)p_M](k_1+k_2)$.
\end{theorem}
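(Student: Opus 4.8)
The plan is to bound the finite difference $|Q^\mu(s_t,a_t)-Q^\mu(s_t,a_t')|$ for two arbitrary actions $a_t,a_t'$ at a fixed state $s_t$, starting from the Bellman equation. Subtracting the two instances of the Bellman recursion and applying the triangle inequality splits the task into two pieces: a reward piece $|r_t(s_t,a_t)-r_t(s_t,a_t')|$ and a discounted transition piece $\gamma\,\big|\mathbb{E}_{s_{t+1}}[Q^\mu(s_{t+1},\mu(s_{t+1}))]-\mathbb{E}_{s_{t+1}'}[Q^\mu(s_{t+1}',\mu(s_{t+1}'))]\big|$. The reward piece is already controlled by Theorem~\ref{thm:r_smooth}, contributing $L_r=(k_1+k_2)v_M$ to the final constant.

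For the transition piece, I would first record the analogue of Theorem~\ref{thm:r_smooth} for the cost function. Because $c_t(s_t,a_t)=\sum_j\mathbbm{1}\{a_tv^1_{j,t}\ge p^1_{j,t},\,a_tv^2_{j,t}\ge p^2_{j,t}\}p_{j,t}$ has exactly the same indicator structure as $r_t$ but weights each won impression by $p_{j,t}\le p_M$ instead of $v_{j,t}\le v_M$, the same counting argument under Assumption~\ref{assumption:uniform} --- the number of impressions whose winning status flips between $a_t$ and $a_t'$ across the two stages is bounded by $(k_1+k_2)|a_t-a_t'|$ --- yields that $c_t$ is $L_c$-Lipschitz in the action with $L_c=(k_1+k_2)p_M$.

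Next I would exploit the coupling afforded by Proposition~\ref{proposition:form}. Conditioning on a common realization of the impression opportunities $\{v_{j,t},p_{j,t},v^1_{j,t},\dots\}$, the transition is deterministic, and the only action-dependent coordinates of $s_{t+1}$ are the budget-left coordinate $s_{t+1}(1)$ and the budget-consumed coordinate $s_{t+1}(3)$, each shifting by exactly $\pm(c_t(s_t,a_t)-c_t(s_t,a_t'))$ while $s_{t+1}(2)$ is unchanged. Hence under this coupling the two successor states differ by at most $L_c|a_t-a_t'|$ in each of coordinates $1$ and $3$. Applying the mean value theorem together with the bounded-partial-derivative Assumption~\ref{assumption:bounded_Q} bounds the per-realization gap by $k_3|s_{t+1}(1)-s_{t+1}'(1)|+k_4|s_{t+1}(3)-s_{t+1}'(3)|\le(k_3+k_4)L_c|a_t-a_t'|$, and taking expectations preserves this bound. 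Multiplying by $\gamma$ and adding the reward piece gives $|Q^\mu(s_t,a_t)-Q^\mu(s_t,a_t')|\le[(k_1+k_2)v_M+\gamma(k_3+k_4)(k_1+k_2)p_M]|a_t-a_t'|$, i.e. $L_Q=[v_M+\gamma(k_3+k_4)p_M](k_1+k_2)$.

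The main obstacle I anticipate is making the transition piece rigorous, since both the distribution $\mathbb{P}(\cdot\mid s_t,a_t)$ and the integrand $Q^\mu(s_{t+1},\mu(s_{t+1}))$ depend on the action; the coupling on a shared impression realization is precisely what converts a comparison of two expectations into a single per-sample state perturbation. A secondary subtlety is that the successor value $Q^\mu(s_{t+1},\mu(s_{t+1}))$ also varies through $\mu(s_{t+1})$, so Assumption~\ref{assumption:bounded_Q} must be read as controlling the sensitivity of this value along coordinates $1$ and $3$ (equivalently, absorbing the policy's state dependence into $k_3,k_4$); I would state this interpretation explicitly so that the mean value theorem applies cleanly.
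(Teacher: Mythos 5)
Your proposal is correct and follows essentially the same route as the paper's own proof: decompose via the Bellman equation, control the reward term with Theorem~\ref{thm:r_smooth}, observe via Proposition~\ref{proposition:form} that the action only perturbs the successor state through the cost (coordinates $1$ and $3$, bounded by the same $(k_1+k_2)p_M$ counting argument), and convert that state perturbation into a value perturbation using Assumption~\ref{assumption:bounded_Q}. If anything, your use of an explicit coupling on the impression realization and the mean value theorem is slightly more careful than the paper, which performs the same step with a first-order Taylor expansion written as an approximation.
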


This means that the decrease rate of $Q^\mu$, the subsequent accumulated rewards, due to action offset at any time step $t$ is bounded by $L_Q$, which gives us a way to design the safety zone.
Specifically, as shown in Fig. \ref{fig:Q_lipschitz}, the safety zone at state $s_t$ can be design as the neighborhood of action $\mu_s(s_t)$, i.e.,   $[\mu_s(s_t)-\xi,\mu_s(s_t)+\xi]$, where $\xi >0$ is the range.
In this way, the online exploration policy $\pi_{e}$ can be designed as: sampling within the safety zone $[\mu_s(s_t)-\xi,\mu_s(s_t)+\xi]$ in certain $\Delta T\ge 1$ consecutive time steps, and sticking to $\mu_s$ in the rest of $T-\Delta T$ time steps, i.e.,
	\begin{align}
		\pi_e(s_t)=
		\begin{cases}
			\text{sampling from }[\mu_s(s_t)-\xi,\mu_s(s_t)+\xi],&t_1\le t\le t_2;\\
			\mu_s(s_t),&\text{otherwise}.
		\end{cases}\quad \forall t,
\end{align} 
where $0\le t_1<t_2\le T-1$, $\Delta T=t_2-t_1+1$, and $t_1,t_2\in\mathbb{N}^+$.
In the following theorem, 
{we give the upper bound of $|V(\pi_e)-V(\mu_s)|$.}
\begin{theorem}[{\textnormal{Upper Bound of $|V(\pi_e)-V(\mu_s)|$}}]
	\label{thm:lower_bound}
The expected accumulated reward $V(\pi_e)$ satisfies 
	\begin{align}
		\bigg\vert V(\pi_e) - V(\mu_s)\bigg\vert\le \xi\gamma^{t_1}\bigg[v_M+\gamma\big(k_3+k_4\big)p_M\bigg]\big(k_1+k_2\big)\Delta T.
	\end{align}
\end{theorem}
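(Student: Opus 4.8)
The plan is to leverage the Lipschitz smoothness of $Q^{\mu_s}$ established in Theorem~\ref{thm:Q_smooth} together with a telescoping (hybrid‑policy) decomposition that converts the multi‑step deviation of $\pi_e$ from $\mu_s$ into a sum of single‑step deviations. First I would recast the objective through the $Q$ function: since the reward at time $t$ carries the discount $\gamma^t$ and $Q^\mu(s_t,a_t)=\mathbb{E}[\sum_{\tau\ge t}\gamma^{\tau-t}r_\tau\mid s_t,a_t]$, the tail of $V$ starting from any step $i$ equals $\gamma^{i}\,\mathbb{E}_{s_i,a_i}[Q^\mu(s_i,a_i)]$, while the rewards accrued before $i$ depend only on the policy's behaviour before $i$.

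The central device is a family of intermediate policies $\pi^{(i)}$, for $i=t_1-1,\dots,t_2$, where $\pi^{(i)}$ samples inside the safety zone at steps $t_1,\dots,i$ and plays $\mu_s$ at every other step; thus $\pi^{(t_1-1)}=\mu_s$, $\pi^{(t_2)}=\pi_e$, and I would telescope $V(\pi_e)-V(\mu_s)=\sum_{i=t_1}^{t_2}\big(V(\pi^{(i)})-V(\pi^{(i-1)})\big)$. The key observation is that $\pi^{(i)}$ and $\pi^{(i-1)}$ share the same (stochastic) rule before step $i$ and both revert to $\mu_s$ after step $i$; hence the induced state law at step $i$ is identical for the two, the pre-$i$ rewards cancel, and the post-$i$ continuation of each is governed by the \emph{same} policy $\mu_s$. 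This collapses each summand to a genuine one-step deviation,
\[
V(\pi^{(i)})-V(\pi^{(i-1)})=\gamma^{i}\,\mathbb{E}_{s_i}\Big[\mathbb{E}_{a_i\sim\pi_e(\cdot\mid s_i)}\big[Q^{\mu_s}(s_i,a_i)\big]-Q^{\mu_s}\big(s_i,\mu_s(s_i)\big)\Big].
\]
From here the Lipschitz bound applies directly: since $a_i\in[\mu_s(s_i)-\xi,\mu_s(s_i)+\xi]$ we have $|a_i-\mu_s(s_i)|\le\xi$, so Theorem~\ref{thm:Q_smooth} gives $|Q^{\mu_s}(s_i,a_i)-Q^{\mu_s}(s_i,\mu_s(s_i))|\le L_Q\xi$ pointwise, and pulling the absolute value through the expectations yields $|V(\pi^{(i)})-V(\pi^{(i-1)})|\le\gamma^{i}L_Q\xi$. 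Summing over the $\Delta T=t_2-t_1+1$ exploration steps and using $\gamma\le 1$ to bound $\sum_{i=t_1}^{t_2}\gamma^{i}\le\gamma^{t_1}\Delta T$ gives $|V(\pi_e)-V(\mu_s)|\le \xi\gamma^{t_1}L_Q\Delta T$, which is exactly the claim after substituting $L_Q=[v_M+\gamma(k_3+k_4)p_M](k_1+k_2)$.

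The step I expect to require the most care is the telescoping decomposition itself, because the deviation of $\pi_e$ spans multiple steps. A naive attempt to bound $V(\pi_e)-V(\mu_s)$ by a single application of the Lipschitz property at step $t_1$ fails: after $t_1$ the two policies no longer share a common continuation (as $\pi_e$ keeps exploring through $t_2$), so the tail values would be $Q^{\pi_e}$ and $Q^{\mu_s}$ rather than a common $Q^{\mu_s}$, and Theorem~\ref{thm:Q_smooth} could not be invoked. The hybrid construction circumvents this precisely because consecutive $\pi^{(i)},\pi^{(i-1)}$ share \emph{both} their pre-$i$ rule and their post-$i$ ($\mu_s$) continuation; the delicate points are then (i) confirming that the induced state law at step $i$ coincides for the two — which holds because identical sampling rules before $i$ yield identical state distributions, not identical sample paths — and (ii) ensuring the early-termination budget constraint is folded into the reward and transition definitions of Proposition~\ref{proposition:form}, so that $V$ and $Q^{\mu_s}$ remain consistently defined on truncated episodes and the additive structure of the telescope is preserved.
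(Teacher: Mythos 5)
Your proposal is correct and is essentially the paper's own argument: both reduce the claim to the identity $V(\pi_e)-V(\mu_s)=\sum_{t=t_1}^{t_2}\gamma^t\,\mathbb{E}_{s_t\sim\rho_t}\big[Q^{\mu_s}(s_t,\pi_e(s_t))-Q^{\mu_s}(s_t,\mu_s(s_t))\big]$, bound each one-step deviation by $L_Q\xi$ via Theorem \ref{thm:Q_smooth}, and sum the discounts to get $\xi\gamma^{t_1}L_Q\Delta T$. The only difference is organizational rather than substantive: the paper obtains this identity by backward plus-and-minus peeling from $t_2$ inside the expression for $V(\pi_e)$, whereas you obtain it via the hybrid-policy telescope $\pi^{(t_1-1)}=\mu_s,\dots,\pi^{(t_2)}=\pi_e$, which reaches the same decomposition somewhat more cleanly.
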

{See Appendix \ref{app:theorem_3} for the detailed proofs. 
Hence, to meet the safety requirement, we design the range of the safety zone as $\xi\le\frac{\epsilon_s}{L_Q\gamma^{t_1}\Delta T}$. 
\subsubsection{Sampling Method}
With the safety guarantee, we can further design the sampling method in $\pi_{e}$ to increase the efficiency of exploration. 
In the following, we use $\tilde{\pi}_\text{x}$ and $\pi_\text{x}$ to denote the sampling distribution and the corresponding exploration policy, respectively, where $\text{x}$ represents the version.

\textbf{Vanilla Sampling Method.} A vanilla {sampling method in the safety zone} can directly be {sampling from a (truncated) Gaussian distribution} $\tilde{\pi}_{e,\mathcal{N}}=\mathcal{N}(\mu_s(s_t),\sigma^2)$ with mean $\mu_s(s_t)$ and variance $\sigma^2$. 
However, $\tilde{\pi}_{e,\mathcal{N}}$ resembles the policy $\mu_s$ to a large extent, which makes the explorations conservative and cannot give sufficient feedbacks to the auto-bidding policy being trained.

\textbf{SER Sampling Method.} 
To lift up the efficiency of explorations, we shift the distribution $\tilde{\pi}_{e,\mathcal{N}}$ towards the
actions of the auto-bidding policy being trained, while constraining the {deviations} within a threshold $\epsilon_e>0$. This can be formulated as a functional optimization problem:
\begin{align}
	\label{optimization}
	\max_{\tilde{\pi}_e,\forall s_t}\;\;\mathbb{E}_{a_t\sim\tilde{\pi}_e(\cdot|s_t)}\widehat{Q}(s_t,a_t)\qquad
	\mathrm{ s.t. }\;\; D_\text{KL}(\tilde{\pi}_e,\tilde{\pi}_{e,\mathcal{N}})\le \epsilon_e,
\end{align}
where the optimization variable $\tilde{\pi}_e$ denotes the shifted distribution, $\widehat{Q}$ is the Q function of the auto-bidding policy being trained.
Using Euler equations, we can derive the form of $\tilde{\pi}_e$ from \eqref{optimization} as 
\begin{align}
	\label{equ:explore_policy}
	\tilde{\pi}_e=\frac{\tilde{\pi}_{e,\mathcal{N}}}{C(s_t)}
	\exp\bigg\{\frac{1}{\lambda}\widehat{Q}(s_t,a_t)\bigg\}=\frac{1}{C(s_t)}\exp\bigg\{\underbrace{-\frac{(a_t-\mu_s(s_t))^2}{2\sigma^2}}_\text{safety}+\underbrace{\frac{1}{\lambda}\widehat{Q}(s_t,a_t)}_\text{efficiency}\bigg\},
\end{align}
where $C(s_t)=\int_{a_t}\exp\{-\frac{(a_t-\mu_s(s_t))^2}{2\sigma^2}+\frac{1}{\lambda}\widehat{Q}(s_t,a_t)\}\mathrm{d}a_t$ acts as the normalization factor. The complete deductions of $\tilde{\pi}_e$ are given in Appendix \ref{app:exploration_derivation}. We can interpret $\tilde{\pi}_e$ as a deviation from $\tilde{\pi}_{e,\mathcal{N}}$ towards the Q function $\widehat{Q}(s_t,a_t)$, where $\tilde{\pi}_{e,\mathcal{N}}$ further guarantees the safety and the Q function ensures the efficiency.
Note that $\sigma$ and $\lambda$ in \eqref{equ:explore_policy} are both hyper-parameters that can control the randomness and {deviations} from $\mu_s$ of the SER policy, respectively. As shown in Fig. \ref{fig:interaction_policy}, the smaller the value of $\lambda$, the larger the {deviations} of $\tilde{\pi}_e$ from $\mu_s$; besides, the bigger the value of $\sigma$, the greater the randomness of $\tilde{\pi}_e$. Hence, we can control the randomness and {deviations} from the safe policy $\mu_s$ of exploration policy easily by adjusting the value of $\sigma$ and $\lambda$. In addition, we describe the way of practically implementing the sampling of distribution $\tilde{\pi}_e$ in Appendix \ref{app:practical_implementations}.
Then the complete SER policy is
	\begin{align}
	\pi_e(s_t)=
	\begin{cases}
		\text{sampling from }[\mu_s(s_t)-\xi,\mu_s(s_t)+\xi]\text{ with }\tilde{\pi}_e ,&t_1\le t\le t_2;\\
		\mu_s(s_t),&\text{otherwise}.
	\end{cases}\quad \forall t.
\end{align} 

\subsection{Offline Training: V-CQL}
\label{sec:V-CQL}
Due to the safety constraint on the explorations, the SER policy can only collect data within the safety zone and the data outside the safety zone will be missed. Hence, we leverage a strong baseline offline RL algorithm, CQL, to address the extrapolation error and make the offline training \emph{effective}.
Moreover, as the sampling distribution $\tilde{\pi}_e$ in the SER policy involves the Q function $\widehat{Q}$, the training result will affect the directions of further explorations. However, on the one hand, as stated in Section \ref{section:related_work}, the OPE in the auto-bidding is not reliable; and on the other hand, the performance variance under different random seeds of existing offline RL algorithms can be large, as shown in Fig. \ref{fig:large_variance} in the experiment. Hence, we further design a novel regularization term in the CQL loss to increase the \emph{stability} of offline training and thereby reduce the OPE process. This forms the V-CQL method. Specifically, we observe that the Q functions of the state-of-the-art auto-bidding policy are always in nearly quadratic forms. In addition, we conduct simulated experiments where the auto-bidding policy is directly trained in a simulated RAS\footnote{The details of this simulated RAS are described in Appendix \ref{app:concept_as}} with traditional RL algorithms. We find out the optimal Q functions in the simulated experiments are also in nearly quadratic forms. See Appendix \ref{app:V-CQL_1} for illustrations. Based on these observations, we assume that the optimal Q functions in the RAS are also in nearly quadratic forms. Hence, the key idea of V-CQL is to restrict the Q function to remain nearly quadratic, and the regularization term in the V-CQL is designed as:
\begin{align}
	\label{equ:V-CQL}
	\mathcal{R}(\mu)=\beta\mathbb{E}_{s_k\sim\mathcal{D}_s}\bigg[D_\text{KL}\bigg(\underbrace{\frac{\exp(\widehat{Q}(s_k,\cdot))}{\sum_a\exp(\widehat{Q}(s_k,a))}}_{\text{distribution(form) of $\widehat{Q}$}}\;,\;\underbrace{\frac{\exp(\widehat{Q}_\text{qua}(s_k,\cdot))}{\sum_a\exp(\widehat{Q}_\text{qua}(s_k,a))}}_{\text{distribution(form) of $\widehat{Q}_\text{qua}$}}\bigg)\bigg],
\end{align} 
where $Q_\text{qua}$ is selected as the Q function of the state-of-the-art auto-bidding policy, $D_\text{KL}(\cdot,\cdot)$ represents the KL-divergence, and $\beta>0$ is a constant controlling the weight of $\mathcal{R}(\mu)$. Moreover, we can interpret \eqref{equ:V-CQL} from another aspect, i.e., $\mathcal{R}(\mu)$ tries to limit the distance between the distribution of the Q function and the distribution of $\widehat{Q}_\text{qua}$. This can limit the derivation of the auto-bidding policy from the corresponding state-of-the-art policy, which can increase the training stability.
Complete implementations of the V-CQL are shown in Appendix \ref{app:V-CQL_2}.

\subsection{Iterative Updated Structure}
Fig. \ref{fig:SORL} shows the whole structure of the SORL framework. Specifically, the SORL works in an iterative manner, alternating between collecting data with the SER policy from the RAS and training the auto-bidding policy with the V-CQL method.

\begin{figure}[t]
	\centering
	\includegraphics[width=13.5cm]{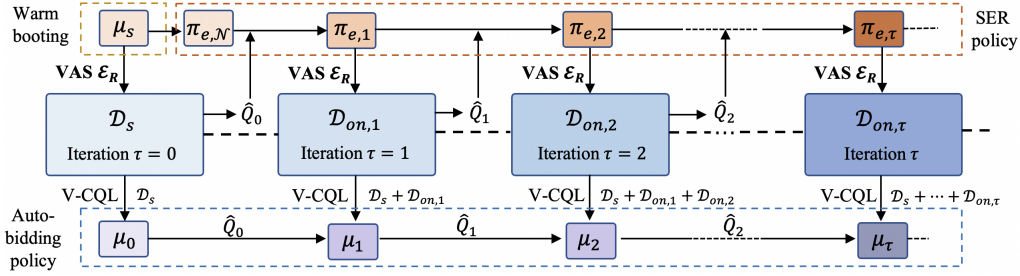}
	\caption{Sustainable online reinforcement learning (SORL) framework which learns the auto-bidding policy directly with the RAS in a safe and efficient way The SORL works in an iterative manner. }
	\label{fig:SORL}
\end{figure}


\textbf{Warm Booting.}
To start with, we use the safe policy $\mu_s$ to boot the explorations in the RAS. 
Denote the data collected by $\mu_s$ as $\mathcal{D}_s=\{(s_k,a_k,r_k,s'_k)\}_{k}$, and the auto-bidding policy trained by $\mathcal{D}_s$ with the V-CQL as $\mu_0$.

\textbf{Iteration Process.}
Denote the SER policy and the auto-bidding policy in the $\tau$-th iteration as $\pi_{e,\tau}$ and $\mu_{\tau}$, and the data collected in the $\tau$-th iteration as $\mathcal{D}_{on,\tau}$, where $\tau\in\{1,2,...\}$
The design for {the sampling distribution $\tilde{\pi}_{e,\tau}$ in iteration $\tau$ } is
$\tilde{\pi}_{e,\tau}=\frac{1}{C_{\tau}(s_t)}\tilde{\pi}_{e,\mathcal{N}}\exp\{\widehat{Q}_{\tau}(s_t,a_t)/\lambda_{\tau}\}$, {where $C_{\tau}(s_t)$ acts as the normalization factor and $\lambda_\tau$ is the hyper-parameter used in iteration $\tau$, and $\widehat{Q}_\tau$ is the Q function of $\mu_\tau$.}
{We note that the exploration policies in each iteration are safe, i.e., $|V(\pi_{e,\tau})-V(\mu_s)|\le \epsilon_s,\forall \tau$, since they all taking actions within the safety zone only with different sampling distributions.  
We leverage the V-CQL to in each iteration improve the auto-bidding policy. Specifically, at iteration $\tau$, we substitute $\widehat{Q}_\text{qua}$ in \eqref{equ:V-CQL} with $\widehat{Q}_{\tau-1}$, and train a new Q function $\widehat{Q}\leftarrow\widehat{Q}_\tau$ for the next iteration. The auto-bidding policy are expected to be continuously improved.
A summary of the SORL framework is present in Appendix \ref{app:sorl}.



\section{Experiments}
\label{sec:experiment}
\vspace{-2mm}
We conduct both simulated and real-world experiments to validate the effectiveness of our approach\footnote{The codes of simulated experiments are available at \href{https://github.com/nobodymx/SORL-for-Auto-bidding}{https://github.com/nobodymx/SORL-for-Auto-bidding}.}.
The following three questions are mainly studied in the experiments:
(1) What is the performance of the whole SORL framework? Is the SER policy safe during iterations? Can the V-CQL method continuously improve the auto-bidding policy and outperform existing offline RL algorithms and the state-of-the-art auto-bidding policy?
(2) Is the safety zone reliable? Is the SER policy still safe when using Q functions of auto-bidding policies with bad performance?
 (3) Does the V-CQL really help to reduce the performance variance compared to existing offline RL algorithms?

\textbf{Experiment Setup.}
We conduct the real-world experiments on one of the world's largest E-commerce platforms, TaoBao. See Appendix \ref{app:experiment_params} for details. The simulated experiments are conducted in a manually built offline RAS and the corresponding VAS. See Appendix \ref{app:concept_as} for details.
The safe auto-bidding policy $\mu_s$ used for warm booting and constructing the safety zone is trained by the state-of-the-art auto-bidding policy,  USCB\cite{RL:USCB}.
The safety threshold is set as $\epsilon_s=5\% V(\mu_s)$.

\textbf{Performance Index.}
The objective function $V(\mu)$ in \eqref{optimal_prob}, i.e., the total value of impression opportunities won by the advertiser in the episode, acts as the main performance index in our experiments and is referred as \emph{BuyCnt} in the following. 
In addition, we utilize three other metrics that are commonly used in the auto-bidding problem to evaluate the performance of our approach.
The first metric the total \emph{consumed budget (ConBdg)} of the advertiser. 
The second metric is the \emph{return on investment (ROI)} which is defined as the ratio between the total revenue and the {ConBdg} of the advertiser. The third metric is the \emph{cost per acquisition (CPA)} which is defined as the average cost for each successfully converted impression. Note that larger values of BuyCnt, {ROI}, and ConBdg with a smaller {CPA} indicate better performance of the auto-bidding policy. 

\textbf{Baselines.} 
We compare our approach with the state-of-the-art auto-bidding policy, USCB \cite{RL:USCB}, that is trained by RL in the VAS. 
We also compare the V-CQL method with modern offline RL algorithms, including BCQ \cite{BCQ} and CQL \cite{CQL}. 
Recall that many safe online RL algorithms are not suitable for explorations in the auto-bidding problem as stated in Section \ref{section:related_work}, we compare the SER policy with $\pi_{e,\mathcal{N}}$ in our experiments.

\vspace{-4mm}
\subsection{Main Results}
\label{section:main_results}
\vspace{-1mm}
\textbf{To Answer Question (1):} We first conduct simulated experiments with the SORL, and the results during iterations are shown in Fig. \ref{fig:main_res}. 
Specifically, from Fig. \ref{fig:SORL_sim_explore}, we can see that the decline rate in BuyCnt of both the SER policy and the vanilla exploration policy are smaller than $5\%$, which verifies the safety of the SER policy. Moreover, the BuyCnt of the SER policy $\pi_{e,\tau}$ rises with iterations and is alway higher than that of $\pi_{e,\mathcal{N}}$, which indicates that the SER policy is more efficient. From Fig. \ref{fig:SORL_sim_target}, we can see that the BuyCnt of the auto-bidding policy $\mu_{\tau}$ trained with V-CQL 
is higher than that of BCQ, CQL and USCB, which indicates the effectiveness of the V-CQL.
Besides, the BuyCnt rises with the number of iterations and converges to the optimal BuyCnt (i.e., the BuyCnt of the optimal policy) at the $5$-th iteration. This validates the superiority of the whole SORL framework.
This indicates the effectiveness of the V-CQL method. For real-world experiments, we utilize $10,000$ advertisers to collect data from the RAS with $\pi_e$ and compare the auto-bidding policies in $4$ iterations on $1,500$ advertisers using A/B tests, and the results are shown in Table. \ref{table:SORL}. We can see that the performances of auto-bidding policies are getting better with iterations and exceeds the state-of-the-art algorithm, USCB, which validates the superiority of the whole SORL framework.

\begin{figure*}
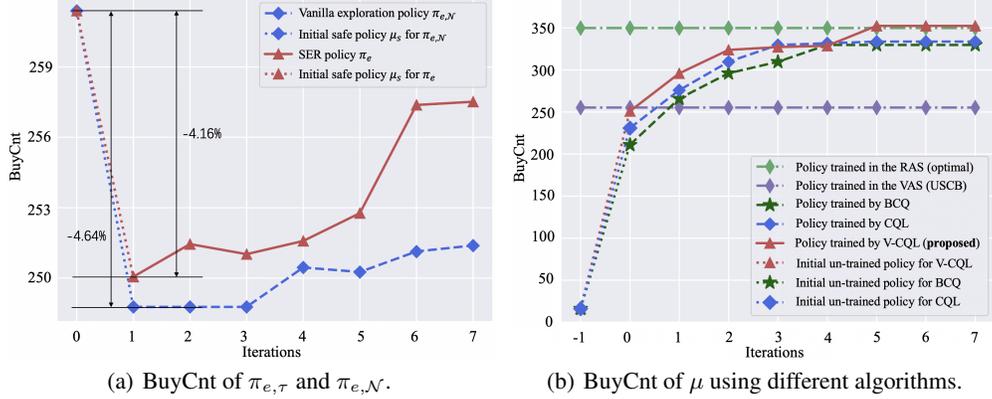

	\begin{center}
		\subfigure[BuyCnt of $\pi_{e,\tau}$ and $\pi_{e,\mathcal{N}}$.]{
			\label{fig:SORL_sim_explore}
			\includegraphics[width=6.5cm]{overall_1.png}}
		\subfigure[BuyCnt of $\mu$ using different algorithms.]{
			\label{fig:SORL_sim_target}
			\includegraphics[width=6.5cm]{overall_2.png}
		}
		\label{fig:main_res}
		\caption{The change of BuyCnt of both the SER policy $\pi_{e,\tau}$ and auto-bidding policy $\mu_{\tau}$ with iterations $\tau$ when applying the SORL framework.}
	\end{center}
\vspace{-2mm}
\end{figure*}

\begin{table}[h]
		\setlength{\belowcaptionskip}{-0.01cm}
	\caption{The results of SORL framework in the real-world experiments.}
	\centering
	\small
	\begin{tabular}{lllllllll}
		\toprule
		\multirow{2}{*}{Iterations}&\multicolumn{4}{c}{A/B Tests with safe policy $\mu_s$} & \multicolumn{4}{c}{A/B Tests with USCB }      \\   
		\cmidrule(r){2-5}		\cmidrule(r){6-9}
		&BuyCnt &ROI&CPA&ConBdg&BuyCnt&ROI&CPA&ConBdg\\
		\midrule
		$0$-th:$\;\mu_{0}$&\textbf{+3.21\%}&\textbf{+1.28\%}&\textbf{-2.01\%}&\textbf{+1.12\%}&\textbf{+3.20\%}&\textbf{+1.28\%}&\textbf{-2.01\%}&\textbf{+1.12\%}\\
		$1$-th:$\;\mu_{1}$&\textbf{+0.65\%}&\textbf{+1.96\%}&\textbf{-1.27\%}&-0.62\%&\textbf{+3.41\%}&\textbf{+2.88\%}&\textbf{-0.93\%}&\textbf{+2.45\%}\\
		$2$-th:$\;\mu_{2}$&\textbf{+0.47\%}&\textbf{+0.26\%}&\textbf{-0.13\%}&\textbf{+0.33\%}&\textbf{+3.57\%}&\textbf{+1.60\%}&\textbf{-0.98\%}&\textbf{+2.55\%}\\
		$3$-th:$\;\mu_{3}$&\textbf{+0.95\%}&\textbf{+3.20\%}&\textbf{-1.01\%}&\textbf{+0.06\%}&\textbf{+3.75\%}&\textbf{+2.48\%}&\textbf{-3.91\%}&-0.15\%\\
		\bottomrule
	\end{tabular}
	\label{table:SORL}
	\vspace{-3mm}
\end{table}

\subsection{Ablation Study}
\label{section:albation_study}
\vspace{-2mm}
\begin{wrapfigure}{r}{4.5cm}
	\vspace{-5mm}
		\setlength{\abovecaptionskip}{-0.1cm}
	\includegraphics[width=4.5cm]{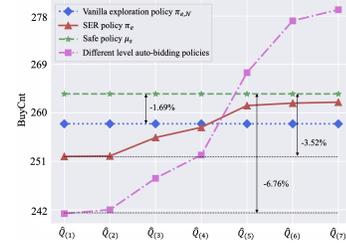}
	\caption{BuyCnt of $\pi_{e,\mathcal{N}}$, $\pi_{e}$.}
	\label{fig:safe_exploration}
	\vspace{-5mm}
\end{wrapfigure}
\textbf{To Answer Question (2):} 
We fully examine the safety of the SER policy using the Q function $\widehat{Q}$ of auto-bidding policies with different performance levels.
Specifically, 
in the simulated experiment, we utilize seven different versions of auto-bidding policies $\{\mu_{(k)}\}^7_{k=1}$ with Q functions $\{\widehat{Q}_{(k)}\}^7_{k=1}$ that have various performance levels to construct seven corresponding SER policies, where $V(\mu_{(1)})<V(\mu_{(2)})<...,V(\mu_{(7)})$. We also construct a vanilla exploration policy $\pi_{e,\mathcal{N}}$ for comparison. The hyper-parameters for all exploration policies are $\sigma=1, \lambda=0.1$.
We apply the $\pi_{e,\mathcal{N}}$ and seven SER policies to the simulated RAS, and the BuyCnt of explorations policies are shown in Fig. \ref{fig:safe_exploration}. 
We can see that the BuyCnt of the SER policy rises with the performance level of the auto-bidding policy. The worst BuyCnt of the SER policy drops about $3.52\%$ compared with $\mu_s$, which meets the safety requirement. Moreover, we can balance between the safety and efficiency of the SER policy by regulating the hyper-parameters $\sigma$ and $\lambda$. The corresponding results are shown in Appendix \ref{app:q2}.

\begin{wraptable}{r}{6cm}
	\vspace{-6.8mm}
	\caption{Real-world A/B tests between $\pi_e$ and $\mu_s$, as well as between $\pi_{e,\mathcal{N}}$ and  $\mu_s$.}
	\centering
	\small
	\begin{tabular}{lll}
		\toprule
		Methods&BuyCnt& ConBdg       \\   
		\midrule
		vanilla $\pi_{e,\mathcal{N}}$ &-3.32\%&+1.11\%\\
		SER policy $\pi_e$ &\textbf{-1.83\%}&\textbf{+0.67\%}\\
		\bottomrule
	\end{tabular}
	\label{table:safe_exploration}
	\vspace{-4mm}
\end{wraptable}
For real-world experiments, we apply $\pi_e$ and $\pi_{e,\mathcal{N}}$ to $1,300$ advertisers in the RAS and compare their performance to $\mu_s$ in A/B tests, as shown in Table. \ref{table:safe_exploration}. We can see that the varieties of the ConBdg and BuyCnt of $\pi_e$ are all with in $5\%$, and are better than those of $\pi_{e,\mathcal{N}}$, which indicates the safety of the SER policy.


\textbf{To Answer Question (3):} We leverage the V-CQL, CQL, BCQ and USCB to train auto-bidding policies under $100$ different random seeds in the simulated experiment, and the results are shown in Fig. \ref{fig:res_3_1}. We can see that the performance variance of the V-CQL is much smaller than those of other algorithms. At the same time, the average performance of the V-CQL can maintains at a high level. 
In addition, we leverage the V-CQL and CQL algorithm to train the auto-bidding policies based on the real-world data under $100$ different random seeds. To ensure fairness, we also utilize the USCB to train the auto-bidding policies in the VAS under exactly the same random seeds.
We carry out the OPE for these three sets of auto-bidding policies, as shown in  Fig. \ref{fig:res_3_2}.
Recall that $R/R*$ is the main

\begin{wraptable}{r}{5.4cm}
	\vspace{-2.7mm}
	\caption{Real-world A/B tests between the V-CQL and USCB under different random seeds.}
	\centering
	\small
	\label{table:random_seed}
	\begin{tabular}{llll}
		\toprule
		Seeds&BuyCnt& Seeds&BuyCnt     \\   
		\midrule
		\;\;\;1 &\textbf{+1.70\%}&\;\;\;6&\textbf{+1.94\%}\\
		\;\;\;2 &\textbf{+1.06\%}&\;\;\;7&\textbf{+1.38\%}\\
		\;\;\;3 &\textbf{+3.01\%}&\;\;\;8&\textbf{+1.19\%}\\
		\;\;\;4 &\textbf{+1.89\%}&\;\;\;9&\textbf{+0.31\%}\\
		\;\;\;5 &-0.53\%&\;\;10&\textbf{+0.71\%}\\
		\bottomrule
	\end{tabular}
	\label{table:}
	\vspace{-4mm}
\end{wraptable}
main metric used in the OPE for auto-bidding (see Appendix \ref{app:related_work}). 
We can see that the maximum $R/R^*$ of the V-CQL is much larger than that of the USCB, which indicates that the V-CQL is capable of training a better policy. Although the maximum $R/R*$ of the V-CQL and the CQL are about in the same level, the variance of $R/R*$ of the V-CQL is smaller than that of the CQL. 
For real-world experiments, 
we apply the auto-bidding policies trained by the V-CQL and USCB to $1,600$ advertisers in the A/B test for seven days in the RAS, and the average values of the metrics are shown in Table \ref{table:CQL_vs_UCSB}. We can see that the V-CQL outperforms USCB method in almost all metrics. Moreover, we present the A/B test results under $10$ random seeds in Table. \ref{table:random_seed}. We can see that the V-CQL outperforms the USCB under $9$ seeds, and only slightly worse under the other one seed.
All these results indicate that the V-CQL can really help to reduce the performance variance and increase the training stability, while keeping the average performance at a high level. 
\vspace{-1mm}
	\begin{table}[h]
			\setlength{\belowcaptionskip}{-0.01cm}
	\caption{Real-world A/B Tests between the V-CQL and USCB.}
	\centering
	\small
	\begin{tabular}{lllllllll}
		\toprule
		\multirow{2}*{Methods}&\multicolumn{4}{c}{USCB: newly trained policy} &   \multicolumn{4}{c}{USCB : safe policy $\mu_s$}             \\   
		\cmidrule(r){2-5}\cmidrule(r){6-9}
		& BuyCnt&ROI&CPA&ConBdg     & BuyCnt  &ROI &CPA&ConBdg    \\
		\midrule
		USCB & 40,926  & 3.90&20.71 &847,403.12&35,627&3.82&21.58&\textbf{768,832.64}  \\
		V-CQL    &\textbf{42,236} & \textbf{3.95} &\textbf{20.29}&\textbf{856,913.14}&\textbf{37,090} &\textbf{3.97}&\textbf{20.61}&764,467.39   \\
		\midrule
		varieties & \textbf{+3.20\%}&\textbf{+1.28\%}&\textbf{-2.03\%} &\textbf{+1.12\%}&\textbf{+4.11\%}&\textbf{+3.93\%}&\textbf{-4.49\%}&-0.57\%\\
		\bottomrule
	\end{tabular}
	\label{table:CQL_vs_UCSB}
	\vspace{-4mm}
\end{table}

\begin{figure*}
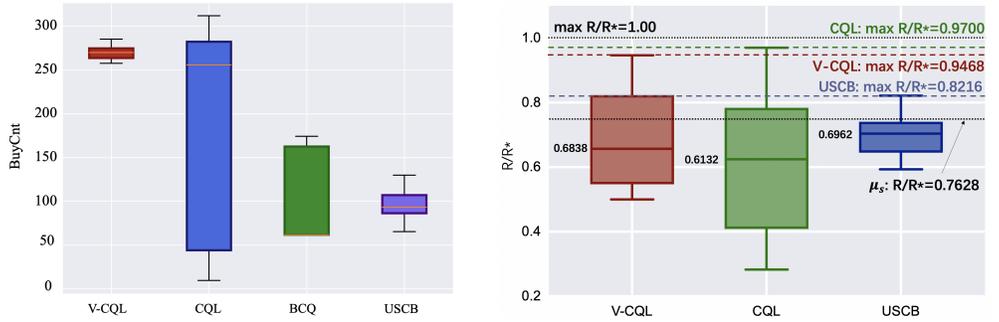

	\setlength{\abovecaptionskip}{-0cm}
	\begin{center}
		\subfigure[BuyCnt of the auto-bidding policy trained by different methods in the simulated RAS.]{
			\label{fig:res_3_1}
			\includegraphics[width=6cm]{variance.png}}
		\;\;\;
		\subfigure[OPE of the auto-bidding policy trained by different methods with real-world data.]{
			\label{fig:res_3_2}
			\includegraphics[width=6.5cm]{CQL_vs_USCB.png}
		}
		\label{fig:res_3}
		\caption{The performance of different methods under 100 random seeds.}
	\end{center}
\label{fig:large_variance}
	\vspace{-6mm}
\end{figure*}

\vspace{-4mm}
\section{Conclusions}
\label{sec:conclusion}
\vspace{-2mm}
In this paper, we study the auto-bidding problem in online advertisings. Firstly, we define the IBOO challenge in the  auto-bidding problem and systematically analyze its causes and influences. Then, to avoid the IBOO, 
we propose the SORL framework that can directly learn the auto-bidding policy with the RAS. Specifically, the SORL framework contains two main algorithms, including a safe and efficient online exploration policy, the SER policy, and an effective and stable offline training method, the V-CQL method. 
The whole SORL framework works in an iterative manner, alternating between online explorations and offline training.
Both simulated and real-world experiments validate the superiority of the whole SORL framework over the state-of-the-art auto-bidding algorithm. Moreover, the ablation study shows that the SER policy can guarantee the safety of explorations even under the guide of the auto-bidding policy with bad performance. The ablation study also verifies the stability of the V-CQL method under different random seeds.

\vspace{-3mm}

	\begin{ack}
		\vspace{-2mm}
	This work is supported by Alibaba Research Intern Program. 
	The authors would like to thank Mingyuan Cheng, Guan Wang, Zongtao Liu, Zhaoqing Peng, Lvyin Niu, Miao Xu, and Tianyu Wang for their valuable feedbacks and insightful discussions.

	\end{ack}
	{\small

}
	\section*{Checklist}

	The checklist follows the references.  Please
	read the checklist guidelines carefully for information on how to answer these
	questions.  For each question, change the default \answerTODO{} to \answerYes{},
	\answerNo{}, or \answerNA{}.  You are strongly encouraged to include a {\bf
		justification to your answer}, either by referencing the appropriate section of
	your paper or providing a brief inline description.  For example:
	\begin{itemize}
		\item Did you include the license to the code and datasets? \answerYes{See Section~\ref{gen_inst}.}
		\item Did you include the license to the code and datasets? \answerNo{The code and the data are proprietary.}
		\item Did you include the license to the code and datasets? \answerNA{}
	\end{itemize}
	Please do not modify the questions and only use the provided macros for your
	answers.  Note that the Checklist section does not count towards the page
	limit.  In your paper, please delete this instructions block and only keep the
	Checklist section heading above along with the questions/answers below.

	\begin{enumerate}

		\item For all authors...
		\begin{enumerate}
			\item Do the main claims made in the abstract and introduction accurately reflect the paper's contributions and scope?
			\answerYes{}
			\item Did you describe the limitations of your work?
			\answerYes{See Section \ref{sec:conclusion} and social impact in Appendix \ref{app:social_impact}.}
			\item Did you discuss any potential negative societal impacts of your work?
			\answerYes{We believe there is no negative societal impacts.}
			\item Have you read the ethics review guidelines and ensured that your paper conforms to them?
			\answerYes{ }
		\end{enumerate}

		\item If you are including theoretical results...
		\begin{enumerate}
			\item Did you state the full set of assumptions of all theoretical results?
			\answerYes{See Section \ref{sec:theory} and Appendix \ref{app:assumption}.}
			\item Did you include complete proofs of all theoretical results?
			\answerYes{See Appendix \ref{app:proposition} and Appendix \ref{app:theorem}.}
		\end{enumerate}

		\item If you ran experiments...
		\begin{enumerate}
			\item Did you include the code, data, and instructions needed to reproduce the main experimental results (either in the supplemental material or as a URL)?
			\answerYes{We include the codes for simulated experiments in Section \ref{sec:experiment} in the paper. The codes for real-world experiments are proprietary.}
			\item Did you specify all the training details (e.g., data splits, hyperparameters, how they were chosen)?
			\answerYes{}
			\item Did you report error bars (e.g., with respect to the random seed after running experiments multiple times)?
			\answerYes{}
			\item Did you include the total amount of compute and the type of resources used (e.g., type of GPUs, internal cluster, or cloud provider)?
			\answerYes{}
		\end{enumerate}

		\item If you are using existing assets (e.g., code, data, models) or curating/releasing new assets...
		\begin{enumerate}
			\item If your work uses existing assets, did you cite the creators?
			\answerNA{Our work uses own data, and the data used in the simulation experiments and real-world experiments is released in supplementarymaterial.}
			\item Did you mention the license of the assets?
			\answerNA{}
			\item Did you include any new assets either in the supplemental material or as a URL?
			\answerNA{}
			\item Did you discuss whether and how consent was obtained from people whose data you're using/curating?
			\answerNA{}
			\item Did you discuss whether the data you are using/curating contains personally identifiable information or offensive content?
			\answerNA{}
		\end{enumerate}

		\item If you used crowdsourcing or conducted research with human subjects...
		\begin{enumerate}
			\item Did you include the full text of instructions given to participants and screenshots, if applicable?
			\answerNA{}
			\item Did you describe any potential participant risks, with links to Institutional Review Board (IRB) approvals, if applicable?
			\answerNA{}
			\item Did you include the estimated hourly wage paid to participants and the total amount spent on participant compensation?
			\answerNA{}
		\end{enumerate}

	\end{enumerate}

	\newpage
	\setcounter{footnote}{0}
	\section*{Outline of the Supplementary Material}
	The Appendix is mainly organized as follows\footnote{Figures, tables and formulas in the appendix have all been renumbered. Unless specifically stated "in the manuscript", Fig. x, Table. x and (x) refer to the figure, table and formula with number x in the appendix, respectively. Nonetheless, the numbers of assumptions, propositions and theorems in the appendix are the same as in the manuscript. The references in the appendix refers to the references presented in the manuscript.}.
	\begin{itemize}
		\item Appendix \ref{app:motivations} presents more information on the backgrounds, as well as a further discussion on the motivation of this paper, i.e., IBOO problem in the auto-bidding.
		\begin{itemize}
			\item[-] Appendix \ref{app:ras_vas} presents the detailed structures of the RAS and VAS.
			\item[-] Appendix \ref{app:concept_as}  presents the details of the simulated advertising system experiments shown in Fig. \ref{fig:IBOO} in the manuscript.
			\item [-] Appendix \ref{app:ill_IBOO} presents the figures illustrating the causes and influence and IBOO.
			\item [-] Appendix \ref{app:ras} discusses the importance and universality of the IBOO problem.
			\end{itemize}
		\item Appendix \ref{app:related_work} represents the additional related works.
	\item Appendix \ref{app:assumption}, \ref{app:proposition} and \ref{app:theorem} provides the theoretical supports for the Lipschitz smooth property of the safety function.
\begin{itemize}
	\item[-] Appendix \ref{app:assumption_1} and \ref{app:assumption_2} provide rationality explanations on Assumption \ref{assumption:uniform} and \ref{assumption:bounded_Q}, respectively.
	\item[-] Appendix \ref{app:proposition_1} provide proofs of Proposition  \ref{proposition:form}, respectively.
	\item[-] Appendix \ref{app:theorem_1}, \ref{app:theorem_2} and \ref{app:theorem_3} provide proofs of Theorem \ref{thm:r_smooth}, \ref{thm:Q_smooth} and \ref{thm:lower_bound}, respectively.
\end{itemize}
\item Appendix \ref{app:SORL} presents the detailed interpretations, implementations and derivations of the SORL framework.
		\begin{itemize}
		
		\item[-] Appendix \ref{app:SORL_SER}  presents the derivations and practical implementations on the SER policy.
		\begin{itemize}
			\item [- -] Appendix \ref{app:exploration_derivation} provides the derivations of the SER policy.
			\item[- -] Appendix \ref{app:practical_implementations} provides an implementation of the SER policy in practice.
			\item[- -] Appendix \ref{app:safety_explain} describes more on the safety requirement in  auto-bidding.
		\end{itemize}
	\item[-] Appendix \ref{app:V-CQL} presents the  motivations of the design on the V-CQL, and the  complete implementation on the V-CQL, as well as its  interpretation and relations to previous works.
		\begin{itemize}
		\item [- -] Appendix \ref{app:V-CQL_1} presents the motivations of the design on the V-CQL.
		\item[- -] Appendix \ref{app:V-CQL_2} presents the complete implementation on the V-CQL, as well as its  interpretation and relations to previous works.
	\end{itemize}
		\item[-] Appendix \ref{app:sorl} shows the pseudocode of the SORL framework.
	\end{itemize}
\item Appendix \ref{app:experiments} presents the additional settings and results of the experiments.
\begin{itemize}
	\item [-] Appendix \ref{app:experiment_params} shows the experiment setup.
	\item [-] Appendix \ref{app:additional_results} presents the additional experiment results to validate the effectiveness of the proposed SORL algorithm.
	\begin{itemize}
		\item [- -] Appendix \ref{app:q1} compares our V-CQL method to more popular offline RL algorithms, including, BCQ, CQL($\rho$), etc., which can act as an ablation study.
		\item[- -] Appendix \ref{app:q2} shows the effect of hyper-parameters $\sigma$ and $\lambda$ on the SER policy.
		\item[- -] Appendix \ref{app:q3} shows the detailed experiment data on the A/B test of the SORL framework.
		\item[-] Appendix \ref{app:q4} shows the comparison between our approach and the multi-agent auto-bidding algorithm.
	\end{itemize}
\end{itemize}
\item Appendix \ref{app:social_impact} presents the broader impact of this paper.
	\end{itemize}

	\newpage
		\sectionfont{\huge}
	\section*{Appendix}
	\appendix
	\sectionfont{\Large}
	\setcounter{equation}{0}
	\setcounter{theorem}{0}
\setcounter{assumption}{0}
\setcounter{figure}{0}
\setcounter{table}{0}
		
	\section{Backgrounds and Motivations}
	\label{app:motivations}
	In this section, we provide more information on the application backgrounds, including the detailed structures of the RAS and VAS, the structures of the simulated advertising system. We also discuss the importance and universality of the IBOO problem in auto-bidding, which acts as the motivation of this work.
	\subsection{Detailed Structures of the RAS and VAS}
	\label{app:ras_vas}
	Fig. \ref{fig:ras_vas} shows the detailed structures of the RAS and VAS. Particularly, the VAS is built based on the historical data of advertisers during bidding in stage 2 of the RAS. The VAS can interact with any auto-bidding policies, while the RAS cannot due to safety concerns.
	
		\begin{figure}[h]
		\centering
		\includegraphics[width=13.5cm]{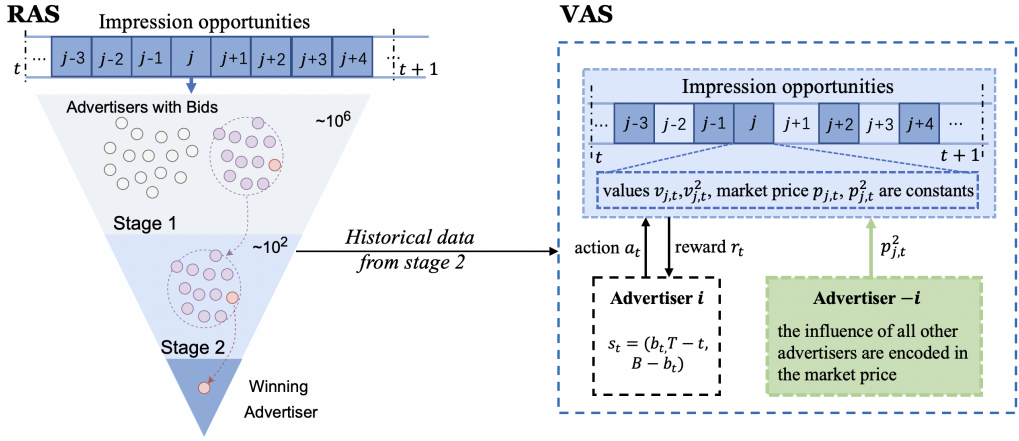}
		\caption{The RAS and VAS detailed structures, where the auction of an impression opportunity is completed in two stages in the RAS, and the advertiser in the VAS competes for each impression opportunity stored in the historical data from stage 2 with the stored market price.}
		\label{fig:ras_vas}
	\end{figure}

	\subsubsection{Structures of RAS} 
	\label{app:ras_vas_ras}
	In the RAS, the auction of each impression opportunity is completed through two stages. Specifically, consider an impression opportunity $j$ coming between time step $t$ and $t+1$ with value $v_{j,t}$. All advertisers give the bids $a_{i,t}, i\in\mathbf{I}$ based on their current states, where $\mathbf{I}$ denotes the set of all advertisers. At stage 1, the RAS roughly calculates the value $v^1_{i,j,t}$ of impression opportunity $j$ with respect to each advertiser $i$, and compare the \emph{effective cost per mile } (eCPM) value of all advertisers. The eCPM value of advertiser $i$ is defined as $a_{i,t}v^1_{i,j,t}$. A market price of impression opportunity $j$ is given by the RAS $p^1_{j,t}$ in stage 1, and advertisers with eCPM larger than $p^1_{j,t}$ can successfully enter the stage 2. Denote the set of advertisers entering stage 2 as $\mathbf{I}_{j}^2$.
	At stage 2, the RAS accurately evaluates the value $v^2_{i',j,t}$ of impression opportunity $j$ with respect to each advertiser $i'\in\mathbf{I}_{j}^2$. The advertiser with the largest eCPM $a_{i',t}v^2_{i',j,t}$ wins the impression $j$. Define the market price in stage 2 as the second highest eCPM among all advertisers, i.e.,
	\begin{align}
		p^2_{j,t}\triangleq\max_{i'\in\mathbf{I}^2_j,i'\neq\arg\max_{k}a_{k,t}v^2_{k,j,t}}a_{i',t}v^2_{i',j,t}.
	\end{align}
The winning advertiser earns the true value $v_{j,t}$ of impression $j$ and pays its market price given by the RAS $p_{j,t}$.

Note that we study the auto-bidding problem from the perspective of a single advertiser. Hence, in the manuscript (especially in Proposition \ref{proposition:form}), we omit the subscript $i$ and $i'$ in the values of impression opportunity $j$ in stage 1, $v_{j,t}^1$, and stage 2, $v^2_{j,t}$, respectively. We also omit the subscript $i$ in action $a_t$ in the manuscript.
	\subsubsection{Structures of VAS}
	\label{app:ras_vas_vas}
	The VAS is built only based on the historical data of advertisers during bidding in stage 2 of the RAS. {This is because the amount of data generated in stage 1 of the RAS is very large (about $10^2$ to $10^4$ impression opportunities coming to stage 1 at every moment, and about $10^6$ advertisers bidding for each impression), and it is computationally infeasible to build the VAS based on data in stage 1 of the RAS.}
	
	There exists some impression opportunities that are not in the VAS, such as impression opportunities $j-2$, $j+1$ and $j+3$ as shown in the VAS in Fig. \ref{fig:ras_vas}. This is because that the advertiser does not enter the stage 2 during the bidding of these impression opportunities in the RAS. In the RL training process, the advertiser at each time step $t$ bids $a_t$ based on the current state $s_t$, and wins the impression opportunity $j$ with value $v_{j,t}$ if $a_tv^2_{j,t}>p_{j,t}^2$, or loses otherwise. Once winning the impression opportunity $j$, the advertiser earns the value $v_{j,t}$ as the and pays the market price $p_{j,t}$. Note that the influence of all other advertisers are encoded in the market price $p^2_{j,t}$. However, the values $v^2_{j,t}$, $v_{j,t}$ and market prices $p_{j,t}^2$, $p_{j,t}$ are all constants during the RL training process.

	\subsection{Simulated Advertising System Experiments}
		\label{app:concept_as}
		We construct a simulated advertising system s-RAS and build a simulated virtual advertising system s-VAS based on it. Specifically, the s-RAS is composed of two consecutive stages, where the auction mechanisms resemble those in the RAS. We consider the bidding process in a day, where the episode is divided into 96 time steps. Thus, the duration between any two adjacent time steps $t$ and $t+1$ is 15 minutes. The number of impression opportunities between time step $t$ and $t+1$ fluctuates from $100$ to $500$.
		Detailed parameters in the s-RAS is shown in Table. \ref{table:config_IBOO}.
		
		\begin{table}[h]
			\caption{The parameters used in the s-RAS.}
			\centering
			\begin{tabular}{ll}
				\toprule
				\makecell[l]{Parameters}&\makecell[c]{Values}  \\   
				\midrule
				\makecell[l]{Number of advertisers}    &\makecell[c]{$100$}\\
			\makecell[l]{Time steps in an episode, $T$} &\makecell[c]{$96$} \\
			\makecell[l]{Minimum number of impression opportunities $N_t$} &\makecell[c]{$100$} \\
				\makecell[l]{Maximum number of impression opportunities $N_t$} &\makecell[c]{$500$} \\
					\makecell[l]{Minimum budget} &\makecell[c]{$31,000$ Yuan} \\
						\makecell[l]{Maximum budget} &\makecell[c]{$36,000$ Yuan} \\
	\makecell[l]{Value of impression opportunities in stage 1, $v_{j,t}^1$} &\makecell[c]{$0\sim 1$} \\			
		\makecell[l]{Value of impression opportunities in stage 2, $v_{j,t}^2$} &\makecell[c]{$0\sim 1$} \\			
			\makecell[l]{Minimum bidding price, $A_\text{min}$} &\makecell[c]{$0$ Yuan} \\
				\makecell[l]{Maximum bidding price, $A_\text{max}$} &\makecell[c]{$1,000$ Yuan} \\					
					\makecell[l]{Maximum value of impression opportunity, $v_M$} &\makecell[c]{$1$} \\
				\makecell[l]{Maximum market price, $p_M$} &\makecell[c]{$1,000$ Yuan} \\					
				\bottomrule
			\end{tabular}
			\label{table:config_IBOO}
		\end{table}
	
We adopt the standard RL algorithm, DDPG \cite{DDPG}, to train the auto-bidding policy of an advertiser in the s-RAS while keeping the policies of all other $99$ advertisers fixed. Obviously, all other advertisers are viewed as parts of the environment with respect to the training advertiser. Fixing other advertisers' policies makes the environment stationary.
The hyper-parameters used in the DDPG are shown in Table. \ref{table:config_IBOO_DDPG}. In addition, the s-VAS is built based on the historical data of the advertiser when bidding in stage 2 of the s-RAS, including the indexes of impression opportunities and the corresponding values and market prices. Moreover, an offline dataset is construct by collecting data directly from the s-RAS. We train the auto-bidding policy with the s-VAS and the offline dataset using the DDPG, where the hyper-parameters are the same as those in Table. \ref{table:config_IBOO_DDPG}. The differences in impression opportunities and market prices between the s-RAS and s-VAS are shown in Fig. \ref{fig:IBOO_pv_changing} in the manuscript and Fig. \ref{fig:IBOO_market_price_changing} in the manuscript, respectively. The RL rewards of these three settings are shown in Fig. \ref{fig:IBOO_rewards} in the manuscript.

	\begin{table}[h]
	\caption{The hyper-parameters of DDPG when training with the s-RAS, s-VAS and the offline dataset.}
	\centering
	\begin{tabular}{ll}
		\toprule
		\makecell[l]{Hyper-parameters}&\makecell[c]{Values}  \\   
		\midrule
		\makecell[l]{Optimizer}    &\makecell[c]{Adam}\\
		\makecell[l]{Learning rate for critic network}    &\makecell[c]{$1\times 10^{-4}$}\\
			\makecell[l]{Learning rate for actor network}    &\makecell[c]{$1\times 10^{-4}$}\\
				\makecell[l]{Soft updated rate }    &\makecell[c]{$0.01$}\\
					\makecell[l]{Buffer size }    &\makecell[c]{$1000$}\\
						\makecell[l]{Sampling size }    &\makecell[c]{$200$}\\
	\makecell[l]{Discounted factor $\gamma$ }    &\makecell[c]{$0.99$}\\
	\makecell[l]{Random seeds }    &\makecell[c]{$1\sim 16$}\\
		\makecell[l]{Exploration actions}    &\makecell[c]{Gaussian noise with variance $0.01$}\\
		\bottomrule
	\end{tabular}
	\label{table:config_IBOO_DDPG}
\end{table}
	
	\subsection{Illustrations of IBOO}
	\label{app:ill_IBOO}
	Fig. \ref{app:ill_IBOO} illustrates the dominated gaps and influence of the IBOO.
	\begin{figure}[t]
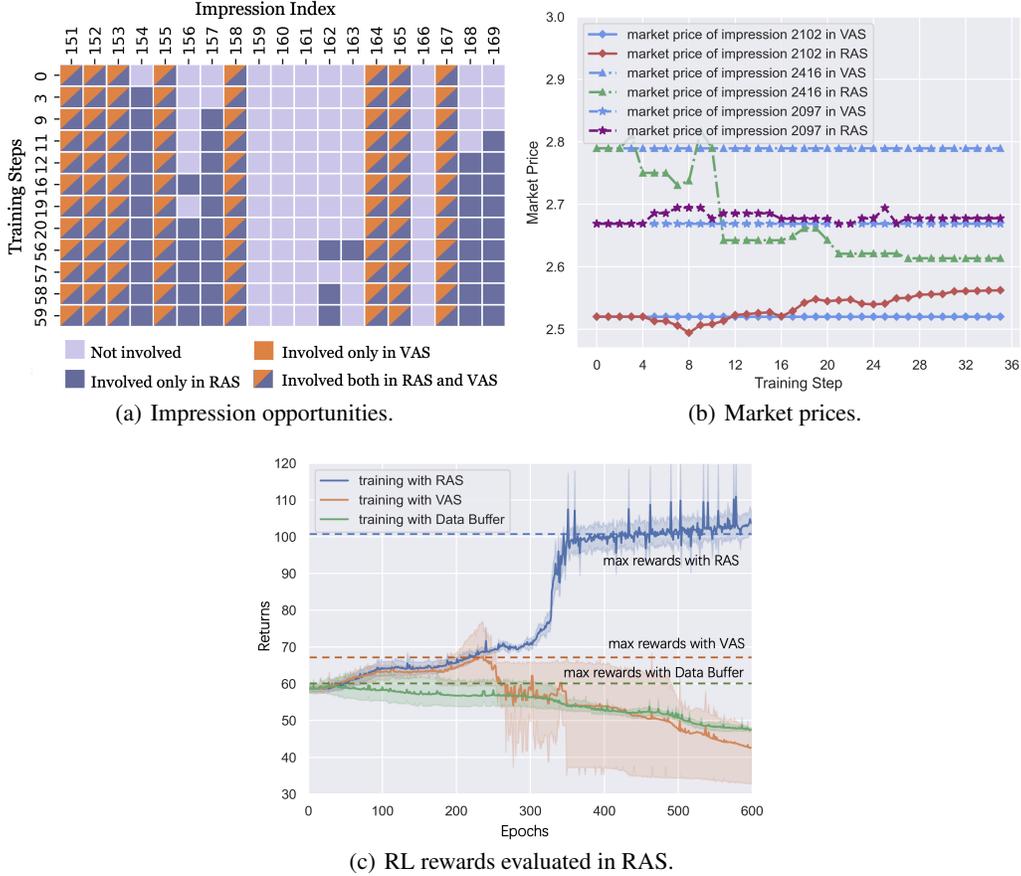

		\centering
		\subfigure[Impression opportunities.]
		{\label{fig:IBOO_pv_changing}
			\includegraphics[scale=0.11]{pv_changing.png}}
		\subfigure[Market prices.]
		{
			\label{fig:IBOO_market_price_changing}
			\includegraphics[scale=0.11]{market_price.png}
		}
		\subfigure[RL rewards {evaluated in RAS}.]
		{
			\label{fig:IBOO_rewards}
			\includegraphics[scale=0.11]{rewards_compare.png}
		}
		\caption{Dominated gaps and IBOO influence. (a) shows that the impression opportunities within a certain period involving in the bidding of an advertiser remains the same in the VAS, but changes in {stage 2 of the RAS} during RL training process. (b) shows that the market prices remain the same in the VAS, but can rise, decrease or fluctuate around during RL training. (c) shows that auto-bidding policies {training with the RAS can achieve higher rewards than those training with the VAS and a fixed data buffer using traditional RL methods.}}
		\label{fig:IBOO}
	\end{figure}
		\subsection{Importance and Universality of IBOO Problem}
	\label{app:ras}
		As stated in the manuscript, it was previously believed that training auto-bidding policies directly in the RAS is nearly impossible due to safety concerns.
	State-of-the-art auto-bidding policies training in the VAS face IBOO problems which can largely degrade their performance in the RAS. Solving the IBOO problem in the auto-bidding acts as the motivation of our work. Here, we further discuss the importance and universality of this motivation.
	
	\textbf{Importance.} Online advertising business has become one of the main profit models for many companies, such as Google, Amazon, Alibaba, etc. In 2021, Google's online advertising revenue accounts for 82\% of the total, and online advertising revenue in Alibaba accounts for over 90\% of the total. At the same time, online advertising business also offers clients, acting as advertisers, a good chance to increase the ROI. Hence, online advertising business plays an important role for both companies and advertisers in the era of Internet. Recently, auto-bidding technique has become one of the most important tools for advertisers to lift up their ROI. However, state-of-the-art auto-bidding policies leveraging RL algorithms suffer from the IBOO problem. The IBOO can be significant when the gap between the optimal auto-bidding policy and the auto-bidding policy used for collecting data to construct the VAS is large. This is because many impression opportunities and corresponding information on values and market prices in the \emph{optimal VAS}\footnote{For convenience, we name the VAS built based on the data collected by the optimal auto-bidding policy as the optimal VAS.} are missing in the VAS. Hence, the auto-bidding policy cannot know how to behave on these unseen impression opportunities, and the improvement of the auto-bidding policy can be limited. 
	
	\textbf{Universality.}
	The IBOO problem does not exist only in the realm of auto-bidding. Actually, it exists in many other fields such as robotics \cite{sim2real_robot}, thermal power generating \cite{sim2real:thermal_power}, and even computer visions \cite{sim2real_cv}, where the real-world environment cannot be accessed during RL training process and a virtual environment is needed. 
	In these fields, the IBOO problem is usually known as the \emph{sim2real} problem. Although many algorithms have been proposed to mitigate the IBOO (or sim2real) problem, it still remains a major challenge in the RL applications.

	\section{Related Work}
	\label{app:related_work}

In addition, to avoid IBOO, one may consider training auto-bidding policies with traditional RL algorithms based on the data collected by {some} safe {policies} directly from the RAS. However, this {approach} will {suffer from extrapolation errors} that can seriously degrade the {policy's performance in RAS \cite{BCQ, offline_review}.} {As shown in Fig. \ref{fig:IBOO_rewards}, the expected cumulative rewards of traditional RL method \cite{DDPG} training with a fixed data buffer are lower than those of traditional RL method training with RAS and VAS.}
Though we can leverage offline RL techniques \cite{BCQ, CQL, BCQ_1, BCQ_2} to mitigate {this challenge}, 
{we cannot guarantee that the collected data contains sufficient transitions from high-reward regions \cite{offline_review}. This will strain the capacity of the offline RL algorithms to train near-optimal auto-bidding policies. Thus, extra data collections with different behavior policies (presumably better behavior policies) from the RAS are required. The policies trained by the offline RL methods cannot be directly used as behavior policies for data collections in the RAS, since}
the performance variance of the trained policies can be large \cite{parameter}.
Off policy evaluations (OPE{s}) {have been recently studied for selecting policies with good performance without applying them to real-world environments} \cite{OPE, parameter_selection}. However, {existing} OPE{s} in auto-bidding {are usually conducted in the VAS and can be inaccurate (see the last paragraph in this section).}
{Therefore}, we may still have little confidence to widely apply the policies {trained by offline RL methods} to advertisers for further online explorations in the RAS, even though they are proven to perform well by OPE.
Note that there exist {some} safe online RL methods for {safely} exploring in the environments  \cite{CSC,safeRL:query,learning-based,liyapnouv,icrl,irl}.
{However, they are either developed based on the constraints that are not suitable for the auto-bidding problem or designed for systems with specific assumptions.}
Recently, with the development of offline RL {methods}, many algorithms for efficient online explorations on the premise of having an offline dataset  \cite{AWAC, O2O} have emerged. However, they often focus on the efficiency of RL training process rather than the safety of explorations. 
	
	\textbf{Safe Online RL.}
	\cite{6} realize safe explorations by adding a safety layer at the end of the actor network. However, the safety layer needs to be trained by a prior dataset and can be inaccurate at states outside the dataset. \cite{7} uses offline safety tests to examine the safety of the latest policy and directly applies it to explore online if it passes the tests. However, as we stated in Appendix G.2, there is no such reliable offline safety tests in auto-bidding. Besides, \cite{liyapnouv} realizes safe explorations by gradually increasing the attraction regions of the initial safe policy. However, it leverages the assumption that the environment is a linear model, which is not suitable for auto-bidding. As for this paper, based on the proved Lipschitz property of Q functions, we design the exploration policy by offsetting the actions to the promising directions relative to an initial safe policy.

	\textbf{Extrapolation Error.}
	Extrapolation error means the misestimation of the states and actions outside the fixed dataset. A typical misestimation happens to the Q function in the standard RL algorithm. The fixed dataset cannot contain all the data from the environment, since the amount of all the data can usually be infinite. Hence, the trained Q function can only be accurate at the states and actions inside the dataset and can be inaccurate (usually overestimated) at those outside the dataset. This will make the actor network learn actions that extremely deviate from the behavioral actions and often bias towards bad actions. Hence, the policy performance can be seriously degraded. 
	Offline RL algorithms usually address this challenge {in three ways}, including \emph{policy constraint} methods, 
	where explicit or implicit constraints are directly imposed to policies,
	such as BCQ \cite{BCQ}, BEAR \cite{BCQ_1}, and \emph{conservative regularization} methods, 
	where penalties for out-of-distribution (OOD) actions are imposed to the Q function, 
	such as CQL \cite{CQL}, BRAC \cite{BRAC}, as well as \emph{modifications of imitation learning} method \cite{onestep} such as ABM \cite{BCQ_2}, CRR \cite{CRR}, BAIL\cite{BAIL}.
		
		\textbf{OPE in Auto-bidding.}
		Generally, the OPE used in auto-bidding is evaluate the auto-bidding policies in a VAS which is built based on the historical data of hundreds of advertisers. In the VAS, as we can know all the impression opportunities as well as their values and market prices in advance, we can calculate the optimal bids using linear programming \cite{RL:USCB}. Hence, the optimal accumulated rewards can be obtained. We define the ratio between the accumulated reward of the evaluated policy and the optimal accumulated rewards as $R/R^*$, which acts an important metric in the OPE of auto-bidding. The range of $R/R^*$ is $[0,1]$. The closer the value of $R/R^*$ to 1, the better the performance of the evaluated auto-bidding policy. However, due to the IBOO, this common OPE method is not very accurate. Specifically, a low value of $R/R^*$ (below $0.7$) can indicate a poor performance of the evaluated auto-bidding policy, while a large value of $R/R^*$ (above $0.8$) does not indicate that the evaluated auto-bidding can certainly perform well in the RAS. Nonetheless, auto-bidding policies with higher $R/R^*$ are more likely to perform well in the RAS than those with lower $R/R^*$.
	
	\section{Rationality Analysis of Assumptions}
	\label{app:assumption}
	\subsection{Rationality of Assumption 1}
		\label{app:assumption_1}
	\begin{assumption}[\textnormal{Bounded Impression Distributions}]
		Between time step $t$ and $t+1$, we assume the numbers of winning impressions with action $a_t$ in the first stage $n_{t,1}$ and the second stage $n_{t,2}$ can both be bounded by linear functions, i.e., $n_{t,1}\le k_1a_t, n_{t,2}\le k_2a_t$, where $k_1,k_2>0$ 
	are constants.
	\end{assumption}
In a stable RAS, the amount of increased (or decreased) winning impression opportunities for an advertiser when increasing (or reducing) the bids $a_t$ within any time step $t$ and $t+1$ in both stage 1 and 2 will not change dramatically. Otherwise advertisers can largely increase the number of winning opportunities by slightly raising the bids, which can make the RAS unstable. Hence, there exist linear functions that can bound the changes in the amount of winning impression opportunities in both stage 1 and 2, where the slopes $k_1$ and $k_2$ have limited values (that usually are not very large). Fig. \ref{fig:assumption_1} illustrates this assumption with the data generated in an bidding episode in the s-RAS.

\begin{figure}[h]
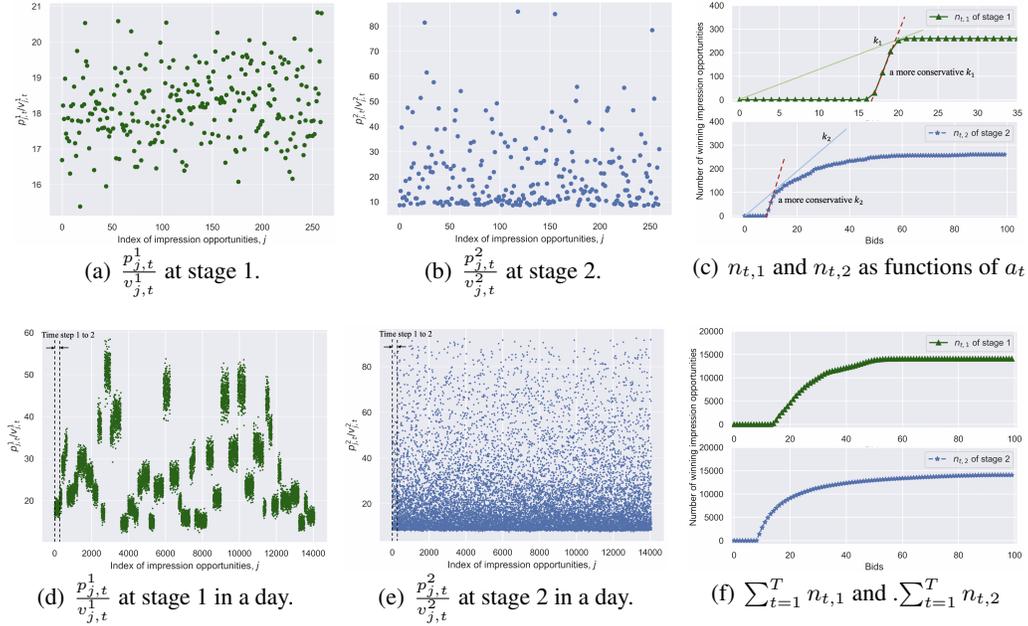

	\centering
	\subfigure[$\frac{p_{j,t}^1}{v_{j,t}^1}$ at stage 1.]
	{\label{fig:assumption_1_1}
		\includegraphics[scale=0.07]{assumption_1_1.png}}
	\subfigure[$\frac{p_{j,t}^2}{v_{j,t}^2}$ at stage 2.]
	{
		\label{fig:assumption_1_2}
		\includegraphics[scale=0.07]{assumption_1_2.png}
	}
	\subfigure[$n_{t,1}$ and $n_{t,2}$ as functions of $a_t$]
	{
		\label{fig:assumption_1_3}
		\includegraphics[scale=0.07]{assumption_1_3.png}
	}

	\subfigure[$\frac{p_{j,t}^1}{v_{j,t}^1}$ at stage 1 in a day.]
{\label{fig:assumption_1_4}
	\includegraphics[scale=0.07]{assumption_1_4.png}}
\subfigure[$\frac{p_{j,t}^2}{v_{j,t}^2}$ at stage 2 in a day.]
{
	\label{fig:assumption_1_5}
	\includegraphics[scale=0.07]{assumption_1_5.png}
}
\subfigure[$\sum_{t=1}^Tn_{t,1}$ and .$\sum_{t=1}^Tn_{t,2}$]
{
	\label{fig:assumption_1_6}
	\includegraphics[scale=0.07]{assumption_1_6.png}
}
	\caption{The fractions between the market price and value of impression opportunities as well as the number of winning impression opportunities changing with the bids $a_t$. (a) to (c) present the data between time step $1$ and $2$, and (d) to (e) displays the data of the whole episode (a day).}
	\label{fig:assumption_1}
\end{figure}

	\subsection{Rationality of Assumption 2}
		\label{app:assumption_2}
	\begin{assumption}[\textnormal{Bounded Partial Derivations of $Q^\mu(s_t,a_t)$}]
		We assume that the partial derivation of $Q^\mu(s_t,a_t)$ with respect to $s_t(1)$ and $s_t(3)$ is bounded, i.e., $\big\vert\frac{\partial Q^\mu(s_t,a_t)}{\partial s_t(1)}\big\vert\le k_3$ and $\big\vert\frac{\partial Q^\mu(s_t,a_t)}{\partial s_t(3)}\big\vert\le k_4$, where $k_3, k_4>0$ are constants.
	\end{assumption}
Recall that the value of the Q function at state $s_t$ and action $a_t$ represents the total value of winning impression opportunities starting from state $s_t$, bidding with $a_t$ and following policy $\mu$ afterwards. In a stable RAS, this total value of winning impression opportunities will not increase (or drop) dramatically when the advertiser slightly increases (or reduces) the budget. Similarly, $Q^\mu(s_t,a_t)$ will not extremely change if the advertiser spends a little more budget before time step $t$. Hence, the absolute values of the partial derivatives of $Q^\mu(s_t,a_t)$ with respect to the budget left $s_t(1)$ and the consumed budget $s_t(2)$ have limited values (that are usually not very large). We denote the upper bounds of $\big\vert\frac{\partial Q^\mu(s_t,a_t)}{\partial s_t(1)}\big\vert$ and $\big\vert\frac{\partial Q^\mu(s_t,a_t)}{\partial s_t(3)}\big\vert$ as $k_3$ and $k_4$, respectively.

	\section{Proofs of Propositions}
	\label{app:proposition}
	\setcounter{proposition}{0}


	\subsection{Proofs of Proposition 1}
		\label{app:proposition_1}
			\setcounter{proposition}{0}
			\begin{proposition}[\textnormal{Analytical expressions of $R$, $C$ and $\mathbb{P}$}] 
			Based on the characteristic of the two-stage cascaded auction in the RAS, we can formulate the reward function $R$ as $r_t(s_t,a_t)=\sum_{j}\mathbbm{1}\{a_tv^1_{j,t}\ge p^1_{j,t},a_tv^2_{j,t}\ge p^2_{j,t}\}v_{j,t}$, the constraint function $C$ as $c_t(s_t,a_t)=\sum_{j}\mathbbm{1}\{a_tv^1_{j,t}\ge p^1_{j,t},a_tv^2_{j,t}\ge p^2_{j,t}\}p_{j,t}$, and the state transition rule $\mathbb{P}$ as $s_{t+1}=s_{t}+[\triangle s_t(1),\triangle s_t(2),\triangle s_t(3)]$, where $\triangle s_t(1)=-\triangle s_t(3)=-\sum_{j}\mathbbm{1}\{a_tv^1_{j,t}\ge p^1_{j,t},a_tv^2_{j,t}\ge p^2_{j,t}\}p_{j,t}$ and $\triangle s_t(2)=-1$.
			Note that $p^1_{j,t}$ and $v^1_{j,t}$ denote the market price and rough value of impression $j$ in stage 1, and $p^2_{j,t}$  and $v^2_{j,t}$ denote the market price and accurate value of impression $j$ in stage 2.
		\end{proposition}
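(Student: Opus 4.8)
The plan is to derive each of the three objects directly from the two-stage cascaded auction mechanism of the RAS, specialized to the single-advertiser viewpoint (so the bidder subscript is suppressed, as the paper does). The pivotal quantity to pin down is the event that the advertiser wins impression opportunity $j$ arriving between time steps $t$ and $t+1$; once this event is written as an indicator, the reward, the constraint, and the state update all follow by bookkeeping.

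First I would characterize the winning event. By the cascade mechanism, the advertiser wins $j$ if and only if it both passes stage 1 and then wins stage 2. Passing stage 1 means its stage-1 eCPM clears the stage-1 market price, i.e. $a_t v^1_{j,t} \ge p^1_{j,t}$. Conditional on entering stage 2, the advertiser wins the second-price auction iff it holds the highest stage-2 eCPM; since the stage-2 market price $p^2_{j,t}$ is by definition the second-highest eCPM, from the single-agent standpoint we read $p^2_{j,t}$ as the maximum competing eCPM, so the stage-2 winning condition becomes $a_t v^2_{j,t} \ge p^2_{j,t}$. The joint event is therefore exactly the conjunction, yielding the product indicator $\mathbbm{1}\{a_t v^1_{j,t} \ge p^1_{j,t}, a_t v^2_{j,t} \ge p^2_{j,t}\}$. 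With the winning indicator in hand, the reward between $t$ and $t+1$ is the total true value of the impressions won, i.e. the sum over $j$ of this indicator times $v_{j,t}$; the constraint (cost) is the total payment, the same sum with $p_{j,t}$ replacing $v_{j,t}$. For the transition rule I would invoke the state definition $s_t = [b_t, T-t, B-b_t]$ and plain accounting: the budget spent in this interval equals $c_t(s_t,a_t)$, so the budget-left coordinate drops by $c_t$, giving $\triangle s_t(1) = -c_t(s_t,a_t)$; the budget-consumed coordinate rises by the same amount, giving $\triangle s_t(3) = +c_t(s_t,a_t) = -\triangle s_t(1)$; and the time-left coordinate decreases by one step, giving $\triangle s_t(2) = -1$. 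Substituting the closed form of $c_t$ produces the stated expression.

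I expect the only delicate step to be the justification of the stage-2 winning condition, because $p^2_{j,t}$ is originally a maximum taken over competitors \emph{excluding} the winner, which entangles it with the identity of the winner and risks circularity. The resolution is the single-advertiser abstraction used throughout the paper: from the target advertiser's perspective the competitors are exogenous, so $p^2_{j,t}$ is simply the largest competing stage-2 eCPM, and ``holding the highest eCPM'' is equivalent to ``out-bidding this maximum,'' i.e. $a_t v^2_{j,t} \ge p^2_{j,t}$, with ties broken in the bidder's favor to match the non-strict inequality in the indicator. The remaining steps are elementary value and budget accounting and require no further subtlety.
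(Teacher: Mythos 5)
Your proposal is correct and follows essentially the same route as the paper's proof: identify the winning event as the conjunction of clearing stage 1 ($a_t v^1_{j,t}\ge p^1_{j,t}$) and winning stage 2 ($a_t v^2_{j,t}\ge p^2_{j,t}$), then obtain $R$, $C$, and $\mathbb{P}$ by summing values and payments over won impressions and updating the budget and time coordinates accordingly. Your extra care in justifying the stage-2 condition --- reading $p^2_{j,t}$ as the maximum competing eCPM from the single-advertiser viewpoint to avoid the circularity in its second-price definition --- is a point the paper's proof passes over silently, and it strengthens rather than changes the argument.
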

		\begin{proof}
As stated in Appendix A.1, the auction is completed in two cascaded stages. The condition to win the impression opportunity $j$ for an advertiser is that

\begin{itemize}
	\item successfully passing the stage 1, i.e., $a_tv^1_{j,t}\ge p^1_{j,t}$, and
	\item bidding the highest in stage 2, i.e., $a_tv^2_{j,t}\ge p^2_{j,t}$
\end{itemize} 
hold at the same time. Hence, the reward function which is the total value of winning impression opportunities between time step $t$ and $t+1$ can be expressed as 
	\begin{align}
	r_t(s_t,a_t)=\sum_{j}\mathbbm{1}\bigg\{a_tv^1_{j,t}\ge p^1_{j,t},a_tv^2_{j,t}\ge p^2_{j,t}\bigg\}v_{j,t},
\end{align}
and the cost function can be expressed as 
	\begin{align}
	c_t(s_t,a_t)=\sum_{j}\mathbbm{1}\bigg\{a_tv^1_{j,t}\ge p^1_{j,t},a_tv^2_{j,t}\ge p^2_{j,t}\bigg\}p_{j,t}.
\end{align}
The amount of the consumed budget between time step $t$ and $t+1$ can be expressed as
			\begin{align}
				b_t-b_{t+1}=-\triangle s_t(1)=\triangle s_t(3)=\sum_{j}\mathbbm{1}\{a_tv^1_{j,t}\ge p^1_{j,t},a_tv^2_{j,t}\ge p^2_{j,t}\}p_{j,t}.
			\end{align} 
			Besides, $\triangle s_t(2)=T-(t+1)-T+t=-1$.
		\end{proof}

		\section{Proofs of Theorems}
	\label{app:theorem}
	\subsection{Proof of Theorem 1}
		\label{app:theorem_1}
	\begin{theorem}[\textnormal{Lipschitz Smooth of $r_t(s_t,a_t)$}]
		Under Assumption \ref{assumption:uniform}, the reward function $r_t(s_t,a_t)$ is $L_r$-Lipschitz smooth with respect to actions $a_t$ at any given state $s_t$, where $L_r=(k_1+k_2)v_M$.
	\end{theorem}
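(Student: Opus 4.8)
The plan is to prove the inequality $|r_t(s_t,a_t')-r_t(s_t,a_t)|\le L_r|a_t'-a_t|$ directly from the analytical form of the reward supplied by Proposition \ref{proposition:form}, using the monotonicity of the two stage conditions in the bid together with a union bound that feeds in Assumption \ref{assumption:uniform}. Since $r_t(s_t,a_t)=\sum_j\mathbbm{1}\{a_tv^1_{j,t}\ge p^1_{j,t},a_tv^2_{j,t}\ge p^2_{j,t}\}v_{j,t}$ is piecewise constant in $a_t$, the term ``Lipschitz smooth'' is read here as this difference bound, which is the quantity I will control.

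First I would fix $s_t$ and take two bids with, without loss of generality, $a_t'>a_t$. Because $v^1_{j,t},v^2_{j,t}>0$, each event $\{a_tv^1_{j,t}\ge p^1_{j,t}\}$ and $\{a_tv^2_{j,t}\ge p^2_{j,t}\}$ is monotone nondecreasing in the bid, so any impression won at $a_t$ is still won at $a_t'$; hence the reward is nondecreasing and its increment equals the total value of the impressions \emph{newly} won on $(a_t,a_t']$:
\begin{align}
	r_t(s_t,a_t')-r_t(s_t,a_t)=\sum_{j\in\mathcal{J}}v_{j,t},
\end{align}
where $\mathcal{J}$ collects the impressions satisfying both stage conditions at $a_t'$ but failing at least one at $a_t$. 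The core step is to bound $|\mathcal{J}|$: by monotonicity, every $j\in\mathcal{J}$ must have flipped at least one of its two conditions from false to true as the bid rises from $a_t$ to $a_t'$, so $\mathcal{J}$ is contained in the union of the impressions newly passing stage 1 and those newly winning stage 2. Applying Assumption \ref{assumption:uniform} to the increments of $n_{t,1}$ and $n_{t,2}$ over $(a_t,a_t']$ bounds these two sets by $k_1|a_t'-a_t|$ and $k_2|a_t'-a_t|$, whence $|\mathcal{J}|\le(k_1+k_2)|a_t'-a_t|$. Combining with $v_{j,t}\le v_M$ gives $|r_t(s_t,a_t')-r_t(s_t,a_t)|\le(k_1+k_2)v_M|a_t'-a_t|$, i.e. $L_r=(k_1+k_2)v_M$.

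The main obstacle is the correct use of Assumption \ref{assumption:uniform}: as literally stated it bounds the \emph{level} of the winning counts ($n_{t,1}\le k_1a_t$), whereas the argument needs a bound on their \emph{increments} across the interval. I would therefore read the assumption, consistently with the rationality discussion in Appendix \ref{app:assumption_1}, as a bound on the rate of change of $n_{t,1},n_{t,2}$ in $a_t$, so that the newly won counts over $(a_t,a_t']$ inherit the slopes $k_1,k_2$. The only remaining point requiring care is the union-bound bookkeeping, ensuring that each newly won impression is charged to a stage-1 or stage-2 flip without double counting; the monotonicity observation above is exactly what makes this attribution clean.
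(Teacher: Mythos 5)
Your proof is correct and follows essentially the same route as the paper's: exploit monotonicity of the winning events in the bid, attribute each newly won impression to a stage-1 or a stage-2 threshold crossing in $(a_t,a_t']$, and bound the two resulting counts via Assumption \ref{assumption:uniform} with each value capped by $v_M$. The paper's proof also applies Assumption \ref{assumption:uniform} in exactly the increment form you flag (it passes directly from the threshold-counting display to $(k_1+k_2)v_M|a_1-a_2|$), so your explicit reading of the assumption as a bound on the rate of change is the same reading the paper uses implicitly.
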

	\begin{proof}
		Recall that the reward function $r_t(s_t,a_t)$ can be expressed as 
		\begin{align}
			r_t(s_t,a_t)=\sum_{j=1}^{N_t}\mathbbm{1}\big\{a_tv_{j,t}^1\ge p_{j,t}^1,a_tv_{j,t}^2\ge p_{j,t}^2 \big\}v_{j,t}=\sum_{j=1}^{N_t}\mathbbm{1}\bigg\{a_t\ge \frac{p_{j,t}^1}{v_{j,t}^1},a_t\ge \frac{p_{j,t}^2}{v_{j,t}^2}\bigg\}v_{j,t}.
		\end{align}	
		Hence, $\forall s_t\in\mathcal{S}$ and $\forall a_1,a_2\in\mathcal{A}, a_1\neq a_2$, we have 
		\begin{align}
			\label{equ:prim}
			{|r_t(s_t,a_1)-r_t(s_t,a_2)|}&={\bigg\vert\sum_{j=1}^{N_t}\bigg[\mathbbm{1}\bigg\{a_1\ge \frac{p_{j,t}^1}{v_{j,t}^1},a_1\ge \frac{p_{j,t}^2}{v_{j,t}^2}\bigg\}-\mathbbm{1}\bigg\{a_2\ge \frac{p_{j,t}^1}{v_{j,t}^1},a_2\ge \frac{p_{j,t}^2}{v_{j,t}^2}\bigg\}\bigg] v_{j,t}\bigg\vert}.
		\end{align}
		Without loss of generality, we let $a_1>a_2$. Note that the advertiser can win any impression opportunity $j$ with bid price $a_1$ if it can win this impression opportunity with bid price $a_2$, which means
		\begin{align}
			\mathbbm{1}\bigg\{a_1\ge \frac{p_{j,t}^1}{v_{j,t}^1},a_1\ge \frac{p_{j,t}^2}{v_{j,t}^2}\bigg\}\ge \mathbbm{1}\bigg\{a_2\ge \frac{p_{j,t}^1}{v_{j,t}^1},a_2\ge \frac{p_{j,t}^2}{v_{j,t}^2}\bigg\}.
		\end{align}
		Thus, we can  {drop the absolute value sign in \eqref{equ:prim} and} obtain
		\begin{align}
			\label{equ:pre_ranking_ranking_decomposition}
			{|r_t(s_t,a_1)-r_t(s_t,a_2)|}&=\sum_{j=1}^{N_t}\bigg[\mathbbm{1}\bigg\{a_1\ge \frac{p_{j,t}^1}{v_{j,t}^1},a_1\ge \frac{p_{j,t}^2}{v_{j,t}^2}\bigg\}-\mathbbm{1}\bigg\{a_2\ge \frac{p_{j,t}^1}{v_{j,t}^1},a_2\ge \frac{p_{j,t}^2}{v_{j,t}^2}\bigg\}\bigg] v_{j,t}\notag\\
			&\le v_M \sum_{j=1}^{N_t}\bigg[\mathbbm{1}\bigg\{a_1\ge \frac{p_{j,t}^1}{v_{j,t}^1},a_1\ge \frac{p_{j,t}^2}{v_{j,t}^2}\bigg\}-\mathbbm{1}\bigg\{a_2\ge \frac{p_{j,t}^1}{v_{j,t}^1},a_2\ge \frac{p_{j,t}^2}{v_{j,t}^2}\bigg\}\bigg]\notag\\
			&=v_M\sum_{j=1}^{N_t}\bigg[\mathbbm{1}\bigg\{a_1\ge \frac{p_{j,t}^1}{v_{j,t}^1},a_1\ge \frac{p_{j,t}^2}{v_{j,t}^2}, \bigg(a_2< \frac{p_{j,t}^1}{v_{j,t}^1},\text{or}\,\frac{p_{j,t}^1}{v_{j,t}^1}\le a_2< \frac{p_{j,t}^2}{v_{j,t}^2}\bigg)\bigg\}\bigg]\notag\\
			&=v_M\sum_{j=1}^{N_t}\bigg[\mathbbm{1}\bigg\{a_1\ge \frac{p_{j,t}^1}{v_{j,t}^1},a_1\ge \frac{p_{j,t}^2}{v_{j,t}^2},a_2< \frac{p_{j,t}^1}{v_{j,t}^1}\bigg\} +\notag\\
			&\quad\quad\qquad\quad\; \mathbbm{1}\bigg\{a_1,a_2\ge \frac{p_{j,t}^1}{v_{j,t}^1},a_1\ge \frac{p_{j,t}^2}{v_{j,t}^2},a_2< \frac{p_{j,t}^2}{v_{j,t}^2}\bigg\}\bigg]
		\end{align}
		Note that \eqref{equ:pre_ranking_ranking_decomposition} use the fact that the  {additional} impression opportunities won by bid $a_1$ compared to bid $a_2$, i.e., $\sum_{j=1}^{N_t}\bigg[\mathbbm{1}\bigg\{a_1\ge \frac{p_{j,t}^1}{v_{j,t}^1},a_1\ge \frac{p_{j,t}^2}{v_{j,t}^2}\bigg\}-\mathbbm{1}\bigg\{a_2\ge \frac{p_{j,t}^1}{v_{j,t}^1},a_2\ge \frac{p_{j,t}^2}{v_{j,t}^2}\bigg\}\bigg]$, can be divided into two parts: 
		\begin{itemize}
			\item the first part are the impression opportunities that can be won with bid $a_1$ but cannot be won with bid $a_2$ even in stage 1, i.e., $\sum_{j=1}^{N_t}\mathbbm{1}\bigg\{a_1\ge \frac{p_{j,t}^1}{v_{j,t}^1},a_1\ge \frac{p_{j,t}^2}{v_{j,t}^2},a_2< \frac{p_{j,t}^1}{v_{j,t}^1}\bigg\}$;
			\item the second part are the impression opportunities that can be won in the stage 1 with both bids $a_1$ and $a_2$, but can only be won in stage 2 with bid $a_1$, not $a_2$, i.e., $\sum_{j=1}^{N_t}\mathbbm{1}\bigg\{a_1,a_2\ge \frac{p_{j,t}^1}{v_{j,t}^1},a_1\ge \frac{p_{j,t}^2}{v_{j,t}^2},a_2< \frac{p_{j,t}^2}{v_{j,t}^2}\bigg\}$.
		\end{itemize}
	
		\begin{figure}[h]
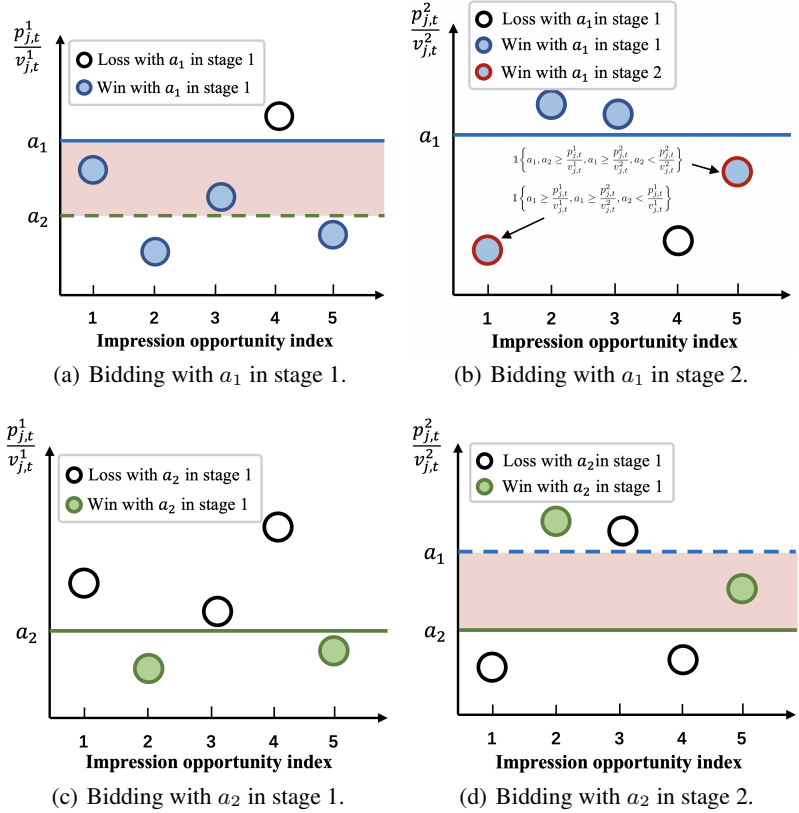

		\centering
		\subfigure[Bidding with $a_1$ in stage 1.]
		{\label{fig:a_1_pre_ranking}
			\includegraphics[scale=0.1]{a_1_pre_ranking.png}}
		\subfigure[Bidding with $a_1$ in stage 2.]
		{
			\label{fig:a_1_ranking}
			\includegraphics[scale=0.1]{a_1_ranking.png}
		}
		
		\subfigure[Bidding with $a_2$ in stage 1.]
		{
			\label{fig:a_2_pre_ranking}
			\includegraphics[scale=0.1]{a_2_pre_ranking.png}
		}
		\subfigure[Bidding with $a_2$ in stage 2.]
		{	\label{fig:a_2_ranking}
			\includegraphics[scale=0.1]{a_2_ranking.png}
		}
		
		\caption{Bidding with $a_1$ and $a_2$ in stage 1 and stage 2, where $a_1>a_2$. The extra impression opportunities won by bid $a_1$ compared to bid $a_2$ are impression opportunity $1$ that satisfies $a_1\ge \frac{p_{1,t}^1}{v_{1,t}^1},a_1\ge \frac{p_{1,t}^2}{v_{1,t}^2},a_2< \frac{p_{1,t}^2}{v_{1,t}^2}$, and impression $5$ that satisfies $a_1,a_2\ge \frac{p_{5,t}^1}{v_{5,t}^1},a_1\ge \frac{p_{5,t}^2}{v_{5,t}^2} \ge a_2$.}
		\label{fig:pre_ranking&ranking}
	\end{figure}

		To illustrates these two parts of impression opportunities, an example of bidding with $a_1$ and $a_2$ in stage 1 and stage 2 is shown in Fig. \ref{fig:pre_ranking&ranking}. The first part of impression opportunities can be bounded by:
		\begin{align}
			\sum_{j=1}^{N_t}\mathbbm{1}\bigg\{a_1\ge \frac{p_{j,t}^1}{v_{j,t}^1},a_1\ge \frac{p_{j,t}^2}{v_{j,t}^2},a_2< \frac{p_{j,t}^1}{v_{j,t}^1}\bigg\}\le \sum_{j=1}^{N_t}\mathbbm{1}\bigg\{a_1\ge \frac{p_{j,t}^1}{v_{j,t}^1}>a_2\bigg\},
		\end{align}
		which is represented by the red shaded area in Fig.  \ref{fig:a_1_pre_ranking}. Similarly, the second part of impression opportunities can be bounded by:
		\begin{align}
			\sum_{j=1}^{N_t}\mathbbm{1}\bigg\{a_1,a_2\ge \frac{p_{j,t}^1}{v_{j,t}^1},a_1\ge \frac{p_{j,t}^2}{v_{j,t}^2},a_2< \frac{p_{j,t}^2}{v_{j,t}^2}\bigg\}\le \sum_{j=1}^{N_t}\mathbbm{1}\bigg\{a_1\ge \frac{p_{j,t}^2}{v_{j,t}^2}>a_2\bigg\},
		\end{align}
		which is represented by the red shaded area in Fig. \ref{fig:a_2_ranking}.
		Hence,  {with Assumption \ref{assumption:uniform},} we have 
		\begin{align}
			\label{equ:r_smooth}
		\bigg\vert	r_t(s_t,a_1)-r_t(s_t,a_2)	\bigg\vert&\le v_M\sum_{j=1}^{N_t}\bigg[\mathbbm{1}\bigg\{a_1\ge \frac{p_{j,t}^1}{v_{j,t}^1}>a_2\bigg\}+\mathbbm{1}\bigg\{a_1\ge \frac{p_{j,t}^2}{v_{j,t}^2}>a_2\bigg\}\bigg]\notag\\
			&\le(k_1+k_2)v_M|a_1-a_2|.
		\end{align}
		The upper bound of the changing rate of the reward function $r_t(s_t,a_t)$ is
		\begin{align}
			\frac{	\bigg\vert r_t(s_t,a_1)-r_t(s_t,a_2)	\bigg\vert}{|a_1-a_2|}\le (k_1+k_2)v_M,
		\end{align}
		which indicates that $r_t(s_t,a_t)$ is $L_r$-Lipschitz smooth, $L_r \triangleq (k_1+k_2)v_M$.
	\end{proof}

	\subsection{Proof of Theorem 2}
		\label{app:theorem_2}
	\begin{theorem}[\textnormal{Lipschitz Smooth of $Q^\mu(s_t,a_t)$}]
		Under Assumption \ref{assumption:uniform} and \ref{assumption:bounded_Q}, the Q function $Q^\mu(s_t,a_t)$ is an $L_Q$-Lipschitz smooth function with respect to the actions $a_t$ at any given state $s_t$, where $L_Q=[v_M+(k_3+k_4)p_M](k_1+k_2)$.
	\end{theorem}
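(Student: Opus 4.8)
The plan is to start from the Bellman equation and separate the action-dependence of $Q^\mu$ into an immediate-reward part and a discounted future part, bounding each in turn. Fixing a state $s_t$ and two actions $a_1>a_2$, I would write
\[
Q^\mu(s_t,a_1) - Q^\mu(s_t,a_2) = \big[r_t(s_t,a_1) - r_t(s_t,a_2)\big] + \gamma\,\mathbb{E}\big[Q^\mu(s_{t+1}^{a_1},\mu(s_{t+1}^{a_1})) - Q^\mu(s_{t+1}^{a_2},\mu(s_{t+1}^{a_2}))\big],
\]
where $s_{t+1}^{a_i}$ is the next state reached from $s_t$ under action $a_i$ on the same realization of impression opportunities. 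The first bracket is controlled directly by Theorem \ref{thm:r_smooth}, giving $|r_t(s_t,a_1)-r_t(s_t,a_2)| \le (k_1+k_2)v_M\,|a_1-a_2|$, which supplies the $v_M(k_1+k_2)$ contribution to $L_Q$.

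The core of the argument is the future term. By Proposition \ref{proposition:form} the transition only perturbs the budget-left coordinate $s(1)$ and the budget-consumed coordinate $s(3)$, while $s(2)$ decreases by $1$ irrespective of the action; hence $s_{t+1}^{a_1}$ and $s_{t+1}^{a_2}$ differ only in coordinates $1$ and $3$, by exactly the cost gap $|c_t(s_t,a_1)-c_t(s_t,a_2)|$ (with opposite signs). Invoking Assumption \ref{assumption:bounded_Q} and a mean-value argument along the segment joining the two next states, I would bound
\[
\big|Q^\mu(s_{t+1}^{a_1},\cdot) - Q^\mu(s_{t+1}^{a_2},\cdot)\big| \le k_3\,|\Delta s(1)| + k_4\,|\Delta s(3)| = (k_3+k_4)\,|c_t(s_t,a_1)-c_t(s_t,a_2)|.
\]

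It then remains to bound the cost gap. Reusing the counting decomposition from the proof of Theorem \ref{thm:r_smooth} --- that the extra impressions won by $a_1$ over $a_2$ split into those newly passing stage 1 and those newly winning stage 2 --- but replacing each won value $v_{j,t}$ by its market price $p_{j,t}\le p_M$, Assumption \ref{assumption:uniform} yields $|c_t(s_t,a_1)-c_t(s_t,a_2)| \le (k_1+k_2)p_M\,|a_1-a_2|$. Combining the three bounds and taking the expectation (harmless, since each pathwise bound is already uniform in the realization) gives
\[
|Q^\mu(s_t,a_1)-Q^\mu(s_t,a_2)| \le \big[v_M + \gamma(k_3+k_4)p_M\big](k_1+k_2)\,|a_1-a_2|,
\]
i.e.\ the claimed $L_Q$-Lipschitz property (with the discount factor as in the main-text statement of Theorem \ref{thm:Q_smooth}).

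I expect the main obstacle to be the second step: transferring Assumption \ref{assumption:bounded_Q}, which bounds the partial derivatives of $Q^\mu(s_t,a_t)$ in its state arguments \emph{with the action held fixed}, into a bound on $Q^\mu(s_{t+1},\mu(s_{t+1}))$, where the policy action $\mu(s_{t+1})$ also moves with the state. Making this rigorous requires either arguing that the state-sensitivity of the value $V^\mu(s)=Q^\mu(s,\mu(s))$ is still governed by $k_3,k_4$, or coupling the two trajectories so only the budget coordinates ever differ; the remaining pieces (Theorem \ref{thm:r_smooth} and the cost-gap count) are essentially routine once this is settled.
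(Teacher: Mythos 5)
Your proof takes essentially the same route as the paper's: the Bellman decomposition into reward term plus discounted future term, Theorem \ref{thm:r_smooth} for the former, the observation from Proposition \ref{proposition:form} that the two next states differ only in the budget coordinates $s(1)$ and $s(3)$ by the cost gap, a first-order (Taylor/mean-value) bound via Assumption \ref{assumption:bounded_Q} contributing $(k_3+k_4)$ times that cost gap, and the stage-1/stage-2 counting argument with Assumption \ref{assumption:uniform} bounding the cost gap by $(k_1+k_2)p_M|a_1-a_2|$. The obstacle you flag at the end --- that Assumption \ref{assumption:bounded_Q} controls state-derivatives of $Q^\mu$ at a fixed action while the future term involves $Q^\mu(s_{t+1},\mu(s_{t+1}))$, whose action moves with the state --- is glossed over in the paper's own proof (which simply drops the action argument and expands in the state alone), and your direct price-counting bound on the cost gap is in fact cleaner than the paper's intermediate inequality through $|r_t(s_t,a_1)-r_t(s_t,a_2)|\cdot p_M/v_M$.
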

	\begin{proof}
		Recall that $Q^\mu(s_t,a_t)$ can be expressed as 
		\begin{align}
			Q^\mu(s_t,a_t)=r_t(s_t,a_t)+\gamma\mathbb{E}_{s_{t+1}\sim \mathbb{P}(\cdot\mid s_t,a_t)}Q^\mu(s_{t+1},\mu(s_{t+1})),
		\end{align}
		where $r_t(s_t,a_t)$ is Lipschitz smooth. Thus, we first focus on the characteristic of the second part $\gamma\mathbb{E}_{s_{t+1}\sim \mathbb{P}(\cdot\mid s_t,a_t)}Q^\mu(s_{t+1},\mu(s_{t+1}))$. According to Proposition \ref{proposition:form}, at any given state $s_t\in\mathcal{S}$, the next states $s_{t+1}^1$ and $s_{t+1}^2$ under bids $a_1$ and $a_2$ can be expressed as 
		\begin{align}
			s_{t+1}^1 = s_t+[\triangle s_1(1),\triangle s_1(2),\triangle s_1(3)], \;\;	s_{t+1}^2= s_t+[\triangle s_2(1),\triangle s_2(2),\triangle s_2(3)], 
		\end{align} 
		where 
		\begin{align}
			\triangle s_1(1)=-\triangle s_1(3)=-\sum_{j=1}^{N_t}\mathbbm{1}\bigg\{a_1\ge \frac{p_{j,t}^1}{v_{j,t}^1},a_1\ge \frac{p_{j,t}^2}{v_{j,t}^2}\bigg\}p_{j,t},
		\end{align}
		and 
		\begin{align}
			\triangle s_2(1)=-\triangle s_2(3)=-\sum_{j=1}^{N_t}\mathbbm{1}\bigg\{a_2\ge \frac{p_{j,t}^1}{v_{j,t}^1},a_2\ge \frac{p_{j,t}^2}{v_{j,t}^2}\bigg\}p_{j,t}.
		\end{align}
		and
		\begin{align}
			\triangle s_1(2)=\triangle s_2(2)=-1.
		\end{align}
		Hence, using Taylor expansion, we have 
		\begin{align}
			&\quad\; \bigg\vert\mathbb{E}_{s^1_{t+1}\sim \mathbb{P}(\cdot\mid s_t,a_1)}Q^\mu(s_{t+1}^1,\mu(s_{t+1}^1))-\mathbb{E}_{s_{t+1}^2\sim \mathbb{P}(\cdot\mid s_t,a_2)}Q^\mu(s_{t+1}^2,\mu(s_{t+1}^2))\bigg\vert\notag\\
			&=\bigg\vert\mathbb{E}_{s_{t+1}^1\sim \mathbb{P}(\cdot\mid s_t,a_1)}Q^\mu(s_{t+1}^1)-\mathbb{E}_{s_{t+1}^2\sim \mathbb{P}(\cdot\mid s_t,a_2)}Q^\mu(s_{t+1}^2)\bigg\vert\notag\\
			&=\bigg\vert Q^\mu(s_t+[\triangle s_1(1),\triangle s_1(2),\triangle s_1(3)])-Q^\mu(s_t+[\triangle s_2(1),\triangle s_2(2),\triangle s_2(3)])\bigg\vert\notag\\
			&\approx\bigg\vert Q^\mu(s_t)+\frac{\partial Q^\mu(s_t)}{\partial s_t(1)}\triangle s_1(1)+\frac{\partial Q^\mu(s_t)}{\partial s_t(2)}\triangle s_1(2)+\frac{\partial Q^\mu(s_t)}{\partial s_t(3)}\triangle s_1(3)-Q^\mu(s_t)-\frac{\partial Q^\mu(s_t)}{\partial s_t(1)}\triangle s_2(1)\notag\\
			&\quad\;\; -\frac{\partial Q^\mu(s_t)}{\partial s_t(2)}\triangle s_2(2)-\frac{\partial Q^\mu(s_t)}{\partial s_t(3)}\triangle s_2(3)
			\bigg\vert \notag\\
			&=\bigg\vert \frac{\partial Q^\mu(s_t)}{\partial s_t(1)}\triangle s_1(1)+\frac{\partial Q^\mu(s_t)}{\partial s_t(3)}\triangle s_1(3)-\frac{\partial Q^\mu(s_t)}{\partial s_t(1)}\triangle s_2(1)- \frac{\partial Q^\mu(s_t)}{\partial s_t(3)}\triangle s_2(3) \bigg\vert\notag\\
			&=\bigg\vert  \bigg(\frac{\partial Q^\mu(s_t)}{\partial s_t(1)}-\frac{\partial Q^\mu(s_t)}{\partial s_t(3)}\bigg)     \bigg( \sum_{j=1}^{N_t}\bigg[-\mathbbm{1}\bigg\{a_1\ge \frac{p_{j,t}^1}{v_{j,t}^1}, \frac{p_{j,t}^2}{v_{j,t}^2}\bigg\}+\mathbbm{1}\bigg\{a_2\ge \frac{p_{j,t}^1}{v_{j,t}^1}, \frac{p_{j,t}^2}{v_{j,t}^2}\bigg\}\bigg]p_{j,t}.\bigg)
			\bigg\vert \notag\\
			&\le \bigg\vert  \bigg(\frac{\partial Q^\mu(s_t)}{\partial s_t(1)}-\frac{\partial Q^\mu(s_t)}{\partial s_t(3)}\bigg)   \bigg\vert   \bigg\vert -r_t(s_t,a_1)+r_t(s_t,a_2)  \bigg\vert\frac{p_M}{v_M}  \notag\\
			&\le \bigg(\bigg\vert\frac{\partial Q^\mu(s_t)}{\partial s_t(1)}\bigg\vert+\bigg\vert\frac{\partial Q^\mu(s_t)}{\partial s_t(3)}\bigg\vert\bigg)(k_1+k_2)p_M|a_1-a_2|\notag\\
			&=(k_1+k_2)(k_3+k_4)p_M|a_1-a_2|.
		\end{align}
		Note that we use \eqref{equ:r_smooth} in Theorem \ref{thm:r_smooth}. Hence, we have
		\begin{align}
			\bigg\vert Q^\mu(s_t,a_1)-Q^\mu(s_t,a_2)	\bigg\vert&\le 	\bigg\vert r_t(s_t,a_1)-r_t(s_t,a_2) 	\bigg\vert +\notag\\
			&\quad\; \gamma	\bigg\vert\mathbb{E}_{s_{t+1}^1\sim \mathbb{P}(\cdot\mid s_t,a_1)}Q^\mu(s_{t+1}^1)-\mathbb{E}_{s_{t+1}^2\sim \mathbb{P}(\cdot\mid s_t,a_2)}Q^\mu(s_{t+1}^2)	\bigg\vert\notag\\
			&\le (k_1+k_2)v_M	\bigg\vert a_1-a_2	\bigg\vert +\gamma(k_1+k_2)(k_3+k_4)p_M	\bigg\vert a_1-a_2	\bigg\vert\notag\\
			&=\bigg[v_M+\gamma(k_3+k_4)p_M\bigg](k_1+k_2)	\bigg\vert a_1-a_2	\bigg\vert.
		\end{align}
		The upper bound of the absolute changing rate of the Q function $Q^\mu(s_t,a_t)$ is 
		\begin{align}
			\frac{	\bigg\vert Q^\mu(s_t,a_1)-Q^\mu(s_t,a_2)	\bigg\vert}{	\bigg\vert a_1-a_2	\bigg\vert} \le \bigg[v_M+\gamma(k_3+k_4)p_M\bigg](k_1+k_2),
		\end{align}
		which indicates that $Q^\mu(s_t,a_t)$ is $L_Q$-Lipschitz smooth, $L_Q\triangleq [v_M+\gamma(k_3+k_4)p_M](k_1+k_2)$.
	\end{proof}

	\subsection{Proof of Theorem 3}
\label{app:theorem_3}
\begin{theorem}[ {\textnormal{Upper Bound of $|V(\pi)-V(\mu_s)|$}}]
The expected accumulated reward $V(\pi_e)$ satisfies 
\begin{align}
	\bigg\vert V(\pi_e) - V(\mu_s)\bigg\vert\le \xi\gamma^{t_1}\bigg[v_M+\gamma\big(k_3+k_4\big)p_M\bigg]\big(k_1+k_2\big)\Delta T.
\end{align}
\end{theorem}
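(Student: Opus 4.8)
The plan is to exploit that $\pi_e$ coincides with $\mu_s$ at every time step outside the exploration window $\{t_1,\dots,t_2\}$, so the two returns can differ only through the $\Delta T$ exploratory actions, each of which is confined to the safety zone of width $2\xi$. I would interpolate between the policies with a sequence of hybrids $\pi^{(0)},\pi^{(1)},\dots,\pi^{(\Delta T)}$, where $\pi^{(k)}$ samples from the safety zone at time steps $t_1,\dots,t_1+k-1$ and plays $\mu_s$ at all other steps; hence $\pi^{(0)}=\mu_s$ and $\pi^{(\Delta T)}=\pi_e$. A telescoping decomposition then reduces the problem to bounding each one-step switch:
\begin{align}
	V(\pi_e) - V(\mu_s) = \sum_{k=1}^{\Delta T}\big[V(\pi^{(k)}) - V(\pi^{(k-1)})\big].
\end{align}

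The key observation is that $\pi^{(k)}$ and $\pi^{(k-1)}$ agree at every step except $t':=t_1+k-1$: they take identical actions on $\{t_1,\dots,t'-1\}$ and therefore induce the same state distribution $d_{t'}$ at time $t'$, while both revert to $\mu_s$ at every step strictly after $t'$. Because the tail of each policy from $t'$ onward is governed entirely by $\mu_s$, the discounted return accrued from $t'$ is exactly $Q^{\mu_s}(s_{t'},a_{t'})$ evaluated at the respective action ($\pi^{(k)}$ explores, $\pi^{(k-1)}$ plays $\mu_s$). This gives
\begin{align}
	V(\pi^{(k)}) - V(\pi^{(k-1)}) = \gamma^{t'}\,\mathbb{E}_{s_{t'}\sim d_{t'}}\big[Q^{\mu_s}(s_{t'},a_{t'}) - Q^{\mu_s}(s_{t'},\mu_s(s_{t'}))\big],
\end{align}
where $a_{t'}\in[\mu_s(s_{t'})-\xi,\mu_s(s_{t'})+\xi]$. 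Applying Theorem \ref{thm:Q_smooth}, the integrand is bounded in absolute value by $L_Q|a_{t'}-\mu_s(s_{t'})|\le L_Q\xi$, so that $|V(\pi^{(k)}) - V(\pi^{(k-1)})|\le \gamma^{t_1+k-1}L_Q\xi$.

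Finally I would sum the geometric bound and use $\gamma\le 1$ to collapse the sum:
\begin{align}
	\bigg\vert V(\pi_e) - V(\mu_s)\bigg\vert \le L_Q\xi\sum_{k=1}^{\Delta T}\gamma^{t_1+k-1} = L_Q\xi\,\gamma^{t_1}\sum_{k=0}^{\Delta T-1}\gamma^{k}\le L_Q\xi\,\gamma^{t_1}\Delta T,
\end{align}
and substituting $L_Q=[v_M+\gamma(k_3+k_4)p_M](k_1+k_2)$ from Theorem \ref{thm:Q_smooth} yields the claim. The main obstacle is the telescoping step: one must argue carefully that consecutive hybrids share the state distribution up to $t'$ and that their post-$t'$ returns collapse into a single $Q^{\mu_s}$ gap, so the difference of two full-trajectory returns reduces to a single, correctly discounted one-step Q-value difference; the Lipschitz estimate and the crude $\sum_k\gamma^k\le\Delta T$ bound are then routine. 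An equivalent route is to invoke the performance difference lemma directly and note that the advantage $Q^{\mu_s}(s_t,\pi_e(s_t))-Q^{\mu_s}(s_t,\mu_s(s_t))$ vanishes for every $t\notin[t_1,t_2]$.
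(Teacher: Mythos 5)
Your proof is correct and is essentially the paper's own argument in different clothing: the telescoping over hybrid policies $\pi^{(0)},\dots,\pi^{(\Delta T)}$ yields exactly the identity $V(\pi_e)-V(\mu_s)=\sum_{t=t_1}^{t_2}\gamma^t\,\mathbb{E}_{s_t\sim\rho_t}\big[Q^{\mu_s}(s_t,\pi_e(s_t))-Q^{\mu_s}(s_t,\mu_s(s_t))\big]$ that the paper derives by its backward ``plus and minus $Q^{\mu_s}$'' peeling from $t_2$ down to $t_1$, and both proofs then conclude identically by applying the Theorem \ref{thm:Q_smooth} Lipschitz bound $|\Delta Q(t)|\le L_Q\xi$ and the crude estimate $\sum_{t=t_1}^{t_2}\gamma^t\le\gamma^{t_1}\Delta T$. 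The only difference is bookkeeping --- you interpolate forward one exploratory step at a time (the standard performance-difference-lemma construction you also name), while the paper unrolls the same decomposition in reverse --- so the two arguments coincide in substance.
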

	\begin{figure}[h]
	\centering
	\includegraphics[width=14cm]{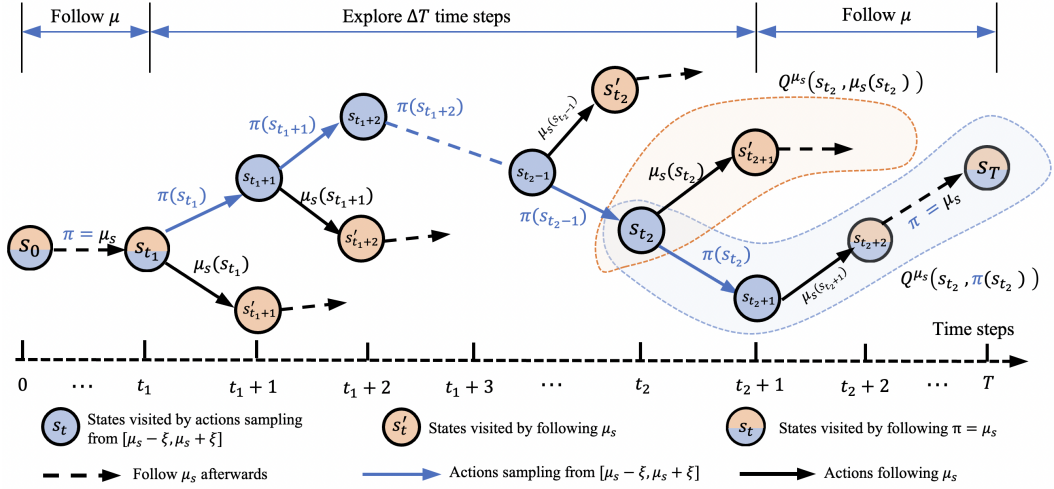}
	\caption{Practical exploration policy $\pi$ based on safe policy $\mu_s$.}
	\label{fig:theorem_3}
\end{figure}
\begin{proof}
	 { Fig. \ref{fig:theorem_3} shows the visited states during one episode using exploration policy $\pi$. The total value $V(\pi)$ can be expressed as \footnote{ {Note that the state transitions in all formulas follow the rule of $\mathbb{P}$, i.e., $s_{t+1}\sim\mathbb{P}(\cdot|s_{t},a_t), \forall\tau\in\{0,1,...,T-1\}$. Hence, for brevity, we omit this term in the subscript of the expectation operator $\mathbb{E}$ in the following formulas.}}: }
	\begin{align}
		V(\pi)&=\mathbb{E}_{a_t\sim\pi}\bigg[\sum_{t=0}^{T-1}\gamma^t r_t(s_t,a_t)\bigg\vert s_0\sim\rho_0\bigg]\notag\\
		&=\mathbb{E}_{a_t\sim\mu_s}\bigg[\sum_{t=0}^{t_1-1}\gamma^t r_t(s_t,a_t)\bigg\vert s_0\sim\rho_0\bigg]+\mathbb{E}_{a_t\sim\pi}\bigg[\sum_{t=t_1}^{T-1}\gamma^t r_t(s_t,a_t)\bigg\vert s_{t_1}\sim\rho_{t_1}\bigg],
	\end{align}
 {where $\rho_t$ denotes the state distribution at time step $t$ starting from $s_0\sim\rho_0$ and following $\pi$. The total value $V(\mu_s)$ is}
	\begin{align}
	V(\mu_s)=\mathbb{E}_{a_t\sim\mu_s}\bigg[\sum_{t=0}^{T-1}\gamma^t r_t(s_t,a_t)\bigg\vert s_0\sim\rho_0\bigg].
\end{align}
 {Notice that the accumulated rewards from time step $0$ to time step $t_1$ in both $V(\pi)$ and $V(\mu_s)$ are the same. Hence, the difference between $V(\pi)$ and $V(\mu_s)$ can be calculated as}
\begin{align}
	\label{equ:thm_3_diff}
	V(\pi)-V(\mu_s)&=\underbrace{\mathbb{E}_{a_t\sim\pi}\bigg[\sum_{t=t_1}^{T-1}\gamma^t r_t(s_t,a_t)\bigg\vert s_{t_1}\sim\rho_{t_1}\bigg]}_\text{\ding{172}}-\mathbb{E}_{a_t\sim\mu_s}\bigg[\sum_{t=t_1}^{T-1}\gamma^t r_t(s_t,a_t)\bigg\vert s_{t_1}\sim\rho_{t_1}\bigg].
\end{align}
 {The term \ding{172} can be further divided into three parts, including the accumulated rewards from time step $t$ to $t+\Delta T-1$ (part 1), the immediate reward at time step $t+\Delta T$ (part 2) and the accumulated rewards from time step $t+\Delta T+1$ to $T$ (part 3), i.e., }

\begin{align}
	\label{equ:thm_3_2}
	&\quad\;\text{\ding{172}}=\mathbb{E}_{a_t\sim\pi}\bigg[\sum_{t=t_1}^{T-1}\gamma^t r_t(s_t,a_t)\bigg\vert s_{t_1}\sim\rho_{t_1}\bigg]=\underbrace{\mathbb{E}_{a_t\sim\pi}\bigg[\sum_{t=t_1}^{t_2-1}\gamma^t r_t(s_t,a_t)\bigg\vert s_{t_1}\sim\rho_{t_1}\bigg]}_\text{part 1: accumulated rewards from $t_1$ to $t_2$ following $\pi$ }\notag\\
	&+\gamma^{t_2}\mathbb{E}_{ s_{t_2}\sim\rho_{t_2}}\bigg[\underbrace{\underbrace{r_{t_2}(s_{t_2},\pi(s_{t_2}))}_\text{part 2: immediate reward at time step $t_2$}+\gamma\underbrace{\mathbb{E}_{a_t\sim\mu_s,\forall t\ge t_2+1}\bigg[\sum_{t=t_2 + 1}^{ T-1}\gamma^{t-t_2-1} r_t(s_t,a_t)\bigg\vert s_{t_2+1}\sim\rho_{t_2+1}\bigg]}_\text{part 3: accumulated rewards from $t_2 +1$ to $T-1$ following $\mu_s$ }}_\text{part 2+part 3=$Q^{\mu_s}(s_{t_2}, \pi(s_{t_2}))$}\bigg]\notag\\
	&=\mathbb{E}_{a_t\sim\pi}\bigg[\sum_{t=t_1}^{t_2-1}\gamma^t r_t(s_t,a_t)\bigg\vert s_{t_1}\sim\rho_{t_1}\bigg] \notag\\
	&\quad+ \gamma^{t_2}\mathbb{E}_{ s_{t_2}\sim\rho_{t_2}}\bigg[Q^{\mu_s}(s_{t_2}, \pi(s_{t_2}))\underbrace{-Q^{\mu_s}(s_{t_2}, \mu_s(s_{t_2})) + Q^{\mu_s}(s_{t_2}, \mu_s(s_{t_2}))}_\text{trick: plus and minus $Q^{\mu_s}(s_{t_2},\mu_s(s_{t_2}))$}\bigg]\notag\\
	&=\underbrace{\mathbb{E}_{a_t\sim\pi}\bigg[\sum_{t=t_1}^{t_2-1}\gamma^t r_t(s_t,a_t)\bigg\vert s_{t_1}\sim\rho_{t_1}\bigg]}_\text{\ding{173}}+\gamma^{t_2}\mathbb{E}_{ s_{t_2}\sim\rho_{t_2}}\bigg[\underbrace{Q^{\mu_s}(s_{t_2}, \pi(s_{t_2}))-Q^{\mu_s}(s_{t_2}, \mu_s(s_{t_2}))}_\text{$\triangleq \Delta Q({t_2})$}\bigg]\notag\\
	&\quad+ \underbrace{\gamma^{t_2}\mathbb{E}_{ s_{t_2}\sim\rho_{t_2}}\bigg[Q^{\mu_s}(s_{t_2}, \mu_s(s_{t_2}))\bigg]}_\text{\ding{174}},
\end{align}
 {where we define $\Delta Q(t)\triangleq Q^{\mu_s}(s_{t}, \pi(s_{t}))-Q^{\mu_s}(s_{t}, \mu_s(s_{t}))$. Note that we can take term $\gamma^{t_2-1}r_{t_2-1}(s_{t_2-1},a_{t_2-1})$ from \ding{173} and combine it with term \ding{174} to obtain $Q^{\mu_s}(s_{t_2-1},\pi(s_{t_2-1}))$ . In fact,we have: $\forall \tau\in\{1,2,...,t_2-t_1\}$, }
\begin{align}
	&\;\;\;\;\mathbb{E}_{a_t\sim\pi}\bigg[\sum_{t=t_1}^{t_2-\tau}\gamma^t r_t(s_t,a_t)\bigg\vert s_{t_1}\sim\rho_{t_1}\bigg]+\gamma^{t_2-\tau+1}\mathbb{E}_{ s_{t_2-\tau+1}\sim\rho_{t_2-\tau+1}}\bigg[Q^{\mu_s}(s_{t_2-\tau+1}, \mu_s(s_{t_2-\tau+1}))\bigg]\notag\\
	&=\mathbb{E}_{a_t\sim\pi}\bigg[\sum_{t=t_1}^{t_2-\tau-1}\gamma^t r_t(s_t,a_t)\bigg\vert s_{t_1}\sim\rho_{t_1}\bigg]+\gamma^{t_2-\tau}\bigg\{\mathbb{E}_{s_{t_2-\tau}\sim\rho_{t_2-\tau}}\bigg[r_{t_2-\tau}(s_{t_2-\tau},\pi(s_{t_2-\tau}))\bigg]\notag\\
	&\quad+\gamma\mathbb{E}_{a_t\sim\mu_s,\forall t\ge t_2-\tau+1}\bigg[\sum_{t=t_2-\tau+1}^{T-1}\gamma^{t-t_2+\tau-1}r_t(s_t,a_t)\bigg\vert s_{t_2-\tau+1}\sim\rho_{t_2-\tau+1}\bigg]\bigg\}\notag\\
	&=\underbrace{\mathbb{E}_{a_t\sim\pi}\bigg[\sum_{t=t_1}^{t_2-\tau-1}\gamma^t r_t(s_t,a_t)\bigg\vert s_{t_1}\sim\rho_{t_1}\bigg]}_\text{\ding{173}'}+\gamma^{t_2-\tau}\mathbb{E}_{s_{t_2-\tau}\sim\rho_{t_2-\tau}}\bigg[Q^{\mu_s}(s_{t_2-\tau},\pi(s_{t_2-\tau}))\bigg].
\end{align}
 {We can continue to use \emph{plus and minus} trick as we did in \eqref{equ:thm_3_2} to further break down term \ding{173}'. Hence, term \ding{172} can be calculated as}
\begin{align}
	\label{equ:term_1}
	&\;\;\text{\ding{172}}=\mathbb{E}_{a_t\sim\pi}\bigg[\sum_{t=t_1}^{t_2-2}\gamma^t r_t(s_t,a_t)\bigg\vert s_{t_1}\sim\rho_{t_1}\bigg]+\gamma^{t_2-1}\mathbb{E}_{s_{t_2-1}\sim\rho_{t_2-1}}\bigg[Q^{\mu_s}(s_{t_2-1},\mu_s(s_{t_2-1}))\bigg]\notag\\
	&\quad+\gamma^{t_2-1}\mathbb{E}_{ s_{t_2-1}\sim\rho_{t_2-1}}\bigg[\Delta Q(t_2-1)\bigg]+\gamma^{t_2}\mathbb{E}_{ s_{t_2}\sim\rho_{t_2}}\bigg[\Delta Q(t_2)\bigg]\notag\\
	&=\mathbb{E}_{a_t\sim\pi}\bigg[\sum_{t=t_1}^{t_2-3}\gamma^t r_t(s_t,a_t)\bigg\vert s_{t_1}\sim\rho_{t_1}\bigg]+\gamma^{t_2-2}\mathbb{E}_{s_{t_2-2}\sim\rho_{t_2-2}}\bigg[Q^{\mu_s}(s_{t_2-2},\mu_s(s_{t_2-2}))\bigg]\notag\\
	&\quad+\gamma^{t_2-2}\mathbb{E}_{ s_{t_2-2}\sim\rho_{t_2-2}}\bigg[\Delta Q(t_2-2)\bigg]+\gamma^{t_2-1}\mathbb{E}_{ s_{t_2-1}\sim\rho_{t_2-1}}\bigg[\Delta Q(t_2-1)\bigg]+\gamma^{t_2}\mathbb{E}_{ s_{t_2}\sim\rho_{t_2}}\bigg[\Delta Q(t_2)\bigg]\notag\\
	&=\cdots\cdots\cdots\notag\\
	&=\mathbb{E}_{a_t\sim\pi}\bigg[\gamma^{t_1}r_{t_1}(s_{t_1},a_{t_1})\bigg\vert s_{t_1}\sim\rho_{t_1}\bigg] + \gamma^{t_1+1}\mathbb{E}_{s_{t_1+1}\sim\rho_{t_1+1}}\bigg[Q^{\mu_s}(s_{t_1+1},\mu_s(s_{t_1+1}))\bigg]\notag\\
	&\quad+\sum_{t=t_1+1}^{t_2}\gamma^t\mathbb{E}_{s_t\sim\rho_t}\bigg[\Delta Q(t)\bigg]\notag\\
	&=\gamma^{t_1}\mathbb{E}_{s_{t_1}\sim\rho_{t_1}}\bigg\{r_{t_1}(s_{t_1},\pi(s_{t_1}))+\gamma\mathbb{E}_{a_t\sim\mu_s,\forall t\ge t_1+1}\bigg[\sum_{t=t_1+1}^{T-1}\gamma^{t-t_1-1}r_t(s_t,a_t)\bigg\vert s_{t_1+1}\sim\rho_{t_1+1}\bigg]\bigg\}\notag\\
	&\quad+\sum_{t=t_1+1}^{t_2}\gamma^t\mathbb{E}_{s_t\sim\rho_t}\bigg[\Delta Q(t)\bigg]\notag\\
	&=\gamma^{t_1}\mathbb{E}_{s_{t_1}\sim\rho_{t_1}}\bigg[Q^{\mu_s}(s_{t_1},\pi(s_{t_1}))\bigg] + \sum_{t=t_1+1}^{t_2}\gamma^t\mathbb{E}_{s_t\sim\rho_t}\bigg[\Delta Q(t)\bigg].
\end{align}
 {Substitute term \ding{172} in \eqref{equ:thm_3_diff} by \eqref{equ:term_1}, we can obtain }
\begin{align}
	V(\pi) - V(\mu_s) &= \gamma^{t_1}\mathbb{E}_{s_{t_1}\sim\rho_{t_1}}\bigg[Q^{\mu_s}(s_{t_1},\pi(s_{t_1}))\bigg] -\mathbb{E}_{a_t\sim\mu_s}\bigg[\sum_{t=t_1}^{T-1}\gamma^t r_t(s_t,a_t)\bigg\vert s_{t_1}\sim\rho_{t_1}\bigg]\notag\\
	&\quad+ \sum_{t=t_1+1}^{t_2}\gamma^t\mathbb{E}_{s_t\sim\rho_t}\bigg[\Delta Q(t)\bigg]\notag\\
	&=\gamma^{t_1}\mathbb{E}_{s_{t_1}\sim\rho_{t_1}}\bigg[Q^{\mu_s}(s_{t_1},\pi(s_{t_1}))\bigg]-\gamma^{t_1}\mathbb{E}_{a_t\sim\mu_s}\bigg[\sum_{t=t_1}^{T-1}\gamma^{t-t_1} r_t(s_t,a_t)\bigg\vert s_{t_1}\sim\rho_{t_1}\bigg]\notag\\
	&\quad+ \sum_{t=t_1+1}^{t_2}\gamma^t\mathbb{E}_{s_t\sim\rho_t}\bigg[\Delta Q(t)\bigg]\notag\\
	&=\gamma^{t_1}\mathbb{E}_{s_{t_1}\sim\rho_{t_1}}\bigg[Q^{\mu_s}(s_{t_1},\pi(s_{t_1}))-Q^{\mu_s}(s_{t_1},\mu_s(s_{t_1}))\bigg]+ \sum_{t=t_1+1}^{t_2}\gamma^t\mathbb{E}_{s_t\sim\rho_t}\bigg[\Delta Q(t)\bigg]\notag\\
	&= \sum_{t=t_1}^{t_2}\gamma^t\mathbb{E}_{s_t\sim\rho_t}\bigg[\Delta Q(t)\bigg].
\end{align}
 {From Theorem \ref{thm:Q_smooth}, we know that}
\begin{align}
	\bigg\vert\Delta Q(t)\bigg\vert&=\bigg\vert Q^{\mu_s}(s_{t},\pi(s_{t}))-Q^{\mu_s}(s_{t},\mu_s(s_{t}))\bigg\vert\notag\\
	&\le \bigg[v_M+\gamma\big(k_3+k_4\big)p_M\bigg]\big(k_1+k_2\big)\bigg\vert\pi(s_t)-\mu_s(s_t)\bigg\vert\notag\\
	&\le \bigg[v_M+\gamma\big(k_3+k_4\big)p_M\bigg](k_1+k_2)\xi,
\end{align}
 {where we use $|\pi(s_t)-\mu_s(s_t)|\le\xi$. Hence, we have}
\begin{align}
	\label{equ:thm_3}
	\bigg\vert V(\pi) - V(\mu_s)\bigg\vert &=\bigg\vert\sum_{t=t_1}^{t_2}\gamma^t\mathbb{E}_{s_t\sim\rho_t}\bigg[\Delta Q(t)\bigg]\bigg\vert\notag\\
	&\le \sum_{t=t_1}^{t_2}\gamma^t\mathbb{E}_{s_t\sim\rho_t}\bigg[\bigg\vert\Delta Q(t)\bigg\vert\bigg]\notag\\
	&\le \sum_{t=t_1}^{t_2}\gamma^t\mathbb{E}_{s_t\sim\rho_t}\bigg[[v_M+\gamma(k_3+k_4)p_M](k_1+k_2)\xi\bigg]\notag\\
	&=\sum_{t=t_1}^{t_2}\gamma^t\bigg[v_M+\gamma\big(k_3+k_4\big)p_M\bigg]\big(k_1+k_2\big)\xi\notag\\
	&\le \xi\gamma^{t_1}\bigg[v_M+\gamma\big(k_3+k_4\big)p_M\bigg]\big(k_1+k_2\big)\Delta T.
\end{align}
 {So far, we have proved the theorem. In addition, from \eqref{equ:thm_3}, we can obtain the following two conclusions:}
\begin{itemize}
	\item  {As $\gamma<1$, we can see that the later we start explorations (i.e., the larger $t_1$ is), the smaller $|V(\pi)-V(\mu_s)|$ is. }
	\item   {The larger the exploration time steps $\Delta T$ is, the bigger $|V(\pi)-V(\mu_s)|$ is.}
\end{itemize}

\end{proof}

\section{Our Approach: SORL Framework}
\label{app:SORL}
\subsection{Additional Details on SER Policy}
\label{app:SORL_SER}

\subsubsection{Derivations of the SER Policy $\pi_e$}
\label{app:exploration_derivation}
Recall that the functional optimization problem of the SER policy $\pi_e$ is
\begin{align}
	\max_{\pi_e,\forall s}\quad&\mathbb{E}_{a_t\sim\pi_e(\cdot|s_t)}\widehat{Q}(s_t,a_t)\label{app_optimization}\\
	\mathrm{ s.t. } \,\quad&D(\pi_e,\pi_{e,\mathcal{N}})\le \epsilon_e,\tag{\ref{app_optimization}{a}}\label{app_constraint_1}
\end{align}
and the Lagrange function is
\begin{align}
	\mathcal{L}(\pi_e,\lambda)&=-\mathbb{E}_{a_t\sim\pi_e(\cdot|s_t)}\widehat{Q}(s_t,a_t)+\lambda(KL(\pi_e,\pi_{e,\mathcal{N}})-\epsilon_e)\notag\\
	&=\int_{a_t}-\pi_e(a_t|s_t)\widehat{Q}(s_t,a_t)\mathrm{d}a_t+\lambda\int_{a_t}\pi_e\log\frac{\pi_e}{\pi_{e,\mathcal{N}}}\mathrm{d}a_t-\lambda\epsilon_e\notag\\
	&=\int_{a_t}\bigg[-\pi_e(a_t|s_t)\widehat{Q}(s_t,a_t)+\lambda\pi_e\log\frac{\pi_e}{\pi_{e,\mathcal{N}}}\bigg]\mathrm{d}a_t-\lambda\epsilon_e\notag\\
	&=\int_{a_t}F[\pi_e(a_t)]\mathrm{d}a_t-\lambda\epsilon_e,
\end{align}
where $F[\pi_e(a_t)]=-\pi_e(a_t|s_t)\widehat{Q}(s_t,a_t)+\lambda\pi_e\log\frac{\pi_e}{\pi_{e,\mathcal{N}}}$. According to Euler equation, a necessary condition of the optimal solution to \eqref{app_optimization} satisfies:
\begin{align}
	\delta \mathcal{L}(\pi_e,\lambda)=\int_{a_t}\frac{\partial F[\pi_e(a_t)]}{\partial \pi_e}\delta\pi_e\mathrm{d}a_t=0,\quad\lambda\ge 0.
\end{align}  
Due to the arbitrariness of $\delta\pi_e$, we have $\frac{\partial F[\pi_e(a_t)]}{\partial \pi_e}=0$, i.e.,
\begin{align}
	-\widehat{Q}(s_t,a_t)+\lambda\log\frac{\pi_e}{\pi_{e,\mathcal{N}}}+\lambda{\pi_e}\frac{\pi_{e,\mathcal{N}}}{\pi_e}\frac{1}{\pi_{e,\mathcal{N}}}=0\notag\\
	\Rightarrow\quad\quad-\widehat{Q}(s_t,a_t)+\lambda\log\frac{\pi_e}{\pi_{e,\mathcal{N}}}+\lambda=0\notag\\
	\Rightarrow\quad\quad\quad\,\;\,\,\,\exp\{\frac{\widehat{Q}(s_t,a_t)}{\lambda}-1\}=\frac{\pi_e}{\pi_{e,\mathcal{N}}}\notag\\
	\Rightarrow\quad\qquad\,\;\; \pi_e=\frac{\pi_{e,\mathcal{N}}}{e}\exp\bigg\{\frac{1}{\lambda}\widehat{Q}(s_t,a_t)\bigg\}.&
\end{align}
To ensure that $\pi_e$ is a distribution over actions, we modify it to
\begin{align}	\pi_e=\frac{1}{C(s_t)}\pi_{e,\mathcal{N}}\exp\bigg\{\frac{1}{\lambda}\widehat{Q}(s_t,a_t)\bigg\},
\end{align}
where $C(s_t)=\int_{a_t}\frac{1}{\sqrt{2\pi\sigma^2}}\exp\{-\frac{(a_t-\mu_s(s_t))^2}{2\sigma^2}+\frac{1}{\lambda}\widehat{Q}(s_t,a_t)\}\mathrm{d}a_t$ acts as the normalization factor. Note that the KL divergence $KL(\pi_e,\pi_{e,\mathcal{N}})$ we used in the derivations is formulated as $\int_{a_t}\pi_e\log\frac{\pi_e}{\pi_{e,\mathcal{N}}}\mathrm{d}a_t$ rather than $\int_{a_t}\pi_{e,\mathcal{N}}\log\frac{\pi_{e,\mathcal{N}}}{\pi_e}\mathrm{d}a_t$. The reason is that: the exploration policy will be calculated as $\pi_e=\pi_{e,\mathcal{N}}\frac{\widehat{Q}(s_t,a_t)}{\lambda}$ if we use the latter KL divergence, which cannot guarantee the non-negative property of $\pi_e$. We also note that the form of SER policy $\pi_e$ here resembles the results derived in \cite{dir_1, dir_2, AWAC}. Nonetheless, we view the problem as a functional optimization problem and utilize the Euler equation to obtain the results.

\subsubsection{Practical Implementations of $\pi_e$}
\label{app:practical_implementations}
As the Q function $\widehat{Q}$ is a neural network, we cannot directly sample actions from $\pi_e$. Nonetheless, we can obtain the value of $\pi_e$ of each action $a_t$ given a state $s_t$. Hence, we uniformly sample $M\in\mathbb{N}_+$ actions $\{a_t^m\}_{m=1}^M$ within the safety zone $[\mu_s(s_t)-\xi,\mu_s(s_t)+\xi]$, and the possibility of selecting action $a_t^m$ is calculated as $\pi_e(a_t^m|s_t)/\sum_{m=1}^{M}\pi_e(a_t^m|s_t)$. Then we sample the actions from $\{a_t^m\}_{m=1}^M$ for explorations.
\subsubsection{More on the Safety Requirement}
\label{app:safety_explain}
In other realms, such as robotics, it is possible to construct 
an \emph{immediate-evaluated} safety function to evaluate the safety of current state-action pairs, which is merely related to the values of them. For example, a robot can instantly be in danger due to an action at a state, for example, dashing against the wall. However, it is not appropriate to construct such kind of safety functions in auto-bidding.
Generally, there are usually two main kinds of  dangerous situations in auto-bidding:
\begin{itemize}
	\item the first situation is the extremely quick burns of budgets with high cost-per-action (CPA) values, which is probably caused by continuously bidding at very high prices;
	\item the second situation is the extremely slow consumptions of budgets, which is probably caused by continuously bidding at very low prices. 
\end{itemize}
Both of these two dangerous situations cannot be attributed to a specific state-action pair, but to a long-term auto-bidding policy.  In fact, any action (bids) in any state would be safe as long as the total subsequent rewards maintains at a high level. For example, bidding an oddly high price in time step $t$, but bidding at reasonable prices afterwards and the overall reward at the end of the episode is at a high level would be acceptable. On the contrary, bidding reasonably at present moment but continuously bidding at oddly high prices afterwards, resulting in a low accumulated reward at the end of the episode, would harm the interests of advertisers and not be safe in auto-bidding.

	\subsection{Additional Details on V-CQL}
	\label{app:V-CQL}
	\subsubsection{Nearly Quadratic Form of Q Functions}
	\label{app:V-CQL_1}
	Fig. \ref{fig:optimal_Q} shows the optimal Q functions in the simulated experiments, and Fig. \ref{fig:USCB_Q} shows the Q function of the state-of-the-art Q functions. 
	We can see that the Q functions are all in quadratic forms. Hence, based on the proved Lipschitz smooth property of Q functions, we can reasonably assume that the optimal Q function is nearly quadratic. 
	
	\begin{figure}[h]
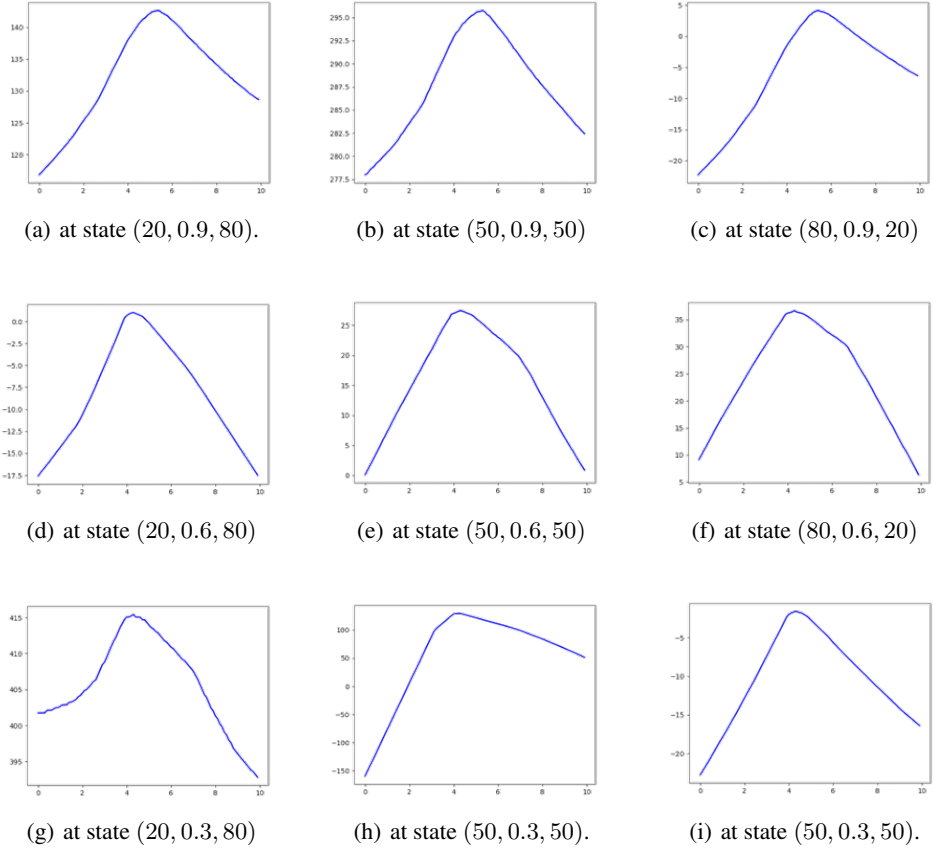

		\centering
		\subfigure[at state $(20, 0.9, 80)$.]
		{\label{fig:1}
			\includegraphics[scale=0.82]{1.png}}
		\subfigure[at state $(50, 0.9, 50)$]
		{
			\label{fig:2}
			\includegraphics[scale=0.82]{2.png}
		}
		\subfigure[at state $(80, 0.9, 20)$]
		{
			\label{fig:3}
			\includegraphics[scale=0.82]{3.png}
		}
		
		\subfigure[at state $(20, 0.6, 80)$]
		{\label{fig:4}
			\includegraphics[scale=0.82]{4.png}}
		\subfigure[at state $(50, 0.6, 50)$]
		{
			\label{fig:5}
			\includegraphics[scale=0.82]{5.png}
		}
		\subfigure[at state $(80, 0.6, 20)$]
		{
			\label{fig:6}
			\includegraphics[scale=0.82]{6.png}
		}
	
		\subfigure[at state $(20, 0.3, 80)$]
	{\label{fig:7}
		\includegraphics[scale=0.82]{7.png}}
	\subfigure[at state $(50, 0.3, 50)$.]
	{
		\label{fig:8}
		\includegraphics[scale=0.82]{8.png}
	}
	\subfigure[at state $(50, 0.3, 50)$.]
	{
		\label{fig:9}
		\includegraphics[scale=0.82]{9.png}
	}

		\caption{The form of optimal Q functions in the simulated experiments are all nearly in quadratic form. In this example, the total budget is 100, and we choose time left to be $0.9, 0.6, 0.3$, respectively. }
		\label{fig:optimal_Q}
	\end{figure}

	\begin{figure}[h]
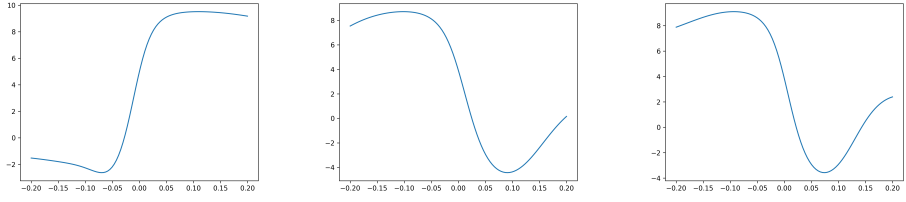
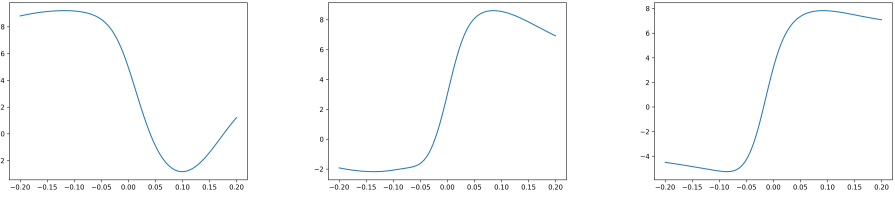
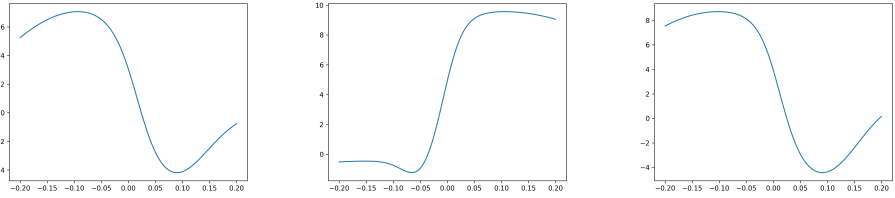

	\centering
	\subfigure[at state $(0.872, 0.9653, -1)$.]
	{\label{fig:1_}
		\includegraphics[scale=0.25]{1_.png}}
	\subfigure[at state $(0.5579, 0.9653, 0)$]
	{
		\label{fig:2_}
		\includegraphics[scale=0.25]{2_.png}
	}
	\subfigure[at state $(0.1423, 0.9653, 1)$]
	{
		\label{fig:3_}
		\includegraphics[scale=0.25]{3_.png}
	}
	
	\subfigure[at state $(0.872, 0.6458, 1)$]
	{\label{fig:4_}
		\includegraphics[scale=0.25]{4_.png}}
	\subfigure[at state $(0.5579, 0.6458, 0)$]
	{
		\label{fig:5_}
		\includegraphics[scale=0.25]{5_.png}
	}
	\subfigure[at state $(0.1423, 0.6458, -1)$]
	{
		\label{fig:6_}
		\includegraphics[scale=0.25]{6_.png}
	}
	
	\subfigure[at state $(0.872, 0.3156, 1)$]
	{\label{fig:7_}
		\includegraphics[scale=0.25]{7_.png}}
	\subfigure[at state $(0.5579, 0.3156, 0)$.]
	{
		\label{fig:8_}
		\includegraphics[scale=0.25]{8_.png}
	}
	\subfigure[at state $(0.1423, 0.3156, -1)$.]
	{
		\label{fig:9_}
		\includegraphics[scale=0.25]{0_.png}
	}
	
	\caption{The form of Q functions of the auto-bidding policy trained by USCB based on real-world dataset are all nearly in quadratic form. Note that the state have already been normalized, and we choose time left to be $0.9653, 0.6458, 0.3156$, respectively. }
	\label{fig:USCB_Q}
\end{figure}

	\subsubsection{Complete V-CQL Method}
	\label{app:V-CQL_2}
	In this subsection, we specify the novelties of the proposed V-CQL algorithm and analyze its advantages and relations to previous offline RL methods.
	
	\textbf{CQL and its variants.}
	Recall that general CQL \cite{CQL} algorithm (i.e., CQL($\mathcal{R}$)) can be expressed as 
	\begin{align}
		\label{equ:CQL}
		\min_Q\;&-\alpha\underbrace{\mathbb{E}_{s_k\sim\mathcal{D},a_k\sim\widehat{\pi}_\beta}[\widehat{Q}(s_k,a_k)]}_\text{Reward the in-distribution actions}+
		\frac{1}{2}\underbrace{\mathbb{E}_{s_k,a_k,s'_k\sim\mathcal{D}}\bigg[\bigg(\widehat{Q}(s_k,a_k)-\widehat{\bar{B}}\bar{Q}(s_k,a_k)\bigg)^2\bigg]}_\text{\textbf{Bellman error:} minimizing TD error}+\notag\\
		&\underbrace{\max_\mu\;\alpha\mathbb{E}_{s_k\sim\mathcal{D},a\sim\mu}[\widehat{Q}(s_k,a)]+\underbrace{\mathcal{R}(\mu)}_\text{regularizer}}_\text{choose $\mu$ to maximize the current Q-function},
	\end{align}
where $\mathcal{D}$ denotes the offline dataset, $\widehat{\pi}_\beta\triangleq\frac{\sum_{s_k,a_k\sim\mathcal{D}}\mathbf{1}[s=s_k,a=a_k]}{\sum_{s_k\sim\mathcal{D}}\mathbf{1}[s=s_k]}$ is the estimated behavior policy based on $\mathcal{D}$, and $\bar{\mathcal{B}}$ represents the Bellman operator. 
There are two popular variants of CQL, including CQL($\mathcal{H}$) and CQL($\rho$). They both implement the regularizer as a KL-divergence between $\mu$ and a prior distribution $\rho$. Specifically, 
\begin{itemize}
	\item CQL($\mathcal{H}$) chooses $\rho$ to be a uniform policy, i.e., $\mathcal{R}(\mu)=-D_\text{KL}(\mu,\text{Unif}(a))$. Hence, it turns the third term in \eqref{equ:CQL} into a \emph{conservative penalty}.
	\item CQL($\mathcal{\rho}$) chooses $\rho$ to be the previous policy $\widehat{\pi}^{k-1}$, i.e., $\mathcal{R}(\mu)=-D_\text{KL}(\mu,\widehat{\pi}^{k-1})$. Hence, it turns the third term in  \eqref{equ:CQL} into both a \emph{policy constraint} and a \emph{conservative penalty}.
\end{itemize}

\textbf{V-CQL.}
The novelties of the proposed V-CQL algorithm are in three-fold. Firstly, 
as we know the exact formulations of behavior policies generating the data in the offline dataset $\mathcal{D}$ (i.e., the data in $\mathcal{D}_s$ is generated by $\mu_s$, and the data in $\mathcal{D}_{on,\tau}$ is generated by $\pi_{e,\tau}$), we can substitute the $\widehat{\pi}_\beta$ in \eqref{equ:CQL} directly by the behavior policies. This cuts down the estimations process of behavior policy $\widehat{\pi}_\beta$.
For convenience, we uniformly denote the behavior policies as $\mu_b$, where 
\begin{align}
	\mu_b=\left\{\begin{aligned}
	&\mu_s, \;\,\text{for data in $\mathcal{D}_s$},\\
	&\pi_{e,\tau},  \text{for data in $\mathcal{D}_{on,\tau}$}.
	\end{aligned}\right.
\end{align}
Secondly, we adapt the policy constraint in  CQL($\rho$) to a constraint on the Q function. Specifically, we devise the regularizer $\mathcal{R}(\mu)$ as \eqref{equ:V-CQL} in the manuscript. Note that, as $\mathcal{R}(\mu)$ is not a function of $\mu$, the maximizing operation in the third term of \eqref{equ:CQL} is not needed. This way of policy constraint can reduce the performance variance compared to CQL($\mathcal{H}$), and has more flexibilities than the policy constraint in CQL($\rho$) as well as other form of policy constraints direct on policies (such as BCQ). Thirdly, the policy $\rho$ in CQL($\rho$) utilizes the policy in the previous training iterations. The $\widehat{Q}_\text{old}$ in $\mathcal{R}(\mu)$ of the V-CQL also leverages a previous Q function. Nonetheless, $\widehat{Q}_\text{old}$ does not change during the whole training process at iteration $\tau$ and keeps $\widehat{Q}_{\tau-1}$ until the next iteration. Fig. \ref{fig:V-CQL} shows the difference between the V-CQL and CQL($\rho$).
Besides, we adopt the conservative penalty in the CQL($\mathcal{H}$) in the V-CQL method.
Overall,
the V-CQL algorithm can be expressed as
\begin{align}
	\min_Q\quad &\alpha_1\underbrace{\mathbb{E}_{s_k\sim\mathcal{D}}\bigg[\log\sum_{a\sim\text{Unif}(\mathcal{A})}\exp(\widehat{Q}(s_k,a))\bigg]}_{\text{\textbf{conservative penalty:} punishing all actions }}-\alpha_2\underbrace{\mathbb{E}_{s_k\sim\mathcal{D}}\bigg[\widehat{Q}(s_k,\mu_b(s_k))\bigg]}_\text{Reward the in-distribution actions}\notag\\
	&+\frac{1}{2}\underbrace{\mathbb{E}_{s_k,a_k,s'_k\sim\mathcal{D}}\bigg[\bigg(\widehat{Q}(s_k,a_k)-\bar{\mathcal{B}}\bar{Q}(s_k,a_k)\bigg)^2\bigg]}_{\text{\textbf{Bellman error:} minimizing TD error}}\notag\\
	&+\beta\underbrace{\mathbb{E}_{s_k\sim\mathcal{D}}\bigg[D_\text{KL}\bigg(\frac{\exp(\widehat{Q}(s_k,\cdot))}{\sum_{a\sim\text{Unif}(\mathcal{A})}\exp(\widehat{Q}(s_k,a))}\;,\;\frac{\exp(\widehat{Q}_\text{qua}(s_k,\cdot))}{\sum_{a\sim\text{Unif}(\mathcal{A})}\exp(\widehat{Q}_\text{qua}(s_k,a))}\bigg)\bigg]}_{\text{\textbf{policy constraint:} constraining the distribution shifts of the Q function}},
\end{align}
where $\alpha_1, \alpha_2,\beta>0$ are constants, $\bar{\mathcal{B}}$ denotes the Bellman operator, and $\bar{Q}$ is the target Q function. Note that we also randomly sample the actions from the whole action space to calculate the KL-divergence between the old and new Q functions. At iteration $\tau$, the Q function $\widehat{Q}\leftarrow\widehat{Q}_{\tau}$ is trained based on $\widehat{Q}_\text{qua}\leftarrow\widehat{Q}_{\tau-1}$. In practice, the V-CQL can be applied to either Q learning RL algorithms with implicit policies, such as DQN, or actor-critic RL algorithms with explicit policies, such as DDPG. In the SORL, we leverage the DDPG method to train explicit auto-bidding policies, where the Q functions are trained by the V-CQL.

	\begin{figure}[h]
	\centering
	\includegraphics[width=13.5cm]{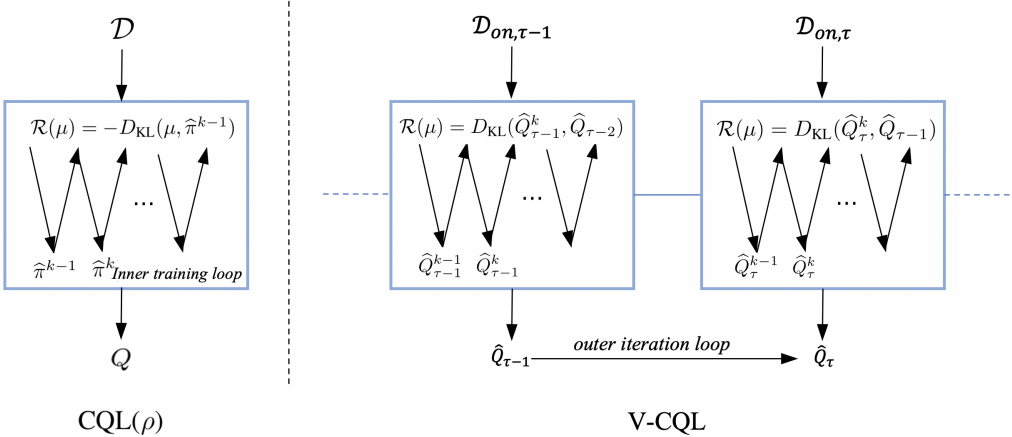}
	\caption{The differences between the V-CQL and CQL($\rho$).}
	\label{fig:V-CQL}
\end{figure}

\textbf{V-CQL combines the advantages of conservative penalty and policy constraint.}
As stated in the Appendix \ref{app:related_work}, there are two main ways to mitigate the extrapolation errors in offline RL methods, including the conservative penalty where explicit punishments are imposed to Q functions (a typical method is the CQL($\mathcal{H}$) \cite{CQL}), and the policy constraints where the KL-divergence between the trained policy and the original policy is limited within a certain range (BCQ \cite{BCQ} imposes constraint directly on the policy, while CQL($\rho$) imposes the constraint on the Q function). The first way has the potential to train policies with high performance, but can have high performance variance. The second way can have low performance variance since it directly imposes the constraints on the policies. However, it generally cannot achieve the  performance as good as the first way \cite{BCQ_CQL}. Besides, imposing constraints directly on policies (as BCQ does) in the SORL framework will face many challenges. For example, the behavior policy of the offline data $\mathcal{D}$ is mixed policy since $\mathcal{D}$ is composed of data collected by different policies. One needs to train a new perturbation model $\xi_{\phi}$ at each iteration $\tau$ for each exploration policy $\pi_{e,\tau}$ and cannot utilize the dataset in previous iterations. As the exploration policy $\pi_{e,\tau}$ does not equal to the auto-bidding policy $\mu_{\tau-1}$ in the previous iteration $\tau-1$, the auto-bidding policy cannot be iteratively improved. Nonetheless, 
the proposed V-CQL combines the advantages of both conservative penalty and policy constraint methods: the V-CQL can reduce the auto-bidding policy's performance variance and iteratively improves the auto-bidding policy in an elegant way.


	\subsection{SORL Framework Pseudocode}
	\label{app:sorl}
	The overall SORL framework algorithm is presented in Algorithm \ref{algorithm:SORL}. {Specifically, the SORL works in an iterative manner. In each iteration $\tau$, the SORL collects data directly from the RAS with the proposed exploration policy $\pi_{e,\tau}$ and use the V-CQL method to train the auto-bidding policy $\mu_\tau$ with the newly collected data. Note that in the first (i.e., $0$-th) iteration, we need a known policy to start the data collection process, and thus, boot the SORL. As the policy will directly interact with the RAS, it should be safe. Hence, we leverage the state-of-the-art auto-bidding policies, for example, USCB \cite{RL:USCB} that has already been deployed to the RAS in practice, to make a warm booting. As proved in the manuscript, the subsequent exploration policies $\pi_{e,\tau}$ is guaranteed to be safe. Due to the constantly feedback from the collected data, the auto-bidding policy $\mu_\tau$ will be improved. }
	
		\begin{algorithm}[h]
		\normalsize\caption{{SORL Framework} }
		\label{algorithm:SORL}
		{\bf Inputs:} The initial safety policy $\mu_s$.\\
		{\bf Outputs:} The auto-bidding policy $\mu^*$ and its Q function $Q^*$.\\
		{\bf Warm Booting:} Collect data $\mathcal{D}_s$ from the RAS with the safe policy $\mu_s$, and train the auto-bidding policy $\mu_{0}$ and $\widehat{Q}_0$. Let $\tau\leftarrow 1$.\\
		{\bf Iteration Process:}
		\begin{algorithmic}[1]
			\normalsize 
			\While{$\widehat{Q}_\tau$ not convergence}
			\State Construct the SER policy $\pi_{e,\tau}$ based on $\mu_s$ and $\widehat{Q}_\tau(s_t,a_t)$;
			\State Explore in the RAS with $\pi_{e,\tau}$ and collect the dataset $\mathcal{D}_{on, \tau}$.
			\State Train the new auto-bidding policy $\mu_\tau$ and its Q function $\widehat{Q}_\tau$ with V-CQL based on the collected data.
			\State $\tau\leftarrow\tau + 1$.
			\EndWhile
			\State Let $Q^*\leftarrow\widehat{Q}_\tau$, and $\mu^*\leftarrow\mu_\tau$.
		\end{algorithmic}
	\end{algorithm}

	\section{Experimental Results}
		\label{app:experiments}
		In this section, we present additional information on the experiment parameters and OPE method in auto-bidding. In addition, we conduct extra experiments to validate the effectiveness of our approach.
		\subsection{Experiment Setup}
		\label{app:experiment_params}
		
		\textbf{Simulated Advertising System.} We conduct experiments on the s-RAS mentioned in Appendix \ref{app:concept_as}.
		The parameters of the simulated advertising system are shown in Table. \ref{table:config_simulated_advertising_system}.
		We implement the V-CQL method by the actor-critic framework.
		The hyper-parameters used in the RL training are summarized in Table. \ref{table:config_sim_vcql_ser}.
		
			\begin{table}[h]
			\caption{The hyper-parameters of DDPG in experiments on the s-RAS.}
			\centering
			\begin{tabular}{ll}
				\toprule
				\makecell[l]{Hyper-parameters}&\makecell[c]{Values}  \\   
				\midrule
				\makecell[l]{Optimizer}    &\makecell[c]{Adam}\\
				\makecell[l]{Learning rate for critic network}    &\makecell[c]{$1\times 10^{-4}$}\\
				\makecell[l]{Learning rate for actor network}    &\makecell[c]{$1\times 10^{-4}$}\\
				\makecell[l]{Soft updated rate }    &\makecell[c]{$0.01$}\\
				\makecell[l]{Buffer size }    &\makecell[c]{$1000$}\\
				\makecell[l]{Sampling size }    &\makecell[c]{$200$}\\
				\makecell[l]{Discounted factor $\gamma$ }    &\makecell[c]{$0.99$}\\
				\makecell[l]{$\alpha_1,\alpha_2$ for V-CQL}    &\makecell[c]{$0.002$}\\
			\makecell[l]{$\beta$ for V-CQL}    &\makecell[c]{$0.001$}\\
				\makecell[l]{$\sigma$ in SER policy $\pi_\tau$}    &\makecell[c]{$1$}\\
					\makecell[l]{$\lambda$ in SER policy $\pi_\tau$}    &\makecell[c]{$0.1$}\\
						\makecell[l]{sample numbers of SER policy $\pi_\tau$}    &\makecell[c]{$1000$}\\
					\makecell[l]{$\xi$ sample range of SER policy $\pi_\tau$}    &\makecell[c]{$0.5$}\\
				\bottomrule
			\end{tabular}
			\label{table:config_sim_vcql_ser}
		\end{table}
	
		\textbf{Real-world Advertising System (RAS).} We conduct experiments on one the world's largest E-commerce platforms, TaoBao. We apply the SORL framework to thousands of real advertisers from April 28, 2022 to May 26, 2022 to validate the effectiveness of it. We implement the V-CQL method by the actor-critic framework, whose hyper-parameters are summarized in Table. \ref{table:config_RAS_DDPG}.
		
			\begin{table}[h]
			\caption{The parameters used in the s-RAS.}
			\centering
			\begin{tabular}{ll}
				\toprule
				\makecell[l]{Parameters}&\makecell[c]{Values}  \\   
				\midrule
				\makecell[l]{Number of advertisers}    &\makecell[c]{$100$}\\
				\makecell[l]{Time steps in an episode, $T$} &\makecell[c]{$96$} \\
				\makecell[l]{Minimum number of impression opportunities $N_t$} &\makecell[c]{$100$} \\
				\makecell[l]{Maximum number of impression opportunities $N_t$} &\makecell[c]{$500$} \\
				\makecell[l]{Minimum budget} &\makecell[c]{$100,000$ Yuan} \\
				\makecell[l]{Maximum budget} &\makecell[c]{$200,000$ Yuan} \\
				\makecell[l]{Value of impression opportunities in stage 1, $v_{j,t}^1$} &\makecell[c]{$0\sim 1$} \\			
				\makecell[l]{Value of impression opportunities in stage 2, $v_{j,t}^2$} &\makecell[c]{$0\sim 1$} \\			
				\makecell[l]{Minimum bidding price, $A_\text{min}$} &\makecell[c]{$0$ Yuan} \\
				\makecell[l]{Maximum bidding price, $A_\text{max}$} &\makecell[c]{$1,000$ Yuan} \\					
				\makecell[l]{Maximum value of impression opportunity, $v_M$} &\makecell[c]{$1$} \\
				\makecell[l]{Maximum market price, $p_M$} &\makecell[c]{$1,000$ Yuan} \\					
				\bottomrule
			\end{tabular}
			\label{table:config_simulated_advertising_system}
		\end{table}

			\begin{table}[h]
			\caption{The hyper-parameters of DDPG when applying the SORL to the RAS.}
			\centering
			\begin{tabular}{ll}
				\toprule
				\makecell[l]{Hyper-parameters}&\makecell[c]{Values}  \\   
				\midrule
				\makecell[l]{Optimizer}    &\makecell[c]{Adam}\\
				\makecell[l]{Learning rate for critic network}    &\makecell[c]{$2\times 10^{-5}$}\\
				\makecell[l]{Learning rate for actor network}    &\makecell[c]{$2\times 10^{-5}$}\\
				\makecell[l]{Soft updated rate }    &\makecell[c]{$0.01$}\\
				\makecell[l]{Buffer size }    &\makecell[c]{ $\sim 10000$}\\
				\makecell[l]{Sampling size }    &\makecell[c]{$64$}\\
				\makecell[l]{Discounted factor $\gamma$ }    &\makecell[c]{$0.999$}\\
				\makecell[l]{$\alpha_1,\alpha_2$ for V-CQL}    &\makecell[c]{$0.001$}\\
					\makecell[l]{$\beta$ for V-CQL}    &\makecell[c]{$0.002$}\\
						\makecell[l]{$\sigma$ in SER policy $\pi_\tau$}    &\makecell[c]{$0.15$}\\
					\makecell[l]{$\lambda$ in SER policy $\pi_\tau$}    &\makecell[c]{$0.3$}\\
					\makecell[l]{sample numbers of SER policy $\pi_\tau$}    &\makecell[c]{$50$}\\
					\makecell[l]{$\xi$ sample range of SER policy $\pi_\tau$}    &\makecell[c]{$0.1$}\\
					
				\bottomrule
			\end{tabular}
			\label{table:config_RAS_DDPG}
		\end{table}

		\subsection{Additional Results}
		\label{app:additional_results}
		\subsubsection{Ablation Study: Compare the V-CQL with BCQ, CQL($\mathcal{H}$) and CQL($\rho$)}
		\label{app:q1}
		In the manuscript, we compare the V-CQL with the CQL method, specifically CQL($\mathcal{H}$). Here, we compare the V-CQL with more offline RL methods in the RAS, which can serve as an ablation study.
		As shown in Table. \ref{table:ablation}, the V-CQL outperforms the BCQ, CQL($\mathcal{H}$) and CQL($\rho$) in all metrics in the RAS. Firstly, the comparison with BCQ and CQL($\mathcal{H}$) indicates that the V-CQL combines the advantages of conservative penalty and policy constraint. Secondly, the comparison with the CQL($\rho$) validates the effectiveness of the proposed form of policy constraint \eqref{equ:V-CQL} in the manuscript. 
		
			\begin{table}[h]
			\caption{Ablation study in the RAS: compare V-CQL with BCQ (policy constraint), CQL($\mathcal{H}$) (conservative penalty) and CQL($\rho$) (both policy constraint and conservative penalty).}
			\centering
			\begin{tabular}{lllll}
				\toprule
				\multirow{2}*{Methods}&\multicolumn{4}{c}{no conservative penalty: V-CQL vs. BCQ}     \\   
				\cmidrule(r){2-5}
				& BuyCnt &ROI &CPA&ConBdg       \\
				\midrule
				BCQ &8,746&3.78&23.85&208,576.53\\
				V-CQL&8,992&3.94&23.48&211,142.59\\
				variety&\textbf{+2.81\%}&\textbf{+4.23\%}&\textbf{-1.54\%}&\textbf{+1.23\%}\\
				\midrule
				\multirow{2}*{Methods}&\multicolumn{4}{c}{no policy constraint: V-CQL vs. CQL($\mathcal{H}$)}     \\   
				\cmidrule(r){2-5}
				&BuyCnt   &ROI&CPA&ConBdg       \\
				\midrule
			CQL($\mathcal{H}$)    &40,462    &3.87 &21.42 &845,621.15 \\
				V-CQL    &{42,236} &{3.95} &{20.29}&{856,913.14}\\
				variety&\textbf{+4.38\%}&\textbf{+2.07\%}&\textbf{-5.27\%}&\textbf{+1.33\%}\\
				\midrule
					\multirow{2}*{Methods}&\multicolumn{4}{c}{with different versions of policy constraint : V-CQL vs. CQL($\rho$)}     \\   
				\cmidrule(r){2-5}
				& BuyCnt &ROI &CPA&ConBdg       \\
				\midrule
				CQL($\rho$)    &9,523   & 4.04 &20.99 &199,873.22 \\
				V-CQL    &{9,867} &{4.20} &{20.30}&{200,291.00}\\
				variety&\textbf{+3.61\%}&\textbf{+3.96\%}&\textbf{-3.28\%}&\textbf{+0.21\%}\\

				\bottomrule
			\end{tabular}
			\label{table:ablation}
		\end{table}
		
		\subsubsection{Affects of Hyper-parameters $\sigma$ and $\lambda$ on SER Policy $\pi_\tau$}
		\label{app:q2}
		We apply the SER policy $\pi_e$ to the s-RAS with different hyper-parameters $\sigma$ and $\lambda$, and the total accumulated rewards (Q value)  are shown in Fig. \ref{fig:sigma_lambda}. Specifically, Fig. \ref{fig:phi_1} shows the accumulated rewards of $\pi_e$ constructed by the Q function $\widehat{Q}_{(1)}(s_t,a_t)$ that has poor performance (with ROI of $3.39$), while Fig. \ref{fig:phi_7} shows the accumulated rewards of $\pi_e$ constructed by the Q function $\widehat{Q}_{(7)}(s_t,a_t)$ that has good performance (with ROI of $3.82$). The initial safe policy $\mu_s$ has a total accumulated reward of $212.36$ and a ROI of $3.64$. We can see that the declines in Q values of $\pi_e$ under all hyper-parameters are within $5\%$ with respect to the Q value of $\mu_s$. This indicates the safety of the SER policy. Moreover, from Fig. \ref{fig:phi_1}, we can see that when the Q function of $\pi_e$ has poor performance, larger $\lambda$ and smaller $\sigma$ can make the SER policy $\pi_e$ safer. This is because $\widehat{Q}_{(1)}(s_t,a_t)$ does not lead the explorations to a good direction, and stick to the safe policy $\mu_s$ can be a safer choice. On the contrary, from Fig. \ref{fig:phi_7}, we can see that when the Q function has good performance, smaller $\lambda$ can make $\pi_e$ more safer, and $\sigma$ can be set to a larger value to increase the exploration efficiency.
		
		\begin{figure}[h]
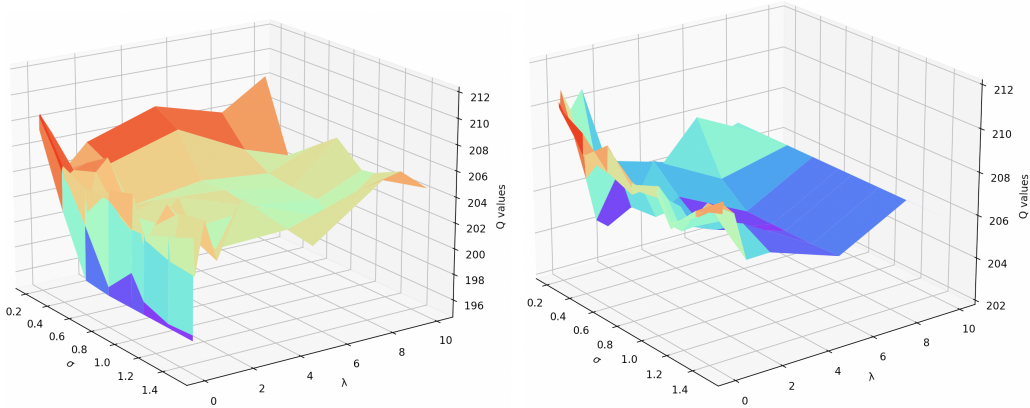

			\centering
			\subfigure[Q value of $\pi_e$ with Q function $\widehat{Q}_{(1)}(s,a)$ that has poor performance.]
			{\label{fig:phi_1}
				\includegraphics[scale=0.14]{sigma_lambda_1.png}}
			\subfigure[Q value of $\pi_e$ with Q function  $\widehat{Q}_{(7)}(s,a)$ that has good performance.]
			{
				\label{fig:phi_7}
				\includegraphics[scale=0.14]{sigma_lambda_2.png}
			}
		
			\caption{The accumulated rewards of the SER policy $\pi_e$ constructed by Q functions with different performance level under different hyper-parameters $\sigma$ and $\lambda$.}
			\label{fig:sigma_lambda}
		\end{figure}
		
			\subsubsection{Complete Results of Table. \ref{table:SORL} in the Manuscript}
		\label{app:q3}
		The complete experiment results of Table. \ref{table:SORL} in the manuscript are shown in Table \ref{table:SORL_complete}. 
		
		\subsubsection{Compare With Multi-agent Auto-bidding Algorithms}
		\label{app:q4}
		Some researchers may consider the comparison between our approach and the multi-agent auto-bidding algorithms. However, we claim that
		the problem setting of this paper is different from that of the multi-agent algorithms \cite{multi-agent}.
		As we stated in Section \ref{sec:problem_setting}, the problem we considered is how to maximize the total value of a single advertiser, which is naturally a single-agent problem. What we argue in Fig. \ref{fig:IBOO} is that all other advertisers acting as a part of the environment are not correctly modeled in the VAS, which will cause the IBOO. In addition, our method can solve the IBOO, including the inaccurate market price changing issue. However, the multi-agent auto-bidding problem \cite{multi-agent}  studies how to realize multi-objective goals, involving the interests of advertisers and the platform. Hence, it is not very proper to compare our algorithm with the multi-agent algorithms \cite{multi-agent} that solves a different problem.
		
		Nonetheless, we conduct real-world A/B test between our approach and the multi-agent algorithm in \cite{multi-agent}, and the results are shown in Table. \ref{table:multi-agent}. We can see that the SORL largely outperforms than the multi-agent algorithm in the performance indexes considered in the single-agent auto-bidding problem in our paper.
			\begin{table}[h]
			\caption{Real-world A/B tests between the SORL and the multi-agent auto-bidding algorithm.}
			\centering
			\small
			\begin{tabular}{lllll}
				\toprule
			Algorithms &BuyCnt & ROI & CPA & ConBdg\\
				\midrule
				multi-agent method \cite{multi-agent} & $121,616$ & $2.79$ & $44.86$ &$5,455,507.35$ \\
			
				SORL    &$\textbf{139,599}$& $\textbf{3.15}$ &$\textbf{40.05}$&$\textbf{5,590,858.50}$  \\
					\midrule 
				varieties &$\textbf{+14.79\%}
				 $&$\textbf{+12.90\%}$&$\textbf{-10.72\%}$&$\textbf{+2.48\%}$\\
				\bottomrule
			\end{tabular}
			\label{table:multi-agent}
		\end{table}
	
		\begin{table}[h]
			\caption{The complete experiment results of SORL framework in the {RAS}.}
			\centering
			\small
			\begin{tabular}{llllll}
				\toprule
			\multirow{2}{*}{\makecell[c]{Metrics}}&\	\multirow{2}{*}{\makecell[c]{A/B Tests}}&\multicolumn{4}{c}{Auto-bidding policy $\mu_\tau$ derived in iteration $\tau$}\\
				\cmidrule(r){3-6}
				&&\makecell[c]{0-th: $\mu_{0}$}&\makecell[c]{1-th: $\mu_{1}$}&\makecell[c]{2-th: $\mu_{2}$}&\makecell[c]{3-th: $\mu_{3}$}\\
				\midrule
				\multirow{6}{*}{BuyCnt}&\multirow{3}{*}{auto-bidding policy $\mu_{\tau-1}$}&\makecell[c]{40,926.00}&\makecell[c]{7,982.00}&\makecell[c]{8,571.00}&\makecell[c]{9,207.00} \\
			\cmidrule(r){3-6}
		&&\makecell[c]{42,236.00}&\makecell[c]{8,034.00}&\makecell[c]{8,611.00}&\makecell[c]{9,295.00} \\
		\cmidrule(r){3-6}
		&&\makecell[c]{\textbf{+3.20\%}}&\makecell[c]{\textbf{+0.65\%}}&\makecell[c]{\textbf{+0.47\%}}&\makecell[c]{\textbf{+0.95\%}} \\
		\cmidrule(r){2-6}
		&\multirow{3}{*}{\makecell[c]{USCB\\ (state-of-the-art)}}&\makecell[c]{40,926.00}&\makecell[c]{6,358.00}&\makecell[c]{7,432.00}&\makecell[c]{9,358.00} \\
			\cmidrule(r){3-6}
		&&\makecell[c]{42,236.00}&\makecell[c]{6,575.00}&\makecell[c]{7,697.00}&\makecell[c]{9,709.00} \\
		\cmidrule(r){3-6}
		&&\makecell[c]{\textbf{+3.20\%}}&\makecell[c]{\textbf{+3.41\%}}&\makecell[c]{\textbf{+3.57\%}}&\makecell[c]{\textbf{+3.75\%}} \\
		\midrule
		
			\multirow{6}{*}{ROI}&\multirow{3}{*}{auto-bidding policy $\mu_{\tau-1}$}&\makecell[c]{3.90}&\makecell[c]{3.58}&\makecell[c]{3.88}&\makecell[c]{3.20} \\
		\cmidrule(r){3-6}
		&&\makecell[c]{3.95}&\makecell[c]{3.65}&\makecell[c]{3.89}&\makecell[c]{3.31} \\
		\cmidrule(r){3-6}
		&&\makecell[c]{\textbf{+1.28\%}}&\makecell[c]{\textbf{+1.96\%}}&\makecell[c]{\textbf{+0.26\%}}&\makecell[c]{\textbf{+3.20\%}} \\
		\cmidrule(r){2-6}
		&\multirow{3}{*}{\makecell[c]{USCB\\ (state-of-the-art)}}&\makecell[c]{3.90}&\makecell[c]{3.47}&\makecell[c]{3.76}&\makecell[c]{3.47} \\
		\cmidrule(r){3-6}
		&&\makecell[c]{3.95}&\makecell[c]{3.57}&\makecell[c]{3.82}&\makecell[c]{3.55} \\
		\cmidrule(r){3-6}
		&&\makecell[c]{\textbf{+1.28\%}}&\makecell[c]{\textbf{+2.88\%}}&\makecell[c]{\textbf{+1.60\%}}&\makecell[c]{\textbf{+2.48\%}} \\
		\midrule
		
			\multirow{6}{*}{CPA}&\multirow{3}{*}{auto-bidding policy $\mu_{\tau-1}$}&\makecell[c]{20.71}&\makecell[c]{21.32}&\makecell[c]{22.38}&\makecell[c]{23.21} \\
		\cmidrule(r){3-6}
		&&\makecell[c]{20.29}&\makecell[c]{21.05}&\makecell[c]{22.35}&\makecell[c]{23.44} \\
		\cmidrule(r){3-6}
		&&\makecell[c]{\textbf{-2.01\%}}&\makecell[c]{\textbf{-1.27\%}}&\makecell[c]{\textbf{-0.13\%}}&\makecell[c]{\textbf{-1.01\%}} \\
		\cmidrule(r){2-6}
		&\multirow{3}{*}{\makecell[c]{USCB\\ (state-of-the-art)}}&\makecell[c]{20.71}&\makecell[c]{22.52}&\makecell[c]{20.31}&\makecell[c]{24.52} \\
		\cmidrule(r){3-6}
		&&\makecell[c]{20.29}&\makecell[c]{22.31}&\makecell[c]{20.11}&\makecell[c]{23.60} \\
		\cmidrule(r){3-6}
		&&\makecell[c]{\textbf{-2.01\%}}&\makecell[c]{\textbf{-0.93\%}}&\makecell[c]{\textbf{-0.98\%}}&\makecell[c]{\textbf{-3.91\%}} \\
		\midrule

			\multirow{6}{*}{ConBdg}&\multirow{3}{*}{auto-bidding policy $\mu_{\tau-1}$}&\makecell[c]{847,403.12}&\makecell[c]{170,176.24}&\makecell[c]{191,818.98}&\makecell[c]{215,695.14} \\
		\cmidrule(r){3-6}
		&&\makecell[c]{856,913.14}&\makecell[c]{169,115.70}&\makecell[c]{192,455.85}&\makecell[c]{215,828.40} \\
		\cmidrule(r){3-6}
		&&\makecell[c]{\textbf{+1.12\%}}&\makecell[c]{{-0.62\%}}&\makecell[c]{\textbf{+0.33\%}}&\makecell[c]{\textbf{+0.06\%}} \\
		\cmidrule(r){2-6}
		&\multirow{3}{*}{\makecell[c]{USCB\\ (state-of-the-art)}}&\makecell[c]{847,403.12}&\makecell[c]{143,182.16}&\makecell[c]{150,943.92}&\makecell[c]{229,492.24} \\
		\cmidrule(r){3-6}
		&&\makecell[c]{856,913.14}&\makecell[c]{146,688.25}&\makecell[c]{154,786.67}&\makecell[c]{229,141.06} \\
		\cmidrule(r){3-6}
		&&\makecell[c]{\textbf{+1.12\%}}&\makecell[c]{\textbf{+2.45\%}}&\makecell[c]{\textbf{+2.55\%}}&\makecell[c]{{-0.15\%}} \\

				\bottomrule
			\end{tabular}
			\label{table:SORL_complete}
		\end{table}

	\section{Broader Impact}
	\label{app:social_impact}
	In this paper, we propose a SORL framework to improve the state-of-the-art auto-bidding policies with direct explorations in the real-world advertising system (RAS). To the best of our knowledge, we are the first to systematically analyze the IBOO problem in auto-bidding and complete resolve it with an online RL manner. The derived auto-bidding policy can benefit both the advertisers and the companies at the same time, which can generate huge social and economic benefits. We believe that the proposed SORL framework will be the next generation of auto-bidding paradigm.
	In addition, 
	the IBOO problem does not only exists in the auto-bidding. In fact, it resembles the sim2real problem in many other realms, such as robotics. The proposed SORL framework is a general method which can be easily applied to other applications to solve the sim2real problem.

\end{document}